\newif\ifred
\titleformat*{\paragraph}{\bfseries}
\pgfplotsset{compat=1.17}
\definecolor[named]{ACMBlue}{cmyk}{1,0.1,0,0.1}
\definecolor[named]{ACMYellow}{cmyk}{0,0.16,1,0}
\definecolor[named]{ACMOrange}{cmyk}{0,0.42,1,0.01}
\definecolor[named]{ACMRed}{cmyk}{0,0.90,0.86,0}
\definecolor[named]{ACMLightBlue}{cmyk}{0.49,0.01,0,0}
\definecolor[named]{ACMGreen}{cmyk}{0.20,0,1,0.19}
\definecolor[named]{ACMPurple}{cmyk}{0.55,1,0,0.15}
\definecolor[named]{ACMDarkBlue}{cmyk}{1,0.58,0,0.21}
\crefname{sub}{Subsection}{Subsection}
\crefname{sdp}{SDP}{SDP}
\crefname{lp}{LP}{LP}
\crefname{ineq}{Inequality}{Inequality}
\crefname{sub}{Subsection}{Subsection}
\crefname{sdp}{SDP}{SDP}
\crefname{lp}{LP}{LP}
\newtheorem{theorem}{Theorem}[section]
\newtheorem{lemma}{Lemma}
\newtheorem{informal theorem}[theorem]{Theorem (informal statement)}
\newtheorem{corollary}[theorem]{Corollary}
 \newtheorem{claim}[theorem]{Claim}
\newtheorem{definition}[theorem]{Definition}
\newcommand{\lp}{\left}
\newcommand{\rp}{\right}
\newcommand\norm[1]{\left\| #1 \right\|}
\DeclareMathOperator*{\pr}{\mathbf{Pr}}
\DeclareMathOperator*{\E}{\mathbf{E}}
\newcommand{\proj}{\mathrm{proj}}
\renewcommand{\H}{\mathcal H }
\newcommand{\X}{\mathcal X }
\newcommand{\R}{\mathbb{R}}
\newcommand{\poly}{\mathrm{poly}}
\newcommand{\dom}{\mathrm{dom}}
\newcommand{\sgn}{\mathrm{sign}}
\newcommand{\abs}[1]{\lp| #1 \rp|}
\newcommand{\A}{\mathcal{A}}
\newcommand{\card}[1]{|#1|}
\renewcommand\Pr{\pr}
\begin{document}

\title{Active Learning with Simple Questions}

\author{
Vasilis Kontonis \\
UT Austin\\
\texttt{vasilis@cs.utexas.edu}
\and
Mingchen Ma\\
UW-Madison\\
\texttt{mingchen@cs.wisc.edu}
\and
Christos Tzamos\\
University of Athens and Archimedes AI\\
\texttt{tzamos@wisc.edu}
}

\maketitle

\begin{abstract}%

 We consider an active learning setting where a learner is presented with a pool $S$ of $n$ unlabeled examples belonging to a domain $\mathcal X$ and asks queries to find the underlying labeling that agrees with a target concept $h^\ast \in \mathcal H$. 
   
    In contrast to traditional active learning that queries a single example for its label, we study more general \emph{region queries} that allow the learner to pick a subset of the domain $T \subset \mathcal X$ and a target label $y$ and ask a labeler whether $h^\ast(x) = y $ for every example in the set $T \cap S$. Such more powerful queries allow us to bypass the limitations of traditional active learning and use significantly fewer rounds of interactions to learn but can potentially lead to a significantly more complex query language. Our main contribution is quantifying the trade-off between the number of queries and the complexity of the query language used by the learner.

    We measure the complexity of the region queries via the VC dimension of the family of regions. We show that given any hypothesis class $\H$ with VC dimension $d$, one can design a region query family $Q$ with VC dimension $O(d)$ such that for every set of $n$ examples $S \subset \X$ and every $h^* \in \H$, a learner can submit $O(d\log n)$ queries from $Q$ to a labeler and perfectly label $S$. We show a matching lower bound by designing a hypothesis class $\H$ with VC dimension $d$ and a dataset $S \subset \X$ of size $n$ such that any learning algorithm using any query class with VC dimension less than $O(d)$ must make $\poly(n)$ queries to label $S$ perfectly.

    Finally, we focus on well-studied hypothesis classes including unions of intervals, high-dimensional boxes, and $d$-dimensional halfspaces, and obtain stronger results. In particular, we design learning algorithms that (i) are computationally efficient and (ii) work even when the queries are not answered based on the learner's pool of examples $S$ but on some unknown superset $L$ of $S$.

\end{abstract}


\section{Introduction}

Acquiring labeled examples is often challenging in applications as querying either human annotators
or powerful pre-trained models is time consuming and/or expensive. Active learning aims to minimize
the number of labeled examples required for a task by allowing the learner to adaptively select
for which examples they want to obtain labels. More precisely, in pool-based active learning, 
the learner has to infer all labels of a pool $S$ of $n$ unlabeled examples, and can adaptively select an example $x \in S$ and ask for its label. 

Even though it is known that active learning can exponentially reduce the number of required 
labels, this is unfortunately only true in very idealized settings such as datasets labeled by one-dimensional thresholds or structured high-dimensional instances (e.g., Gaussian marginals) \cite{dasgupta2005analysis, balcan2007margin,balcan2013active,balcan2017sample,awasthi2017power}.
It is well-known that without such distributional assumptions, even in $2$ dimensions, 
linear classification active learning yields no improvement over passive learning \cite{dasgupta2004analysis,dasgupta2005coarse}. 

\paragraph{Active Learning with Queries}
To bypass the hardness results and establish learning without restrictive distributional 
assumptions
\cite{balcan2012robust,kane2017active,hopkins2020noise,hopkins2021bounded,yona2022active,bressan2022active} introduce enriched queries, where the learner is allowed to make more complicated queries. 
In this work we follow this paradigm and aim to characterize the trade-off between the number of 
required queries and their complexity. For example, comparison queries that select two examples
and ask which one is closer to the decision boundary \cite{kane2017active} are simple in the sense that they are very easy to implement but also do not improve over passive learning beyond $2$-dimensional data.  On the other extreme, mistake-based queries such as conditional-class queries 
\cite{balcan2012robust} and seed queries \cite{bressan2022active}, where the learner selects a set of examples from the dataset and requests an example with a proposed label, 
allow the learner to label the whole dataset with very few queries but are very complicated
in the sense that each one requires transfering a lot of information from the learner to 
the labeler (essentially the learner has to transfer their whole dataset) making them
impractical.  Motivated by those gaps in the literature, we ask the following natural question.


\emph{Can we design simple query classes that simultaneously lead to active learning algorithms with low query complexity?}

\paragraph{Example: 2-d Halfspaces} 
Consider the 2-dimensional halfspace learning problem shown in \Cref{fig: example}. A learner is given a complicated unlabeled dataset $S \subset \R^2$ labeled by some unknown halfspace $h^*$ and wants to learn the labels of examples in $S$. Consider the shadowed region $T$ in \Cref{fig: example}. There is a significant fraction of examples contained in $T$ and all of them have the same label. If one can verify this fact, then a huge progress is made for the learning task. However, if the learner can only use label queries, then to verify this fact, every example in this region has to be queried once. This is why vanilla active learning has a high query complexity. On the other hand, the region $T$ is independent on the dataset $S$.
The structure of $T$ is so simple that to describe $T$ for the labeler, the learner only needs to send information about the two halfspaces that define $T$. Once the labeler describes the region $T$ for the labeler, the labeler can easily respond to the learner and the verification problem can be solved in a single round of simple interaction. This implies that a simple query language may help a lot in learning and motivates the following learning model.



\begin{figure}
    \centering
    \includegraphics[width=0.6\linewidth]{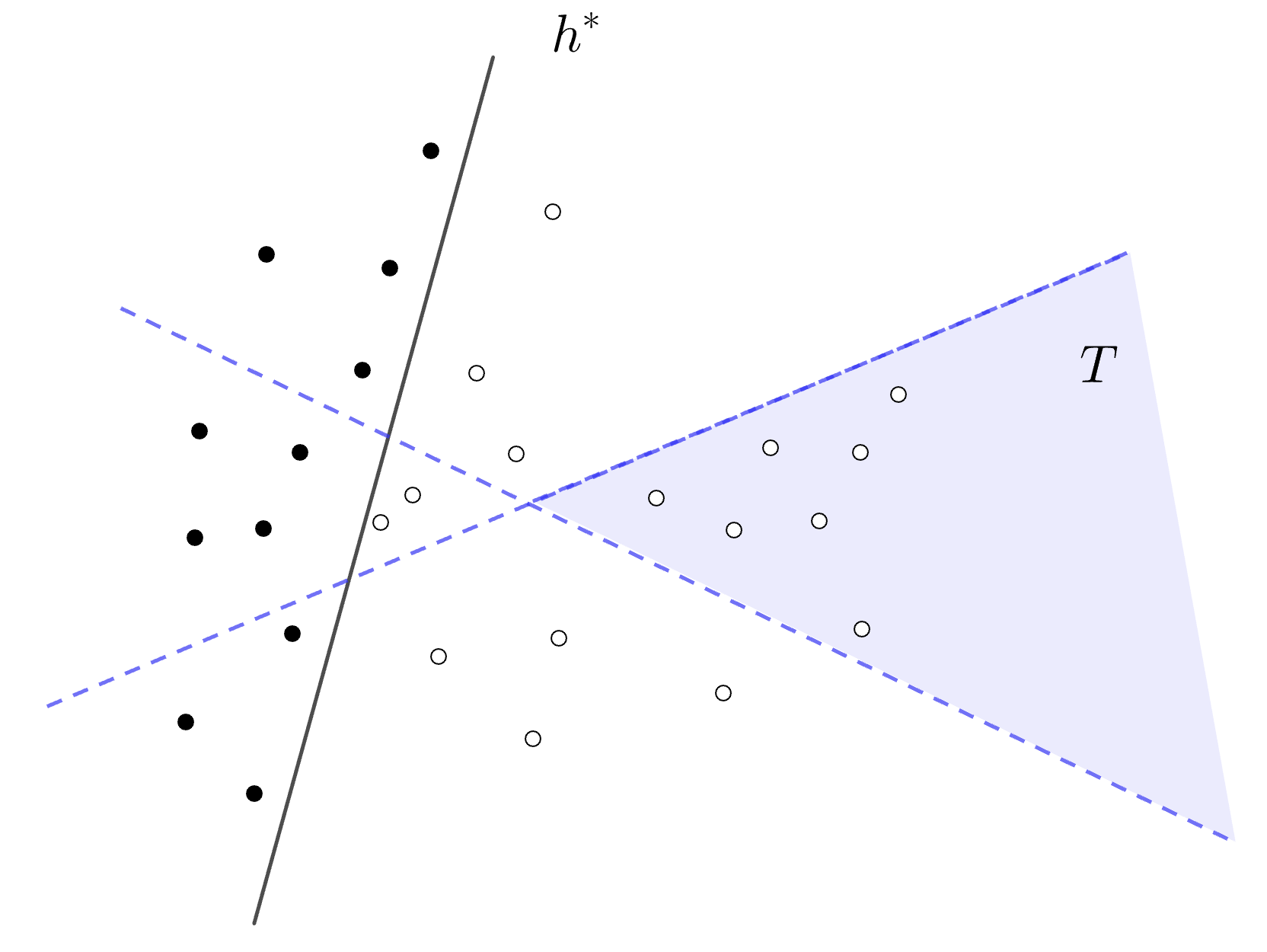}
    \caption{Learning 2-dimensional Halfspaces with Region Queries}
    \label{fig: example}
\end{figure}



\begin{definition}[Active Learning with Region Queries]
Let $\H$ be a class of binary hypotheses over a domain $\X$ and let $h^* \in H$ be a true hypothesis that labels the examples in $\X$ as positive or negative.
Given a set of $n$ examples $S \subseteq \X$, a learner $\A$ wants to learn the labels of examples in $S$ by adaptively submitting \textbf{region queries} to a labeler from a query family $Q$. 
In particular, a region query $q=(T,z) \in Q$ consists of a subset $T$ of $\X$ and a label $z \in \{\pm 1\}$. The labeler has a (possibly unknown) labeling domain $L$ such that $S \subseteq L \subseteq X$ and
after receiving a query $(T,z)$, answers whether all examples in $L \cap T$ have label $z$ under the true hypothesis $h^*$.\footnote{ If $L \cap T = \emptyset,$ then the labeler can output an arbitrary answer.}
\end{definition}
We remark that an important feature of region queries is that the query family $Q$ is defined independently on the dataset $S$. Such an additional requirement not only captures the feature that checking whether an example with a given label exists in a region could be much easier than labeling every example in the region but also captures the features of many other applications such as human learning \cite{shanks1994characteristics}, theorem proving \cite{davis1962machine} and learning via language models \cite{polu2020generative}. 




\paragraph{2-d Halfspaces (cont.)}  We now revisit the previous example where a set of $n$ points $S \subseteq \R^2$ are labeled by some hidden halfspace $h^*$ to illustrate how region queries can be used to efficiently obtain the labels of all examples. It is well-known that \cite{megiddo1985partitioning}, for any set of $n$ points in $\R^2$, one can compute in $O(n)$ time, two lines that partition these points into $4$ regions, each of which contains at least $\lfloor n/4 \rfloor$ points,
see \Cref{fig: example}. We notice that $h^*$ can have at most $2$ intersections with the two lines, which implies at least one of the four regions lies on the one side of $h^*$. Now, if we make region queries over these four regions, then
with at most $8$ region queries, we can identify a region $T$ as in \Cref{fig: example}, which contains only points with the same label and thus label a quarter of $S$. If we repeat this process over the remaining examples $O(\log n)$ rounds, we successfully infer the label of every example with $O(\log n)$ region queries.
In particular, the algorithm used here does not rely on the label of a single example in the dataset to make an update, and every query used by the algorithm is binary.
Furthermore.
the query family $Q=\{(T,z) \mid z \in \{\pm 1\}, T \text{ is an intersection of two halfspaces} \}$ is predetermined before the learner sees the dataset $S$. Thus, no matter how complicated the dataset $S$ is, in a single round of interaction, the learner only needs to describe the two halfspaces to the labeler and let the labeler check the answer to the query. This requires sending just 4 numbers plus a binary label. Motivated by our success in this example we ask:

\emph{Given a hypothesis class $\H$, can we design a region query family $Q$, where the region used in a query comes from a simple set family, such that $O(\log n)$ queries suffice to perfectly label every set of $n$ examples? If this is true, how complicated should the set family be?}














\subsection{Our results}

\paragraph{Characterizing the Complexity of Learning with Region Queries} 
We measure the complexity of the region query class using the VC dimension of the family of regions.
VC dimension characterizes the capacity of a set family and is one of the most well-studied complexity measures in learning theory. Queries from a query family with bounded VC dimension can be communicated using few bits:
for a finite domain $\X$, and a set family $C$ of VC dimension $d$, communicating a set $c \in C$ only requires $O(d\log (\card{\X}))$ bits. 

Our first main result shows that if the hypothesis class $\H$ has a VC dimension $d$, we can always design a simple query family $Q$ with VC dimension at most $O(d)$ and 
use it to perfectly label any set of $n$ examples with $O(d\log n)$ regions queries. Formally, we have the following theorem.


\begin{theorem}\label{th general up}
   Let $\X$ be a space of example and $\H$ be a hypothesis class over $\X$ with VC dimension $d$.  
   There is a region query family $Q$ over $\X$ with VC dimension at most $6d$ and a learning algorithm $\A$ such that for any set of $n$ examples $S \subseteq \X$ labeled by any true hypothesis $h^* \in \H$,
   $\A$ makes $O(d \log n)$ region queries from $Q$ and correctly label every example in $S$, if the labeling domain $L=S$.
\end{theorem}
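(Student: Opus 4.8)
The plan is to imitate the 2-d halfspace argument in full generality: at each round we want to find a region $T$ and a label $z$ such that (i) $T$ can be certified by a single query to contain only points labeled $z$, and (ii) $T$ captures a constant fraction of the currently unlabeled points in $S$. Iterating such a step $O(\log n)$ times per ``phase'' and running $O(d)$ phases will give the claimed $O(d\log n)$ query bound, provided the query family we use has VC dimension $O(d)$.

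\medskip

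\emph{Step 1: the right notion of region.}
Given the hypothesis class $\H$ of VC dimension $d$, the natural candidate for a region is a set of the form $T = \{x : h_1(x) = \cdots = h_k(x) = +1\}$ (and its negation), i.e.\ an intersection of positive regions of hypotheses in $\H$. By the standard fact that the intersection of $k$ sets from a class of VC dimension $d$ has VC dimension $O(dk\log k)$, we cannot afford to intersect arbitrarily many hypotheses; so I would aim to intersect only a \emph{bounded} number of them — ideally two or three — which already gives VC dimension $O(d)$. Concretely I would take $Q$ to consist of pairs $(T,z)$ where $T$ is an intersection of the $\{+1\}$-regions (or $\{-1\}$-regions) of at most two hypotheses from $\H$, possibly together with a complement; one then checks via Sauer--Shelah and the intersection/union bounds that $\mathrm{VC}(Q) \le 6d$.

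\medskip

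\emph{Step 2: the progress step via a partitioning/ham-sandwich-type argument.}
This is the crux. For the unlabeled subset $S' \subseteq S$, I want to partition (a large part of) $S'$ into $O(1)$ cells, each cell being a region from $Q$, such that the true hypothesis $h^*$ ``misses'' most cells — i.e.\ $h^*$ is constant on all but $O(1)$ of them. In the halfspace case this came from the fact that a line crosses at most $2$ of the $4$ quadrants produced by Megiddo's partition. For general $\H$ I would invoke the \emph{$\eps$-net / weak $\eps$-net} or the \emph{simplicial partition / cutting} machinery for set systems of bounded VC dimension: there exist $O(1)$ regions, each from a class of VC dimension $O(d)$ and each containing $\ge \Omega(|S'|)$ points of $S'$, such that the positive region of any single $h^*\in\H$ is ``pierced'' by only a bounded number of them. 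Making one region query per cell, we either learn that a cell is monochromatic (and label $\Omega(|S'|)$ points, making geometric progress) or we learn it is not, which by the piercing bound can happen for only $O(1)$ cells; on those few cells we recurse. Balancing, each phase labels a constant fraction of $S'$ using $O(1)$ queries, so $O(\log n)$ rounds suffice, and the $d$-dependence enters because handling all of $\H$ rather than a single concept costs a factor $d$ in the number of cells or phases.

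\medskip

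\emph{Step 3: wrap-up and accounting.}
Once a maximal monochromatic region is certified, remove those points and repeat. After $O(d\log n)$ total queries every point of $S$ has been placed in some certified monochromatic region, hence correctly labeled; correctness uses the assumption $L = S$ so that a ``yes'' answer to $(T,z)$ genuinely means every point of $S \cap T$ has label $z$. The main obstacle I anticipate is Step 2 — proving the existence of the bounded-size partition into low-VC regions with the ``few cells are pierced'' guarantee, and in particular getting the VC dimension of the resulting query family down to $O(d)$ rather than $O(d\log d)$ or worse; I expect one must be careful to use regions that are intersections of only a \emph{constant} number of hypothesis-regions, and to extract the piercing bound from a dual/shattering argument specific to $\H$ rather than from a generic weak-$\eps$-net theorem (which would lose extra factors).
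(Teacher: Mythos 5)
Your Step 2 is where the argument breaks down, and it is not a detail: the partitioning lemma you need --- $O(1)$ cells, each a region from a \emph{data-independent} family of VC dimension $O(d)$, each containing a constant fraction of the remaining points, such that every $h^*\in\H$ is non-constant on only $O(1)$ of them --- does not follow from $\eps$-net, simplicial-partition, or cutting machinery for an abstract class of VC dimension $d$. Those tools are geometric (halfspaces, semialgebraic ranges), and even where they apply, the cells of a simplicial partition are arbitrary subsets of the point set, not members of a fixed region family chosen before seeing $S$, which is exactly what the VC bound on $Q$ must be about; for a general set system no ``bounded piercing'' partition into constantly many low-complexity regions exists, and your proposed fallback (a dual/shattering argument specific to $\H$) is not supplied. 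The accounting is also inconsistent: if each round certified a monochromatic region containing a constant fraction of the unlabeled points with $O(1)$ queries, you would get $O(\log n)$ queries total, and the claim that ``handling all of $\H$ costs a factor $d$'' in cells or phases is asserted rather than derived. Finally, restricting $T$ to intersections of two hypothesis regions gives VC dimension $O(d)$ by the $k$-fold intersection bound only up to constants and log factors, so the specific bound $6d$ would also need a more careful construction.

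The paper avoids all of this by measuring progress in the size of the version space rather than in the number of points labeled. Fix an arbitrary strict total order on $\X$ (this exists by the well-ordering theorem), order $S$ as $x^{(1)}<\cdots<x^{(n)}$, and scan prefixes, at each step retaining the hypotheses (restricted to $S$) that agree with the majority vote on the next point; by \Cref{lm set construct} there is a first prefix $[x^{(1)},x^{(i^*)}]$ and a representative $\hat h$ such that the hypotheses agreeing with $\hat h$ on that prefix form between a $1/3$ and a $2/3$ fraction of the current version space. Two queries --- the order-interval intersected with $\{x:\hat h(x)=1\}$ proposed positive, and with $\{x:\hat h(x)=-1\}$ proposed negative --- decide whether $h^*$ agrees with $\hat h$ on the whole prefix (here $L=S$ is used), so either answer shrinks the version space by a factor $2/3$; since Sauer's lemma gives $|H_S|\le O(n^d)$, $O(d\log n)$ queries suffice, and every region used is an interval intersected with the region of a single $h\in\H\cup\bar\H$, giving $\mathrm{VC}(Q)\le 2(2d+1)\le 6d$. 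If you want to salvage your plan, you would have to prove your Step 2 partition lemma for general $\H$, which I do not believe is possible; the prefix/majority-vote argument is the mechanism that replaces it.
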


 In particular, the $O(d\log n)$ query complexity in \Cref{th general up} matches the lower bound for the query complexity of active learning with arbitrary binary-valued queries in \cite{kulkarni1993active} and thus is essentially information-theoretically optimal. Given \Cref{th general up}, an immediate question is in general, whether it is possible to quickly learn $\H$ with an even simpler query class (with VC dimension $o(d)$). Our next main result gives a negative answer firmly. We give a matching lower bound showing that unless the hypothesis class $\H$ has a good structure, a region query family with VC dimension $\Omega(d)$ is necessary to achieve query complexity $O(\log n)$. Formally,  we have the following theorem.

\begin{theorem}\label{th general low}
For every $d \in N^+$ and $n \ge d$ large enough, there exists a space of examples $\X$ and a hypothesis class $\H$ over $\X$ with VC dimension $d$ such that there exists a set of $n$ example $S$ such that
for every region query family $Q$ over $\X$ with $VC\dim(Q) \le (d-2)/3$ and every active learning algorithm $\A$, there exists a true hypothesis $h^* \in \H$, such that if $\A$ makes less than $\poly(n)$ region queries from $Q$, then with probability at least $1/3$, some example $x \in S$ is labeled incorrectly by $\A$. In particular, this even holds when $\A$ knows the labeling domain $L=S$.
\end{theorem}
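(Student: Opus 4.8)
Put $k:=\lfloor (d-2)/3\rfloor$; the goal is to build a single triple $(\X,\H,S)$ with $\mathrm{VCdim}(\H)=d$ and $|S|=n$ that defeats \emph{every} region query family $Q$ with $\mathrm{VCdim}(Q)\le k$. The plan is to make the trace $\mathcal L:=\H|_S$ a family of $n^{\Omega(d)}$ distinct labelings of $S$ that is \emph{rigid} against low-complexity tests: for every set system $\mathcal R$ over $S$ of VC dimension at most $k$, every $R\in\mathcal R$, and every $z\in\{\pm1\}$, the number of $h\in\mathcal L$ that are constant $z$ on $R$ is either at most $n^{d/10}$ or at least $|\mathcal L|-n^{d/10}$; that is, no low-complexity region is ``balanced'' over $\mathcal L$, so no low-complexity region query can substantially shrink the space of candidate labelings. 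A natural construction to try is to take $S=S_1\sqcup\dots\sqcup S_{k+1}$ and let $\mathcal L=\mathcal G_1\times\dots\times\mathcal G_{k+1}$ be a product of $k+1$ ``gadget'' classes, each of VC dimension roughly $3$ over its own block --- this is the origin of the factor $3$ (and the ``$-2$'') in $(d-2)/3$ --- embedded into its sub-domain so that any region of VC dimension at most $k$, restricted to a block, is either empty, or the whole block, or too coarse to be monochromatic for an intermediate number of that gadget's labelings, and so that a family of VC dimension at most $k$ is forced to be effectively trivial on at least one of the $k+1$ blocks.

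\paragraph{The version-space adversary.}
A query $q=(T,z)$ sees the pool only through $R:=T\cap S$, so knowing that $L=S$ gives the learner nothing. I track the set $V\subseteq\mathcal L$ of labelings still consistent with the answers so far; a ``yes'' to $q$ replaces $V$ by $V^z_R:=\{h\in V:h\equiv z\text{ on }R\}$, and a ``no'' replaces it by $V\setminus V^z_R$. The adversary always returns the answer corresponding to the larger of the two pieces; this is consistent with some $h^*\in V$ at every step and removes only $\min(|V^z_R|,\,|V|-|V^z_R|)$ labelings per query. By the Sauer--Shelah lemma the traces $\{T\cap S:T\in Q\}$ of a family of VC dimension at most $k$ form a set system of VC dimension at most $k$ and size $O(n^k)$; invoking rigidity --- set up so that it survives conditioning on the structured version spaces the adversary produces, e.g.\ in the product construction the adversary can always ``charge'' a block on which $Q$ is trivial and thereby answer ``no'' for free --- gives $\min(|V^z_R|,|V|-|V^z_R|)\le n^{d/10}$ for every achievable pair $(R,z)$. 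Hence after $B$ queries, $|V|\ge |\mathcal L|-B\,n^{d/10}$, which still exceeds $2$ whenever $B\le n^{d/10}$, and the product structure of $\mathcal L$ keeps the surviving labelings from all agreeing on $S$.

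\paragraph{From the adversary to the randomized claim.}
Run the learner against a uniformly random true hypothesis $h^*\in\mathcal L$. Since the adversary always gives the majority answer, the probability that $h^*$ ever forces a minority answer during the first $B\le n^{d/10}$ queries is at most $B\,n^{d/10}/(|\mathcal L|-B\,n^{d/10})$, which is at most $1/3$ for $n$ large; on the complementary event the resulting transcript $\tau$ satisfies $|V_\tau|\ge 3$, $h^*$ is uniform over $V_\tau$, and the learner --- whose output is a single labeling of $S$ --- agrees with $h^*|_S$ with probability at most $1/|V_\tau|\le 1/3$. Thus the learner mislabels some $x\in S$ with probability at least $1-\tfrac{1}{3}-\tfrac{1}{3}>\tfrac{1}{3}$, even when it knows $L=S$, which is exactly \Cref{th general low}.

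\paragraph{Main obstacle.}
The hard part is the construction together with the rigidity property: exhibiting an explicit gadget whose VC-dimension cost is tuned to ``$3$ per unit of query VC dimension it forces,'' and proving that \emph{every} region family of VC dimension at most $k=\lfloor(d-2)/3\rfloor$ is simultaneously un-isolating on all $k+1$ gadgets (and that rigidity is maintained along the adaptive run, so the version-space bound genuinely holds). This is precisely where the threshold $(d-2)/3$ gets pinned down; the adversary argument and the uniform-prior calculation above are then routine.
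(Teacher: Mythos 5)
Your proposal defers the entire crux of the theorem to the ``Main obstacle'' paragraph, and the route you sketch toward it does not work. First, the claim that a query family of VC dimension at most $k$ ``is forced to be effectively trivial on at least one of the $k+1$ blocks'' is false: VC dimension does not localize across disjoint blocks. The family of all singletons of $\X$ has VC dimension $1$ yet can isolate any point of any block, and the family of all subsets of size at most $k$ has VC dimension $k$ and is maximally expressive inside every block; so a product-of-gadgets class cannot ``charge'' a block on which $Q$ is trivial. Second, the rigidity property itself cannot hold for a product class: if $R$ is a region inside one block on which the gadget class $\mathcal G_i$ (of size $\poly(n)$) is not constant, then the number of labelings constant $z$ on $R$ is between $|\mathcal L|/\poly(n)$ and $|\mathcal L|\bigl(1-1/\poly(n)\bigr)$, i.e.\ far from both extremes $n^{d/10}$ and $|\mathcal L|-n^{d/10}$. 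More fundamentally, the version-space counting frame is too weak for this theorem: the moment some admissible region is even constant-factor balanced on the current version space (and VC-dimension-$1$ families such as nested sets can contain arbitrarily large regions), the majority adversary only yields an $O(\log|\mathcal L|)=O(d\log n)$ lower bound, not the $\poly(n)$ bound required. So the one thing that needs to be proved --- that \emph{every} family of VC dimension $\le(d-2)/3$ admits only uninformative or near-useless regions against the constructed class, maintained adaptively --- is exactly what is missing, and your sketched construction does not supply it.

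For contrast, the paper avoids version-space counting altogether. It takes $\X=S$ of size $n\approx 4k^2\ln 4k$ and, via the explicit packing of \Cref{th intersection}, builds $N>\card{\dom(Q)}$ (by Sauer's lemma, $N\approx n^{d+1}$ suffices) subsets $C_1,\dots,C_N$ of size $k$ with pairwise intersections at most $\gamma=3d$; the hypothesis class is the union of the ``one-off'' classes $\H_{C_i}$ (all of $C_i$ positive except at most one point), which has VC dimension about $\gamma+2$. A counting argument (\Cref{lm inclusion}) shows some $C^*$ is not witnessed by any region of $\dom(Q)$ of size exceeding $\gamma$. Inside $C^*$ the structure of the one-off class makes every query whose region straddles $C^*$ and its complement (and meets $C^*$ in at least two points) answer ``no'' regardless of the truth, so informative queries must essentially be subsets of $C^*$, hence of size at most $\gamma$; a covering argument (\Cref{lm cover}) then forces $\Omega(k/\gamma)=\poly(n)$ queries. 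If you want to complete your write-up, you would need to either reproduce a construction with this ``straddling queries are uninformative, contained queries are small'' property, or find a genuinely different rigidity mechanism --- the product-gadget idea and the halving adversary will not get you there.
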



 \Cref{th general up} and \Cref{th general low} together give a perfect trade-off between the complexity of the query family and the query complexity and thus show that the VC dimension is a good measure for the performance of region queries.
 We want to remark that \Cref{th general low} not only holds in our learning model where queries are binary but also holds in the stronger model studied in \cite{balcan2012robust,bressan2022active}, where a counter-example is also returned in each round of interaction. 
 We also remark that \Cref{th general low} gives an optimal lower bound that matches \Cref{th general up} only in a minimax perspective. In general, it could be the case where for a very special hypothesis class $\H$, we can design a query family with a much smaller VC dimension than the one of $\H$ but still achieve the information-theoretically optimal query complexity. These examples will be shown later. Furthermore, given a pair of hypothesis classes and query class $(\H,Q)$, we actually come up with a combinatorial characterization of the query complexity of learning $\H$ using $Q$. However, since the result is far from the central theme of this paper, we leave it for Appendix~\ref{sec dimension}.

\paragraph{Efficient Learning Algorithms for Natural Hypothesis Classes}

Although \Cref{th general up} gives an algorithm that can perfectly label every subset of $n$ examples with an optimal query complexity, the algorithm itself is not efficient, as it needs to solve optimization problems over the hypothesis class, which is usually exponentially large with respect to the input. In this work, we also focus on designing query families and learning algorithms for some natural hypothesis classes and obtain stronger results. Specifically, our learning algorithms are computationally efficient and work even when queries are not answered based on the dataset $S$ but on any unknown superset $L$ of $S$. These results are summarized as follows.


\begin{theorem}\label{th summary}
    There is a computationally efficient algorithm $\A$ and a query class $Q$ such that for any set $S$ of $n$ examples, $\A$ learns the labels of $S$ perfectly by making region queries to a labeler 
    with labeling domain an unknown set $L \supseteq S$:
    \begin{enumerate}
    \item For unions of $d$ intervals, $Q$ has VC dimension $2$, and $\A$ makes $O(d\log n)$ queries.
    \item For axis parallel boxes in $\R^d$, $Q$ has VC dimension $O(\log d)$, and $\A$ makes  $O(d\log n)$ queries.
    \item For halfspaces in $\R^d$, $Q$ has VC dimension $\Tilde{O}(d^3)$, and $\A$ makes $\Tilde{O}(d^3 \log n)$ queries.
    \end{enumerate}
\end{theorem}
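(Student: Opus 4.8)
The plan is to give three tailored query families, one per case; \Cref{th general up} already labels $S$ with $O(d\log n)$ queries from a VC‑dimension‑$O(d)$ family, so the point here is to additionally make everything computationally efficient and robust to the labeler using an unknown superset $L\supseteq S$. That robustness forces the same discipline in all three algorithms: we only rely on answers whose meaning is independent of $L$, namely (i) singleton queries $(\{x\},z)$ with $x\in S$, which behave exactly like label queries since $x\in L$, and (ii) ``YES'' answers to queries $(T,z)$ where the region $T$ is, by geometry alone, contained in one side of $h^\ast$ — such a ``YES'' then certifies the labels of all of $S\cap T$ whatever $L$ is. Each algorithm will be built to use only answers of these two kinds.

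\emph{Unions of $d$ intervals.} Here $\X=\R$; take $Q$ to be all closed intervals $[a,b]$ with $a\le b$ (degenerate intervals allowed), with a label attached, so $Q$ has VC dimension $2$. Since $h^\ast$ is a union of at most $d$ intervals it changes sign at most $2d$ times on $\R$, hence at most $2d$ times among the points of $L$. Maintain ``active'' subintervals, each spanned by two points of $S$, starting from $[\min S,\max S]$; in each round, for every active interval $I$ query $(I,+1)$ and $(I,-1)$: a ``YES'' labels $S\cap I$ and retires $I$, while two ``NO''s mean $L\cap I$ contains both labels, so $I$ genuinely contains a sign change, and we split $I$ at the median of $S\cap I$. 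Distinct active intervals are disjoint and each carries a sign change, so there are never more than $2d$ of them; after $O(\log n)$ rounds each is a single point of $S$, resolved by one singleton query — $O(d\log n)$ queries in all. \emph{Axis‑parallel boxes in $\R^d$.} Here $h^\ast$ is the indicator of a box $[a_1,b_1]\times\dots\times[a_d,b_d]$; take $Q$ to be the coordinate halfspaces $\{x:x_j\le t\}$ and $\{x:x_j\ge t\}$ with a label, which a direct count shows has VC dimension $\Theta(\log d)$. The answer to $(\{x:x_j\le t\},-1)$ is monotone in $t$ and is ``YES'' whenever $t<a_j$, because every point with $x_j<a_j$ (in $L$ or not) is negative; so binary search over $\{x_j:x\in S\}$ finds the largest $\ell_j$ giving ``YES'', and symmetrically an $r_j$ on the upper side, in $O(d\log n)$ queries total. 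Label $x$ negative iff $x_j\le\ell_j$ or $x_j\ge r_j$ for some $j$: a ``negative'' label is always backed by a certifying ``YES'', and if some $x$ labeled positive were in fact negative it would satisfy $x_j<a_j$ or $x_j>b_j$ for some $j$, which would have forced $\ell_j\ge x_j$ or $r_j\le x_j$ — a contradiction.

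\emph{Halfspaces in $\R^d$.} I would reduce this to computing \emph{some} hyperplane that classifies $S$ the way $h^\ast$ does (its labels on $S$ then coincide with $h^\ast$'s), found by an iterative cutting‑plane/coreset scheme in weight space. Maintain a small set $C\subseteq S$ together with the labels revealed for its points; let $\hat h$ be a separating hyperplane of $C$ computed by linear programming, and test it with the two halfspace queries $(\{\hat h=+1\},+1)$ and $(\{\hat h=-1\},-1)$. If both return ``YES'', output $\sgn\hat h$ on $S$ and stop. Otherwise the ``YES''/``NO'' pattern identifies on which side a misclassified point lies, and we must surface an \emph{actual point of $S$} whose revealed label contradicts $\hat h$, via a binary search that repeatedly cuts the current region with a hyperplane through the median of its points of $S$, queries the resulting pieces, and finishes with singleton queries once one point of $S$ remains — arranged so that the unknown $L$ cannot fool it. Adding that point to $C$ contributes a fresh linear constraint on the weight vector, and a standard dimension/coreset argument bounds the number of iterations by $\widetilde O(d^2)$; with $\widetilde O(d\log n)$ queries per iteration over regions that are intersections of $\poly(d)$ halfspaces (intersections of $k$ halfspaces in $\R^d$ have VC dimension $O(kd\log k)$), this gives $\widetilde O(d^3\log n)$ queries from a family of VC dimension $\widetilde O(d^3)$.

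The main obstacle is the violated‑point subroutine in the halfspace case: making it provably return a genuine offending point of $S$, never a phantom point of $L\setminus S$ (so the search neither certifies a wrong label nor stalls), while simultaneously keeping each iteration to $\widetilde O(d\log n)$ queries, the query regions to $\poly(d)$ halfspaces, and the number of cutting‑plane iterations to $\widetilde O(d^2)$ rather than $\poly(n)$. Cases 1 and 2, by contrast, are plain binary searches once the ``monotone YES'' observations take care of the unknown labeling domain.
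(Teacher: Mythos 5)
Your treatments of cases 1 and 2 are correct and essentially the paper's own arguments: binary search with interval/axis-aligned-halfspace queries, with the monotonicity of ``YES'' answers over nested regions (and the fact that regions anchored at points of $S$ are never empty) handling the unknown domain $L$. Your interval algorithm splits actively maintained intervals in parallel rather than peeling them off left to right as in \Cref{th k-subset}, but the accounting is the same.

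The halfspace case, however, has a genuine gap, and it is exactly at the step you flag as ``the main obstacle.'' Your cutting-plane scheme needs a subroutine that, after a ``NO'' answer, surfaces \emph{an actual point of $S$} whose true label contradicts $\hat h$. In this model no such point need exist: since the labeler answers with respect to an unknown $L \supseteq S$, a ``NO'' to $(\{\hat h = +1\},+1)$ can be caused entirely by phantom points of $L\setminus S$, while every point of $S$ on that side is classified correctly by $\hat h$. Your median-cut binary search then terminates at a single point of $S$ whose revealed label agrees with $\hat h$, you learn nothing new for $C$, and the iteration makes no progress --- so correctness of the outer loop cannot be salvaged by this route. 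Two further claims are unsupported even granting the subroutine: the $\widetilde O(d^2)$ bound on cutting-plane iterations (an arbitrary LP-separator of $C$ has no mistake bound independent of $n$ without margin or isotropy; the paper's own \Cref{th SVM} gets $O(d\log^2 n)$ only by allowing arbitrarily complex query regions), and the claim that your query regions are intersections of $\poly(d)$ halfspaces, since isolating one point of $S$ among $n$ requires about $\log n$ adaptive cuts, so the VC dimension of your family would grow with $n$ rather than being $\widetilde O(d^3)$. The paper's proof of \Cref{th halfspace} avoids all of this by \emph{not} looking for a misclassified point of $S$: it first applies Forster's transform (\Cref{th forster}) so that a $1/(4d)$ fraction of points has margin $1/(2\sqrt d)$ in every direction, then runs a robust modified perceptron (\Cref{alg perceptron}) in which a coordinate-wise binary search over a fixed $\poly(d)$ grid (always unioning in a known point $x'\in S$ to dodge empty regions) localizes some misclassified point of $L$ to a box of diameter $1/\poly(d)$; any point of that box, whether or not it lies in $S$ or is even truly misclassified, satisfies $(x\cdot w_t)(x\cdot w^\ast)\le 1/\poly(d)$, which \Cref{cl perceptron} shows is enough for the perceptron analysis. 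Pulling these $O(d)$-facet regions back through $f_A(x)=Ax/\norm{Ax}$ makes them intersections of $O(d)$ degree-two polynomial inequalities, which is where the $\widetilde O(d^3)$ VC bound comes from. Without this (or some substitute for the missing offending-point step and the iteration bound), your third case does not go through.
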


We note that for the first two cases, the VC dimension of the query class
is significantly smaller than the VC dimension of the hypothesis class which is $\Theta(d)$. In the case of halfspaces, the VC dimension of $Q$ and the query complexity is worse than that given in Theorem~\ref{th general up} but applies in a significantly more general setting and is computationally efficient. The cubic dependence on $d$ can be improved to quadratic if the learner provides counter-examples instead of binary answers to our region queries. We leave the detail discussion on this improvement to Appendix~\ref{app halfspace}.


\begin{table}
    \centering
    \begin{tabular}{c|c|c|c|c}
        \textbf{Hypothesis Class} $\H$  & \textbf{VC-dim($\mathrm{Q}$)} & \textbf{Query Complexity}  & \textbf{Efficient?} & \textbf{Labeling Domain} \\ \hline \hline
        General & $O(d)$ & $O(d \log n)$ & No & $L = S$ \\ \hline
        Union of $d$ Intervals & $O(1)$ & $O(d \log n)$ & Yes & $L \supseteq S$ \\ \hline
        Axis Parallel Boxes & $O(\log d)$ &$O(d \log n)$ &Yes & $L \supseteq S$  \\ \hline
        Halfspaces & $\Tilde{O}(d^3)$ & $\Tilde{O}(d^3 \log n)$ & Yes & $L \supseteq S$ \\
    \end{tabular}
    \caption{Summary of the algorithmic results of \Cref{th general up} and \Cref{th summary} for a hypothesis class $\H$ of VC dimension  $\Theta(d)$ and a dataset $S$ of size $n$. }
    \label{tab: result}
\end{table}




\subsection{Connection with Other Learning Models and Related Work}

\paragraph{Active Learning with Enriched Queries}
The study of active learning with enriched queries can be traced to the literature of exact learning \cite{angluin1988queries,balcazar2001general,balcazar2002new,chase2020bounds}. More recently, the focus has been shifted from general queries to more problem-dependent queries such as mistake-based queries \cite{balcan2012robust,bressan2022active}, clustering-based queries \cite{ashtiani2016clustering,mazumdar2017clustering,bressan2021exact,del2022clustering,xia2022optimal}, comparison-based queries \cite{kane2017active,kane2018generalized,xu2017noise,hopkins2020noise,hopkins2020point,hopkins2020power}, separation-based queries \cite{har2021active} and derivative-based queries \cite{ben2022active}. In this work, we study active learning with region queries for both general hypothesis classes and concrete learning problems.



\paragraph{Mistake-Based Query and Self-Directed Learning}
The region queries we study in this paper fall into the category of mistake-based queries \cite{angluin1988queries,maass1992lower,balcan2012robust,bressan2022active}. The study of mistake-based queries can be traced to the study of learning with equivalence or partial equivalence queries \cite{angluin1988queries,maass1992lower}. Though named differently, a typical mistake-based query can be understood as follows. A learner selects a subset of examples $T \subset \X$, proposes a possible labeling for examples in $T$, and submits the information to a labeler. The labeler will return an example $x \in T$ labeled incorrectly by the learner or return nothing when every example in $T$ is labeled correctly. 
We will discuss in Appendix~\ref{app self},
if an arbitrary complicated subset $T$ and any possible labeling are allowed to be used,
a learner could use mistake-based queries to implement online learning algorithms or self-directed learning algorithms \cite{goldman1994power} and easily obtain active learning algorithms with low query complexity.
Our query model has several differences from the previous work. 
(i) Unlike all previous work on mistake-based queries, a region query is a binary query and does not require a counter-example to be returned. 
(ii) Unlike \cite{angluin1988queries,maass1992lower}, a region query is not answered based on the example space $\X$ but based on some labeling domain $S \subseteq L \subseteq \X$ (usually $L=S$). In general, we should not hope to obtain useful information from examples not in the dataset. 
(iii) Unlike \cite{balcan2012robust,bressan2022active}, we require the learner to design a query family $Q$ with finite VC dimension before seeing the dataset $S$ and thus we cannot simply design an active learning algorithm by reducing it to online/self-directed learning.

\paragraph{Learning Halfspace with the Power of Adaptivity}
The class of halfspaces is one of the most well-studied hypothesis classes under active learning. \cite{dasgupta2004analysis} shows that to perfectly learn the labels of a set of $n$ points in $\R^2$ labeled by some halfspace, vanilla active learning needs to make $\Omega(n)$ label queries. Since then, a large body of works \cite{dasgupta2005analysis, balcan2007margin,balcan2013active,balcan2017sample,awasthi2017power} have been done to understand under which distribution vanilla active learning can learn a halfspace with few queries. On the other hand, the query complexity of learning halfspaces in the distribution-free setting is much less understood. \cite{kane2017active} points out that with the help of comparison queries, one can efficiently learn a 2-dimensional halfspace with a query complexity $O(\log n)$. However, in the same work, they point out that such an improvement disappears in $\R^3$. Recently, two remarkable results have been done to understand the query complexity of learning halfspaces in the distributional free setting. The first one is \cite{hopkins2020point}, where they show that if one can query the label of any point in $\R^d$, then $\Tilde{O}(d\log n)$ queries are sufficient to perfectly label $n$ examples. The second one is \cite{bressan2022active}, in which they show without restriction on the complexity of the mistake-based query, they can efficiently learn a $\gamma$-margin halfspace with $\Tilde{O}(d\log(1/\gamma))$ queries. Our halfspace learning algorithm does not rely on acquiring additional information from $\X \setminus S$ or using very complicated query classes but is still able to achieve a query complexity of $\poly(d,\log n)$.

\paragraph{Organization of the Paper}
In \Cref{sec general}, we discuss our results for general hypothesis classes. We give proof sketches for 
\Cref{th general up} and \Cref{th general low} in \Cref{sec general up} and \Cref{sec general low}. In \Cref{sec efficient}, we discuss how to design query classes and efficient active learning algorithms for natural classes. In \Cref{sec interval} and \Cref{sec box}, we study the class of the union of $k$-intervals and the class of high dimensional boxes. In \Cref{sec halfspace}, we discuss our main results on efficient active learning algorithms for halfspaces. Due to the limited space, we present the notations and detailed proofs in the Appendix.


\section{Active Learning for General Hypothesis Class Using Simple Region Query}\label{sec general}


\subsection{Construction of Simple Query Classes for General Hypothesis Classes}\label{sec general up}

In this section, we give an overview of the proof of \Cref{th general up} and leave the full proof and detailed discussion of \Cref{th general up} to Appendix~\ref{app general up}.
Before diving into the proof, we first give an overview of why the previous works on mistake-based queries result in using query families with unbounded query complexity.
Previous work such as \cite{maass1992lower,balcan2012robust} design learning algorithms based on the fact that it is possible to use region queries to implement the Halving algorithm. An algorithm of this style predicts a label for each example in $S$ via majority voting, submits examples with positive predictions, and examples with negative predictions, and gets one example on which majority voting makes a mistake (such a mistake can be found via binary search if the query is binary). In this way, hypotheses that predict incorrectly on this example cannot be the true hypothesis, and the size of the version space is shrunk by half. Since the majority voting could behave arbitrarily complicated over an arbitrary set of examples, the query family used by the algorithm could also be arbitrarily complicated. This suggests a new algorithmic framework should be come up with to break the bottleneck.

The intuition behind our algorithm is as follows. 
Assume the examples in $S$ have been ordered as $x^{(1)},\dots,x^{(n)}$. We consider $H^{(0)}$, the restriction of $\H$ over the dataset $S$. If we make a label query for $x^{(1)}$, then such a label query might not be very helpful because most of the hypotheses in $H^{(0)}$ could label this example in the same way, for example, $y^{(1)} \in \{\pm 1\}$. Let's assume we are in this case and define $H^{(1)}:=\{h \in H^{(0)} \mid h(x^{(1)}) =y^{(1)} \}$. Similarly, a label query for $x^{(2)}$ is also not that useful, since many hypotheses in $H^{(1)}$ might label $x^{(2)}$ by some $y^{(2)}$. Assuming we are in this case, then we have a new class $H^{(2)}$ defined based on $H^{(1)}$. Although each single label query is not useful,
if we repeat this process, at some point $t \in [n]$, the remaining hypothesis class $H^{(t)}$ should have a proper size. i.e $\card{H^{(t)}} /\card{ H^{(0)}} \in (1/3,2/3)$. This implies that after $t$ label queries no matter what answer we get, the size of the version space is shrunk by a constant factor. Notice that these $t$ label queries can be safely replaced by $2$ region queries, $(\{x \mid x = x^{(i)}, i \in [t], h'(x)=1\},1)$ and $(\{x \mid x = x^{(i)}, i \in [t], h'(x)=-1\},-1)$, 
where $h'$ is an arbitrary hypothesis whose restriction over $S$ is in the class $H^{(t)}$. By Sauer's lemma, $\card{H^{(0)}} \le O(n^d)$. So, if we repeat the above procedure $O(d\log n)$ times, we learn the labels of examples in $S$.
Up to now, the problem has been almost solved, but the regions where we make queries still depend on the dataset $S$. However, the analysis above works for any order of $S$, if there is a natural order $o$ for $\X$, then the constraint $x=x^{(i)}$ for some $i \in [t]$ can be simply replaced by $o(x) \in [o(x^{(1)}),o(x^{(t)})]$, because $L=S$. Thanks to the well-known well-ordering theorem, such a linear order exists for every non-empty space $\X$. Thus, we can construct the query family $Q$ using $\H$, $\Bar{\H}$(the set of negation hypothesises in $\H$), and the natural linear ordering defined in $\X$, which gives a simple query class.


\subsection{Lower Bound on the VC Dimension of the Query Class}\label{sec general low}

In this section, we give an overview of the proof of \Cref{th general low}, showing a matching lower bound for \Cref{th general up}. The full proof and more detailed discussions are presented in Appendix~\ref{app general low}. 

We will assume $\X$ to be a space of $n$ examples and $L=S=\X$. i.e. The labeling domain, the dataset to be labeled, and the example space are the same. 
Suppose there is some subset $C^* \subseteq S$ of size $k$ and we want to distinguish hypothesis $h_0$, which labels every example in $C^*$ positive and everything else negative, and the other $k$ hypothesis $h_1,\dots,h_k$, each of which only differs from $h_0$ at a single example in $C^*$. Let's assume the learning algorithm is using a fixed region query class $Q$ to learn. 
For any query $(T,z) \in Q$, if $T$ has an intersection with both $C^*$ and $S \setminus C^*$, then it will provide no useful information(even if some example $x \in T \cap S$ with label $-z$ is also returned), because we know that $q(T,z)=0$ always holds. 
Furthermore, to distinguish the two cases, those regions $T \subseteq C^*$ should cover all examples in $C^*$, otherwise, an example $x \in C^*$ is not involved in any query. As we will show later, the optimal solution to this set cover instance roughly serves as a lower bound of the query complexity in this special instance. In particular, if every $T \subseteq C^*$ as size at most $t$, then the query complexity should be at least $\Omega(k/t)$.

So far, we have established a hard instance for a fixed query class. The most difficult part of our construction is to generalize the above instance so that it is hard for \emph{every} query class $Q$ with VC dimension $O(d)$, where massive subsets of $\X$ would be possible to be involved. We use several key techniques to overcome this difficulty. 
The first one is the following observation. If we have $N>\card{\dom(Q)}$ subsets $C_1,\dots,C_N \subseteq \X$ of size $k$ such that the pairwise intersection of $C_i,C_j$ is at most $t$, then there must be at least one $C_i$ such that if $T \subseteq C_i$ and $T \in \dom(Q)$ then it must be the case $\card{T} \le t$. Sauer's lemma tells us that each set family over $\X$ with VC dimension $O(d)$ contains at most $O(n^d)$ different sets. Thus, if we set up the above $N$ to be $O(n^d)$, then we can embed the hard instance we mentioned above into each $C_i$ so that any learning algorithm uses any query class with VC dimension $O(d)$ has query complexity at least $\Omega(k/t)$. In particular, we will see later, that the hypothesis class we use here has VC dimension $O(t)$. So the final step is to show we can construct these subsets $C_i$ such that $k=\poly(n)$ while $t=O(d)$. 
To show this, we make use of the result in \cite{beideman2014set}, which explicitly constructs set families with low pairwise intersections. This is why intuitively a query family with $\Omega(d)$ VC dimension is also necessary for a query complexity of $O(\log n)$.
We want to remark that the construction of the example space $\X$ in \cref{th general low} is fully combinatorial. So, given any large enough space of examples, we can embed the hard instance we construct in \Cref{th general low} into that space to get a corresponding hard instance. Formally, we have the following corollary, which gives a stronger statement of \Cref{th general low}. We refer the readers to \Cref{app col} for the proof of \Cref{col general low}

\begin{corollary}\label{col general low}
There is a space of examples $\X$ such that for every $d \in N^+$ and $n \ge d$ large enough, there exists a hypothesis class $\H$ over $\X$ with VC dimension $d$ such that there exists a set of $n$ example $S$ such that
for every region query family $Q$ over $\X$ with $VC\dim(Q) \le (d-3)/3$ and every active learning algorithm $\A$, there exists a true hypothesis $h^* \in \H$, such that if $\A$ makes less than $\poly(n)$ region queries from $Q$, then with probability at least $1/3$, some example $x \in S$ is labeled incorrectly by $\A$. In particular, this even holds when $\A$ knows the labeling domain $L=S$.
\end{corollary}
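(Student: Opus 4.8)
The plan is to exploit the remark made above that the hard instance behind \Cref{th general low} is \emph{purely combinatorial}: it is built from set systems with small pairwise intersections and never uses any structure of its ground set. So I would fix, once and for all, a countably infinite example space $\X$ (say $\X=\N$, though any fixed infinite set works) and, given $d$ and a sufficiently large $n$, simply \emph{relabel} the hard instance of \Cref{th general low} inside $\X$. Concretely, invoke \Cref{th general low} with parameter $d-1$ to obtain a finite example set $\X_0$, a hypothesis class $\H_0$ over $\X_0$ of VC dimension $d-1$, and a dataset $S_0=\X_0$ of size $n-1$ against which every region query family of VC dimension at most $((d-1)-2)/3=(d-3)/3$ needs $\poly(n)$ queries (or errs with probability $\ge 1/3$). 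Pick any injection $\iota\colon \X_0\cup\{\star\}\hookrightarrow\X$, where $\star$ is one fresh ``anchor'' element, set $S\eqdef\iota(\X_0\cup\{\star\})$ (so $\card{S}=n$), and define $\H$ on $\X$ by lifting each $h_0\in\H_0$ to the two hypotheses that agree with $h_0$ on $\iota(\X_0)$, take an arbitrary value on $\iota(\star)$, and are identically $-1$ on $\X\setminus S$.

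The first routine check is that $\mathrm{VCdim}(\H)=d$: no set meeting $\X\setminus S$ is shattered since all hypotheses agree there, while on $S$ the class is $\H_0$ augmented by one free bit, so the largest shattered set has size $(d-1)+1=d$, with equality attained by a shattering witness for $\H_0$ together with $\iota(\star)$.

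The core of the argument is a reduction. Fix any region query family $Q$ over $\X$ with $\mathrm{VCdim}(Q)\le(d-3)/3$ and any algorithm $\A$, run with labeling domain $L=S$ (note $Q,\A$ may depend on $\H$ and $S$, which is fine since \Cref{th general low} is itself robust to the learner knowing $\H_0$ and $S_0$). Because the labeler's answer to $(T,z)$ depends only on $T\cap L=T\cap S$, the pair $(\A,Q)$ pulls back through $\iota$ to an algorithm over $\X_0$ using the query family $Q_0\eqdef\{(\iota^{-1}(T\cap\iota(\X_0)),\,z):(T,z)\in Q\}$, together with at most one extra singleton query reading off the label of $\iota(\star)$. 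Restriction cannot increase VC dimension, so $\mathrm{VCdim}(Q_0)\le\mathrm{VCdim}(Q)\le(d-3)/3=((d-1)-2)/3$, exactly the budget \Cref{th general low} forbids at parameter $d-1$; that theorem hands back an adversarial $h_0^\ast\in\H_0$, which we extend to $h^\ast\in\H$ (arbitrary value on $\iota(\star)$, $-1$ off $S$). Since the interaction of $\A$ on $(\X,S,h^\ast)$ mirrors the interaction of the pulled-back algorithm on $(\X_0,S_0,h_0^\ast)$ — the anchor $\iota(\star)$ costs at most one wasted query, and choosing $h^\ast(\iota(\star))$ adversarially makes every larger region containing it answered ``no'' — the query count and the per-example error carry over verbatim, giving the corollary.

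The one genuinely delicate point, and the reason the constant degrades from $(d-2)/3$ in \Cref{th general low} to $(d-3)/3$ here, is exactly this anchor: committing to a single universal domain (rather than a tailor-made one per $(d,n)$) forces us to spend one extra point when pinning $\mathrm{VCdim}(\H)$ to exactly $d$, hence to invoke \Cref{th general low} at parameter $d-1$ rather than $d$. The accompanying obligation is to verify that the anchor does not secretly help the learner, which it does not for the reason just stated; everything else is bookkeeping made possible by the combinatorial nature of the construction.
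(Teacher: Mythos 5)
Your proposal is correct, and at its core it is the same argument as the paper's: the hard instance of \Cref{th general low} is purely combinatorial, so you embed it into a fixed space, extend every hypothesis by the constant label $-1$ outside the embedded copy, and use that with $L=S$ a query $(T,z)$ acts only through its trace $T\cap S$, whose set family has VC dimension at most that of $Q$. The differences are organizational: the paper fixes $\X=\cup_m \X_m$, the disjoint union of \emph{all} hard instances, and takes $\H=\cup_m H_m$, so that $\H$ depends only on $d$ (paying the $+1$ in VC dimension for the union, which is where its $(d-3)/3$ comes from), whereas you fix an arbitrary infinite $\X$, embed only the instance for the given $(d,n)$, and pay the $+1$ through the anchor $\star$ used to pin the VC dimension at exactly $d$; both constructions satisfy the corollary as quantified, since $\H$ is allowed to depend on $n$. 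Two sentences in your write-up need repair, though neither is fatal. First, the pulled-back algorithm cannot make ``one extra singleton query reading off the label of $\iota(\star)$'': $\star$ has no preimage in $\X_0$ and its label is not determined by $h_0^*$; instead, the adversarial extension fixes $h^*(\iota(\star))$ (say to $-1$) and the simulator hard-codes this value. Second, it is not true that every region containing $\iota(\star)$ is answered ``no'': that holds only for queries proposing the label $-h^*(\iota(\star))$, which the simulator answers itself, while a query $(T,h^*(\iota(\star)))$ must be forwarded as the restricted query on $T\cap\iota(\X_0)$ (answered ``yes'' directly when that restriction is empty, since the $\X_0$-labeler may respond arbitrarily on empty regions). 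With these fixes the simulation is faithful, makes no more queries than $\A$, and the theorem at parameter $d-1$ yields the claim. Finally, note the anchor (and hence your loss from $(d-2)/3$ to $(d-3)/3$) is avoidable: invoking \Cref{th general low} at parameter $d$ and extending by $-1$ outside the embedded copy already gives VC dimension exactly $d$, and the budget $(d-2)/3$ it tolerates dominates the $(d-3)/3$ demanded by the corollary.
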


\section{Efficient Active Learning with Simple Questions for Natural Hypothesis Classes}\label{sec efficient}

In \Cref{sec general up}, we have shown that given a hypothesis class $\H$ with dimension $d$, we can construct a query class $Q$ with dimension $O(d)$, so that a learner can use $Q$ to learn $\H$ with query complexity $O(d\log n)$. However, the learning algorithm we use \Cref{sec general up} is not computationally efficient and works when the labeling domain is the same as the dataset. i.e. $L=S$. Such an assumption might be strong for some applications. For example, if a learner is interacting with a large language model, then the language model cannot know the learner's dataset $S$ in advance and thus will answer the learner's query based on an unknown and potentially much larger superset $L$ of $S$.
In this section, we focus on designing learning algorithms with low query complexity for natural hypothesis classes including the union of $k$ intervals, high dimensional boxes, and $d$-dimensional halfspaces, for which the query complexities are $\Omega(n)$ in the vanilla active learning setting. Our algorithms are not only efficient but also work even when the queries are not answered based on the learner's dataset $S$ but on any unknown superset $L$ of $S$. In particular, we will see that when the hypothesis class has a good structure, the query family $Q$ used by our algorithm can have $O(\log d)$ or even a constant VC dimension. Due to space limitations, we leave the full proofs and detailed discussions to Appendix~\ref{app efficient}.


\subsection{Learning Union of $k$ Intervals}\label{sec interval}
The first hypothesis class we study is the class of the union of $k$ intervals, perhaps one of the simplest classes studied in the active learning literature. In the following theorem, we design an efficient learning algorithm that uses $O(k\log n)$ ``interval'' queries to learn a target hypothesis over an arbitrary set of $n$ examples.
\begin{theorem}\label{th k-subset}
    Let $\X=\R$ be the space of examples and $\H=\{h \mid \exists [a_i,b_i], i \in [k], s.t. h(x)=1 \iff x \in \cup_{i=1}^k [a_i,b_i] \}$ be the class of union of $k$ intervals over $\R$. Let $I$ be the class of intervals over $\R$ and query family $Q=\{(T,z) \mid T \in I, z \in \{\pm 1\}\}$. There is a learner $\A$ such that for every subset of $n$ examples $S$, labeled by any $h^* \in \H$ and for every labeling domain $S \subseteq L$(possibly unknown to $\A$), $\A$ runs in $O((T+n)k\log n)$ time, makes $O(k\log n)$ queries from $Q$ and labels every example in $S$ correctly, where $T$ is the running time to implement a single region query.
\end{theorem}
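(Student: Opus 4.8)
The plan is to reduce the union-of-$k$-intervals problem to a one-dimensional "segment discovery" task and to solve it greedily from left to right, using interval queries to certify that long stretches of consecutive examples share a label. Throughout, sort $S$ as $x^{(1)} < \dots < x^{(n)}$ in $O(n\log n)$ time (this is the only place the ordering of $\R$ matters, and it is robust to $L$ being a superset of $S$, since any interval query $(T,z)$ with $T=[a,b]$ is answered only on $L\cap[a,b]$ and the sub-interval structure respects the order). The target hypothesis $h^*$ partitions $\R$ into at most $2k+1$ maximal monochromatic intervals; restricted to the sorted pool, this induces at most $2k+1$ maximal runs of consecutive examples with a common label. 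The goal is to find all the breakpoints between consecutive runs, and once the runs are known, one query per run (asking whether $[x^{(l)},x^{(r)}]\cap L$ is all $+1$, say) pins down every label. So it suffices to locate each of the $O(k)$ breakpoints using $O(\log n)$ queries apiece.

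First I would maintain a pointer $\ell$ to the left endpoint of the current unexplored run and binary-search for the right endpoint of that run. Concretely, given that $x^{(\ell)}$ starts a run, I issue the query $\bigl([x^{(\ell)}, x^{(m)}], z\bigr)$ for a midpoint index $m$, where $z$ is the (as yet unknown) label of $x^{(\ell)}$; to avoid needing to know $z$ first, I either (i) spend one extra query to learn the label of the single example $x^{(\ell)}$ via the degenerate interval $[x^{(\ell)},x^{(\ell)}]$, or (ii) run the binary search for both $z=+1$ and $z=-1$ in parallel and keep whichever is consistent. A positive answer to $\bigl([x^{(\ell)},x^{(m)}],z\bigr)$ certifies that $x^{(\ell)},\dots,x^{(m)}$ are all labeled $z$ — here is where I must be slightly careful: because the labeler answers on $L\cap[x^{(\ell)},x^{(m)}]$, a "yes" certifies that every pool example in that index range has label $z$ (pool examples are in $L$), which is exactly what I need; a "no" only tells me some example in $L\cap[x^{(\ell)},x^{(m)}]$ has label $\ne z$, but since $h^*$ is a union of $k$ intervals and $x^{(\ell)}$ has label $z$, any label change inside $[x^{(\ell)},x^{(m)}]$ forces a label change among the pool examples too once we have localized to consecutive pool indices — so the binary search on the index set $\{\ell,\dots,n\}$ is monotone and finds the last index $r$ with $x^{(\ell)},\dots,x^{(r)}$ monochromatic, in $O(\log n)$ queries. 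Then set $\ell \leftarrow r+1$ and repeat.

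Since there are at most $2k+1$ runs, this uses $O(k\log n)$ queries total, and each query is an element of $Q = \{(T,z): T\in I, z\in\{\pm1\}\}$ with $I$ the intervals on $\R$, which has VC dimension $2$; the query regions are determined purely by two pool examples and the order, so $Q$ is fixed before seeing $S$. The running time is $O((T+n)k\log n)$: the sort is $O(n\log n)$, and the $O(k\log n)$ queries each cost $O(T)$ to execute plus $O(n)$ bookkeeping, with the extra $O(n)$-per-query slack absorbing the parallel $z=\pm1$ search and the relabeling passes; correctness is because the discovered runs are exactly the maximal monochromatic runs of $h^*$ on $S$, so the final per-run certification queries recover all labels.

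**The main obstacle** I anticipate is the interaction between the unknown labeling domain $L$ and the binary search monotonicity: I need to argue that "the pool examples $x^{(\ell)},\dots,x^{(j)}$ are monochromatic" is a monotone (downward-closed in $j$) predicate that the interval query $\bigl([x^{(\ell)},x^{(j)}],z\bigr)$ exactly decides, despite the query actually probing the superset $L$. The resolution is that a "yes" answer is always a valid certificate for the pool (soundness is free), and a "no" answer on $[x^{(\ell)},x^{(j)}]$ combined with knowing $h^*$ is a union of $k$ intervals and $x^{(\ell)}$ has label $z$ implies there is a sign change in $h^*$ strictly inside the geometric interval, hence — because consecutive pool examples straddle any single sign-change point once $j$ is the first "bad" index — a sign change among $x^{(\ell)},\dots,x^{(j)}$ themselves; so the predicate flips exactly once as $j$ grows and binary search is valid. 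Making this crossing argument fully rigorous (in particular handling the at-most-$k$-intervals structure so that "no" genuinely localizes a pool-level breakpoint rather than a spurious change only among non-pool points of $L$) is the delicate step; everything else is routine.
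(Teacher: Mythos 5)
Your algorithm is essentially the paper's (sort $S$, sweep left to right, binary-search for the longest prefix of the remaining pool whose geometric span the labeler certifies as monochromatic, label it, repeat), and the soundness half of your argument is fine: a ``yes'' answer on $([x^{(\ell)},x^{(m)}],z)$ certifies the labels of all pool points in that range because $S\subseteq L$, and the ``yes/no'' answer is a monotone, once-flipping predicate in $m$ (an opposite-label point of $L$ in a span stays in every larger span), so the binary search is valid. The genuine gap is in the step you yourself flag as delicate, and your proposed resolution of it is incorrect. You claim that when $j$ is the first bad index, a ``no'' on $[x^{(\ell)},x^{(j)}]$ forces a sign change among the pool examples $x^{(\ell)},\dots,x^{(j)}$ themselves, so that the discovered runs are exactly the maximal monochromatic runs of $h^*$ on $S$. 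This is false when $L\supsetneq S$: a single gap between two consecutive pool points can contain \emph{two} boundaries of $h^*$, so the two pool points straddle the sign changes without disagreeing. For example, with $h^*$ positive exactly on $[0,1]\cup[2,3]$, $S=\{0.5,2.5\}$ (both positive) and $L=S\cup\{1.5\}$, the query $([0.5,2.5],+1)$ returns $0$ even though $S$ is monochromatic; your binary search stops inside a monochromatic pool run, and the ``runs'' it discovers are not the pool runs. Consequently your phase count (``at most $2k+1$ runs, hence $O(k)$ phases'') is unsupported: a priori each phase only guarantees labeling at least one point, which alone gives only $O(n\log n)$ queries.

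The fix is the charging argument the paper uses, and it does not require recovering pool-level breakpoints at all. When a phase terminates, it exhibits consecutive pool points $x^{(r)},x^{(r+1)}$ with a ``yes'' on $[x^{(\ell)},x^{(r)}]$ and a ``no'' (for the relevant label, or for both labels in the paper's two-query variant) on $[x^{(\ell)},x^{(r+1)}]$; this witnesses an $L$-point of the opposite label, hence at least one boundary of $h^*$, strictly inside $(x^{(r)},x^{(r+1)}]$. Since successive phases work on disjoint suffixes of the sorted pool, these witnessing gaps are pairwise disjoint, so at most $2k$ phases can end with a ``no'' and there are $O(k)$ phases in total, each costing $O(\log n)$ queries. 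With that replacement your correctness, query complexity, and running-time claims go through exactly as in the paper; the other details you give (learning the label of $x^{(\ell)}$ by a singleton query or by running both labels in parallel, and the VC dimension $2$ of interval queries) match the paper's Algorithm \textsc{FindLeft} and Lemma~\ref{lm left}.
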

We give the proof overview of \Cref{th k-subset} here and leave the full proof for Appendix~\ref{app interval}. The main idea that we use is that, any $h^* \in \H$ partitions $\R$ into $2k+1$ intervals $I_1,\dots,I_{2k+1}$. Examples in the same interval have the same label, while examples in two adjacent intervals have different labels. So, instead of learning $k$ intervals at the same time, it is sufficient to design a learning algorithm that learns examples in $S$ in the left-most interval. 
This can be done easily using interval queries and binary search. We order $S$ by $x^{(1)}<\dots<x^{(n)}$. Suppose $x^{(1)} \in I_1$ and has label $y=-1$. Then no matter which $L$ the labeler has, using $O(\log n)$ interval queries via binary search, we are able we find $i^*$ such that $q([x^{(1)},x^{(i^*)}],-1)=1$ and $q([x^{(1)},x^{(i^*+1)}],-1)=0$. After this, we can safely label example $x^{(1)},\dots,x^{(i^*)}$ by negative. In particular, examples in $I_1 \cap S$ are all labeled in this iteration because
$I_1 \cap S \subseteq [x^{(1)},x^{(i^*)}] \cap S$. By repeating the procedure $O(k)$ times, we perfectly label $S$.


\subsection{Learning High-Dimensional Boxes}\label{sec box}
Our next result, \Cref{th box}, gives an efficient learning algorithm for learning a high dimensional box with low query complexity. The full proof of \Cref{th box} is presented in Appendix~\ref{app box}.

\begin{theorem}\label{th box}
    Let $\X=\R^d$ be the space of examples and $\H=\{\prod_{i=1}^d [a_i,b_i] \mid a_i,b_i \in [-\infty,\infty]\}$ be the class of axis-parallel boxes in $\R^d$ that labels $\X$. There is a query class $Q$ over $\R^d$ with VC dimension $O(\log d)$ and an efficient algorithm $\A$ such that for every set of $n$ examples $S \subseteq \R^d$, every target hypothesis $h^* \in \H$ and for every labeling domain $S \subseteq L$(possibly unknown to $\A$), $\A$ runs in $O((T+n)d\log n)$ time, makes $O(d\log n)$ queries from $Q$ and labels every example in $S$ correctly, where $T$ is the running time to implement a single region query.
\end{theorem}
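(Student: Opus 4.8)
\textbf{Proof proposal for \Cref{th box}.}

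The plan is to reduce learning a $d$-dimensional box to learning $2d$ one-dimensional thresholds, one for each face of the box, while being careful to (a) use a query family with VC dimension only $O(\log d)$ rather than $O(d)$, and (b) tolerate an unknown labeling domain $L \supseteq S$. The target box is $h^* = \prod_{i=1}^d [a_i,b_i]$, so a point $x$ is positive iff $a_i \le x_i \le b_i$ for every coordinate $i$. The natural subproblem is: for a fixed coordinate $i$ and a fixed side, locate the threshold $a_i$ (resp. $b_i$) among the projections of the sample points onto axis $i$. Just as in the union-of-intervals argument of \Cref{th k-subset}, each such threshold can in principle be found by binary search over the $n$ sorted projected values, costing $O(\log n)$ queries per threshold and $O(d\log n)$ queries total. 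The subtlety is that a query needed for this binary search is naturally ``the slab $\{x : x_i \le \tau\}$,'' and the family of all axis-aligned slabs over all $d$ coordinates has VC dimension $\Theta(\log d)$ already if encoded cleverly — but the queries we actually need are not pure slabs, because to certify a label we must intersect a slab with the region already known to be positive (a partial box), and partial boxes over $d$ coordinates have VC dimension $\Theta(d)$. So the first real step is to choose the query family carefully.

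Here is the key idea to keep the VC dimension at $O(\log d)$. Instead of querying with partial boxes, I would process the coordinates one at a time and exploit the fact that once we have correctly classified all sample points, subsequent queries only need to distinguish points that are still ``alive.'' Concretely: maintain a set $A \subseteq S$ of points not yet proven negative; initially $A = S$. We know the positive region is a box; a point is positive iff it survives all $2d$ threshold tests. Sort the points of $A$ by coordinate $1$. Use queries of the form $q = (T, +1)$ where $T = \{x \in \R^d : x_1 \le \tau\}$ is a single halfspace orthogonal to $e_1$ — a query from the family of all $2d$ axis-aligned halfspaces, which has VC dimension $O(\log d)$ (a halfspace $x_i \le \tau$ or $x_i \ge \tau$ is specified by an index $i\in[d]$, a sign, and a threshold; shattering $k$ points forces $2^k \le 2d\cdot(k{+}1)$, giving $k = O(\log d)$). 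A positive answer to $(\{x_1 \le \tau\}, +1)$ means every point of $L$ with first coordinate $\le \tau$ is negative — but that is exactly the information ``$a_1 > \tau$'' restricted to what we can observe, and since $S\subseteq L$ it certifies that all sample points with $x_1 \le \tau$ are negative regardless of what else is in $L$. Binary search on $\tau$ over the sorted first-coordinates of $A$ then pins down the largest prefix we may delete; symmetrically for the upper threshold $b_1$ using $(\{x_1 \ge \tau\}, +1)$. Remove all newly-certified-negative points from $A$, move to coordinate $2$, and repeat. After all $d$ coordinates, every point removed has been certified negative; I claim every remaining point in $A$ is positive, because it lies within $[a_i,b_i]$ in every coordinate (if it violated some $[a_i,b_i]$ it would have been on the wrong side of a threshold we searched). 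The total query count is $2d$ binary searches of depth $O(\log n)$, i.e. $O(d\log n)$, and the running time is $O((T+n)d\log n)$ as claimed.

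The step I expect to be the main obstacle is proving correctness of the deletion rule under an unknown $L$, specifically the claim that binary search with halfspace queries never deletes a positive sample point and eventually deletes every negative one. The delicate case is a negative point $x\in S$ that is negative because, say, $x_j < a_j$ for some coordinate $j$ that we have not yet processed when we are working on coordinate $i$: the query $(\{x_i \le \tau\},+1)$ with $\tau \ge x_i$ will be answered \emph{no} (because $L$ may contain a positive point with $i$-th coordinate $\le \tau$), so $x$ is not deleted during phase $i$ — but it will be deleted in phase $j$, since there $x_j < a_j$ puts it in the certified-negative prefix. So I need a small invariant: at the end of phase $i$, $A$ contains exactly the sample points that are ``correct in coordinates $1,\dots,i$,'' i.e. whose first $i$ coordinates all lie in the respective $[a_\ell,b_\ell]$; establishing this requires showing (i) the binary search on coordinate $i$ correctly identifies the extreme surviving index on each side — which uses monotonicity of the query answers in $\tau$, valid because $\{x_i \le \tau\}$ is nested in $\tau$ — and (ii) a positive point is never on the wrong side of either threshold, hence never deleted, which is immediate since a positive point has $a_i \le x_i \le b_i$. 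A second, more routine obstacle is the VC-dimension bookkeeping for the query family: I would write the query family as $Q = \{(\{x: \langle e_i, x\rangle \le \tau\}, z), (\{x: \langle e_i, x\rangle \ge \tau\}, z) : i\in[d], \tau\in\R, z\in\{\pm1\}\}$ and cite the standard fact that a union of $2d$ one-dimensional threshold families has VC dimension $O(\log d)$ (the label $z$ and the constraint ``$L\cap T=\emptyset \Rightarrow$ arbitrary answer'' do not affect the VC dimension of the region family, which is what the theorem measures). Assembling these pieces gives the theorem; none of the remaining calculations is more than elementary.
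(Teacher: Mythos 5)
Your proposal is correct and is essentially the paper's own proof: binary search over the sorted coordinate values with axis-aligned halfspace (slab) queries for each of the $2d$ faces, monotonicity of the answers in the threshold (nestedness), correctness under an unknown $L \supseteq S$ because a ``yes'' answer on a region containing the relevant sample points certifies their labels and every negative sample point is caught in the phase of its violated coordinate, and the $O(\log d)$ VC bound for the union of $2d$ one-dimensional threshold families, for a total of $O(d\log n)$ queries and $O((T+n)d\log n)$ time. Two cosmetic slips, neither a gap: the slab queries should carry the label $-1$ (``is everything in this outer slab negative?''), which is how you in fact reason about them throughout, and your phase invariant should be the one-sided containment $A\subseteq\{x\in S : x_\ell\in[a_\ell,b_\ell]\ \text{for all } \ell\le i\}$ rather than equality (under an unknown $L$ a negative point whose first $i$ coordinates are ``correct'' may already be certified negative in an earlier phase), but the direction you actually use in the final correctness claim is exactly the one that holds.
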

The idea behind \Cref{th box} is similar to that of \Cref{th k-subset}. Instead of learning the target box $h^*=\prod_{i=1}^d [a^*_i,b^*_i]$ directly, we learn each boundary $a^*_i,b^*_i$ separately. Let $b^*_i$ be a boundary of $h^*$. Suppose we can learn some $\hat{b}_i \le b^*_i$ such that for every $x \in S$, if $x_i>\hat{b}_i$ then $x$ is labeled by $-1$. Then the box $\hat{h}=\prod_{i=1}^d [\hat{a}_i,\hat{b}_i]$ perfectly label $S$. This is because if an example $x \in S$ is labeled negative by $\hat{h}$, then $x$ must violate one of the constraints of $\hat{h}$ and have true label $-1$. On the other hand, since $\hat{h} \subseteq h^*$, every example labeled positive by $\hat{h}$ must also have true label $+1$. In fact, we can learn such a $\hat{b}_i$ via region queries of the form $(\{x \mid x_i \ge c\}, -1)$. If we order $S$ such that $x^{(1)}_i \le \dots \le x^{(n)}_i$, then we can use binary search with $O(\log n)$ queries to find the $i^*$ such that $q(\{x \mid x_i \ge x^{(i^*)}_i\}, -1)=0$ but $q(\{x \mid x_i \ge x^{(i^*+1)}_i\}, -1)=1$. We will show in Appendix~\ref{app box} that $\hat{b}_i=x^{(i^*)}_i$ is a good approximation of $b^*_i$ that we want no matter which $L$ the labeler uses. This gives the idea of the query complexity in \Cref{th box}. In particular, the query class we use is defined by the set of axis-aligned halfspaces, which has a VC dimension $O(\log d)$.

\subsection{Learning Arbitrary High-Dimensional Halfspaces}\label{sec halfspace}
Our central results for efficient learning are on half-spaced learning problems. Before this work, even assuming the labeling domain $L=S$, there are no known efficient algorithms for the class of halfspaces that can achieve a query complexity of $\poly(d,\log n)$, even using arbitrarily complicated query classes.
Previous work by \cite{bressan2022active}, assumes each example in $S$ has a margin of $\gamma$ with respect to the target $w^*$ and some counter-example $x \in S \cap T$ with label $-y$ is returned if $q(S \cap T,y)=0$. The query complexity of their algorithm is $O(d\log (d/\gamma))$ and could be potentially $\Omega(n)$ if $\gamma$ is very small. 
However, we want to point out that if we are allowed to communicate arbitrary subsets of the dataset $S$, then by reducing the active learning problem to self-directed learning\cite{goldman1994power}, it is easy to design an efficient halfspace learning algorithm with an expected query complexity $O(d\log^2 n)$ using the idea of \cite{haussler1994predicting} on binary prediction over random points. We summarize the discussion as the following theorem and leave the full proof and more detailed discussion for Appendix~\ref{app self}.

\begin{theorem}\label{th SVM}
Let $\X = \R^d$ be the space of examples and $\H = \{w \mid w \in S^{d-1}\}$ be the class of homogeneous halfspaces in $\R^d$ that labels $\X$. Let $Q=\{(T,z) \mid z \in \{\pm 1\},T\subseteq 2^{\R^d}\}$ over $\R^d$ be the query class that contains any subset of $\R^d$. There is an efficient algorithm $\A$ such that for every set of $n$ examples $S$, labeled by any $w^* \in \H$ and for every labeling domain $S \subseteq L$ (possibly unknown to $\A$), $\A$ runs in $O((T+B)d\log^2 n)$ time, makes $O(d\log^2 n)$ queries from $Q$ in expectation and labels every example in $S$ correctly, where $T$ is the running time to implement a single  query and $B$ is the bit complexity of $S$.
\end{theorem}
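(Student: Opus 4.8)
The plan is to reduce the active learning problem to self-directed learning of homogeneous halfspaces and then invoke a binary-prediction bound in the style of Haussler, Littlestone and Warmuth. Recall the self-directed (or ``self-directed mistake-bound'') learning model: the learner repeatedly picks an unlabeled example $x \in S$ of its own choosing, predicts a label, and then is told the true label, incurring a mistake if the prediction was wrong; the goal is to bound the number of mistakes made over a worst-case ordering chosen by the learner. The key point is that in our region-query model with the all-powerful query class $Q = 2^{\R^d}$ we can simulate a single step of a self-directed learner using $O(1)$ region queries: having chosen $x$ and a predicted label $\hat y$, the learner submits the query $(\{x\}, \hat y)$; a ``yes'' answer confirms the prediction and a ``no'' answer reveals the true label is $-\hat y$. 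Note this simulation is robust to the labeling domain $L \supseteq S$: since $x \in S \subseteq L$, the singleton query $(\{x\}, \hat y)$ is answered exactly according to $h^*(x)$ regardless of $L$. So any self-directed learner making $M$ mistakes on $S$ yields an active learner making $|S| = n$ queries only if we query every point; we need more, namely to \emph{batch} the confirmed predictions.

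The second ingredient is batching: once the self-directed learner has a current hypothesis $w$ consistent with all revealed labels, rather than querying points one at a time we group the as-yet-unlabeled points of $S$ by their predicted label under $w$ and issue the two region queries $(\{x \in S : \la w, x\ra \ge 0,\ x \text{ unlabeled}\}, +1)$ and the analogous one for $-1$. If both come back ``yes'' we are done and have labeled all of $S$. If one comes back ``no'', we know \emph{some} unlabeled point is misclassified by $w$, and we can find one such point by binary search over that batch using $O(\log n)$ singleton-style region queries (order the batch arbitrarily, query prefixes). This costs $O(\log n)$ queries and produces exactly one mistake in the self-directed sense. Hence the total query complexity is $O((M+1)\log n)$ where $M$ is the self-directed mistake bound for homogeneous halfspaces in $\R^d$ over the specific (adversarially ordered, but learner-controlled) instance $S$.

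The heart of the matter is therefore bounding $M$. The approach is to use the random-order / ``prediction with expert advice over a sample'' trick of \cite{haussler1994predicting}: one shows that for a class of VC dimension $d$, there is a prediction strategy (e.g. the one-inclusion graph predictor, or the 1-pass consistent-hypothesis predictor) whose expected number of mistakes on a \emph{uniformly random} permutation of any $n$-point sample is $O(d \log n)$, and in fact the self-directed learner, being allowed to choose the order, can do at least as well --- it can simulate the random order, or better, greedily pick the point on which the current version space is most balanced. Concretely, I would run the following self-directed strategy: maintain the version space $V \subseteq \H$ of halfspaces consistent with revealed labels; at each step pick the unlabeled $x$ minimizing the imbalance $\big|\,2\,\Pr_{w \sim V}[\la w,x\ra \ge 0] - 1\,\big|$ and predict the majority label; charge a potential $\log |V|$ (using a suitable finite-sample surrogate for $V$, e.g. the projection of $\H$ onto $S$, which by Sauer's lemma has size $n^{O(d)}$). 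A mistake on a point whose imbalance was small shrinks the (surrogate) version space by a constant factor, giving $O(d \log n)$ mistakes; a long run of points with large imbalance means their majority predictions are all forced and correct once we ever make progress --- this is exactly the ``at some point the class must be balanced'' argument reused from the proof sketch of \Cref{th general up}. Combining, $M = O(d\log n)$, so the active learner makes $O(M \log n) = O(d \log^2 n)$ region queries in expectation.

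The main obstacle I anticipate is the bookkeeping that makes the $O(d\log^2 n)$ an \emph{expectation} rather than a worst-case bound, and making the version-space potential argument rigorous with an appropriate finite surrogate: $V$ itself is an infinite subset of the sphere $S^{d-1}$, so ``$\Pr_{w\sim V}$'' and ``$|V|$'' must be replaced by a combinatorial proxy such as the number of distinct sign patterns $\{(\sgn\la w,x^{(i)}\ra)_{i} : w \in V\}$ realizable on $S$, and one must check that the greedy balanced-point choice plus the binary-search batching interact correctly with this proxy (in particular that binary search only ever issues queries whose answers are determined by $h^*$ on points of $S$, which holds since all such queries are over subsets of $S \subseteq L$). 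A secondary point is verifying the claimed running time $O((T+B)d\log^2 n)$: each step requires computing, or approximating, the balanced point, which can be done by linear programming / sampling from the version space polytope in $\poly(d, B)$ time where $B$ bounds the bit complexity of $S$; I would fold that into the per-query cost $T$ and the term $B$. Finally I would remark, as the paper does, that this result is really a warm-up justifying why the \emph{restricted}-VC-dimension region queries of the later halfspace theorem are the interesting contribution, since here $Q$ is allowed to be arbitrarily complex.
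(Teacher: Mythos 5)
Your reduction is essentially the paper's: batching the current predictions into two region queries $(S^+,+1)$, $(S^-,-1)$ and, on a ``no'' answer, binary searching over prefixes (all subsets of $S\subseteq L$, so the answers are determined by $h^*$ on $S$) to locate one mistake with $O(\log n)$ queries is exactly the content of \Cref{th online active} and \Cref{alg SVM}. The gap is in the other half of the argument, the \emph{efficient} self-directed learner with mistake bound $O(d\log n)$. You briefly mention the random-order idea of \cite{haussler1994predicting} but then commit to a version-space/halving strategy: predict the majority over the projection of $\H$ onto $S$ and use $\log|H_S|=O(d\log n)$ as a potential. Combinatorially this is fine (indeed you do not even need the ``most balanced point'' selection, and the claim that a long run of high-imbalance points has forced \emph{correct} majority predictions is not true as stated --- it is just unnecessary, since any majority mistake halves $H_S$). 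The problem is computational: the majority vote at a point $x$ requires comparing the number of sign patterns of halfspaces on $S$ consistent with the revealed labels that assign $+1$ versus $-1$ to $x$, i.e.\ counting cells of a hyperplane arrangement restricted to a cone, which is not known to be doable in $\poly(d,n)$ time. Your proposed fix --- sampling from the version-space polytope --- replaces the sign-pattern count by a volume, and then the Sauer-based potential argument collapses: the final version space can have arbitrarily small volume, so the number of constant-factor volume shrinkages is not bounded by $O(d\log n)$. Hence the ``efficient algorithm'' part of the theorem is not established by your argument.

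The paper closes exactly this gap differently (\Cref{th random}): randomly permute $S$ and predict each point with the SVM solution on the previously seen (labeled) points. By exchangeability, the $i$-th point in a random order is one of the at most $d$ support vectors of the first $i$ points with probability at most $d/i$, and a mistake at step $i$ forces it to be a support vector, so the expected number of mistakes is $\sum_i d/i = O(d\log n)$, while each prediction is a single convex program solvable in $\poly(B)$ time. This is also where the ``in expectation'' in the statement comes from (the learner's own randomness), rather than from any bookkeeping in the reduction. If you replace your halving learner with this random-order SVM learner and keep your reduction, you recover the paper's proof; alternatively, you would need to supply an efficient (approximate) majority oracle over the halfspace version space together with a potential that it provably halves, which you have not done.
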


Although efficiently learning a halfspace with an arbitrarily complicated query class is easy, designing an efficient learning algorithm using a query class with low VC dimension is significantly more challenging, especially when a query $(T,z)$ is answered based on an unknown superset $L$ of $S$. 

There are several difficulties with this problem.
First, as $(T,z)$ is checked over $L \supseteq S$, there is no way to find an example $x \in S$ with label $-z$, when $q(T,z)=0$. It could be the case that every example in $T \cap S$ has label $z$ but some hidden $x \in T \setminus S$ with label $-z$ makes $q(T,z)=0$. Such difficulty makes it very hard to learn from mistakes without sending the whole dataset to the labeler, which results in a very complicated query family. The second difficulty is how to design the query class so that we can get enough information from a single query. As $L$ is unknown to the learner if a region $T$ is too large, it is very likely that $T$ contains both positive examples and negative examples in $L$, and such queries $(T,z)$ may always return $0$ to the learner, sending no information. On the other hand, if a region $T$ is very small, then each query can only send us very little information because if $L=S$, each query can only provide information about very few examples in $S$. We overcome the above difficulties and obtain the following theorem.


\begin{theorem}\label{th halfspace}
    Let $\X=\R^d$ be a space of examples and let $\H = \{w \mid w \in S^{d-1}\}$ be the class of homogeneous halfspaces in $\R^d$ that labels $\X$.
    There is an algorithm and a query class $Q$ with VC dimension $\Tilde{O}(d^3)$ such that for every subset of $n$ examples $S \subseteq \X$, every labeling domain $L$ with $S \subseteq L$ and every target hypothesis $w^* \in \H$ and every $\alpha \in (0,1)$, it in expectation makes $O(d^3\log^2d \log (1/\alpha))$ queries from $Q$, runs in $\poly(d,n,T)$ time and labels $(1-\alpha)$ fraction of examples in $S$ correctly, where $T$ is the running time of implement a single region query from $Q$. 
    In particular, the algorithm makes $O(d^3\log^2d\log n)$ queries from $Q$ and labels every example in $S$ correctly in time $\poly(d,n,T)$.
\end{theorem}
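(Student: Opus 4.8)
Since the hypothesis class is homogeneous, after replacing each $x$ by $x/\|x\|$ I may assume $S \subseteq S^{d-1}$ and that $w^*$ cuts the sphere into the hemispheres $H^+=\{x:\langle w^*,x\rangle\ge 0\}$ and $H^-=\{x:\langle w^*,x\rangle<0\}$; labelling $S$ correctly means deciding, for every $x\in S$, which hemisphere contains it. The plan is to reduce the theorem to a single \emph{progress step}: given a set $U$ of currently unlabelled points, use $\Tilde O(d^3)$ region queries and $\poly(d,|U|,T)$ time to correctly label (and never mislabel) a constant fraction of $U$. Iterating the progress step until $U=\emptyset$ takes $O(\log n)$ rounds and labels all of $S$ correctly, which gives the second displayed bound; stopping after $O(\log(1/\alpha))$ rounds leaves at most a $(1-c)^{\,O(\log(1/\alpha))}\le\alpha$ fraction of $S$ unlabelled, which I then label arbitrarily, giving the first bound. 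The ``in expectation'' comes from randomised subroutines (random sampling to locate the heavy regions) used inside a progress step.

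The core of the argument --- and the step I expect to be the main obstacle --- is the progress step, whose difficulty is the \emph{unknown labelling domain} $L\supseteq S$. The template behind \Cref{th general up} and the earlier mistake-based constructions is to submit a region, receive a mistake, and halve the version space; this breaks here because a region $T$ that is monochromatic on $S$ need not be monochromatic on the larger $L$, so a ``no'' answer to $(T,z)$ carries almost no information --- there is no counter-example, and even the assertion ``$T\cap S$ is not entirely $z$'' may be false. The fix is to query only regions that are \emph{genuinely one-sided}: if $T\subseteq\R^d$ satisfies $T\subseteq H^+$ (resp.\ $T\subseteq H^-$) then $q(T,+1)=1$ (resp.\ $q(T,-1)=1$) for \emph{every} admissible $L$, so such a region can always be certified, and a ``yes'' answer to $(T,z)$ always licenses labelling $S\cap T$ by $z$. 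Accordingly I would take the query family $Q$ to be polytopes around the origin with $\Tilde O(d^2)$ facets (cones and thin slabs being the basic cases), and the job of a progress step is to exhibit $\poly(d)$ such polytopes that are each genuinely inside $H^+$ or inside $H^-$ and together cover a constant fraction of $U$; querying each with both labels ($\Tilde O(d^3)$ queries) certifies the good ones and labels the points of $S$ they contain.

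To produce these polytopes without knowing $w^*$ I would run an adaptive, binary-search-type search rather than a fixed space partition. A space-partition scheme (a simplicial partition or cutting of $U$ into $r$ cells) provably fails: a hyperplane crosses $\Omega(r^{1-1/d})$ of the $r$ cells, which is a constant fraction of the cells once $r=\poly(d)$, so one cannot certify a constant fraction of $U$ unless $r$ is exponential in $d$. Instead I maintain a refined estimate of $w^*$ --- equivalently a shrinking set of candidate normals consistent with the answers so far --- together with the polytope $P$ of points on which all surviving candidates agree; $P$ has $\poly(d)$ facets and, by construction, $P\cap H^+$ and $P\cap H^-$ are genuinely one-sided under the current hypothesis. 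I query the heaviest such sub-polytope (heaviness estimated by random sampling of $U$): a ``yes'' labels a batch and I recurse on what remains, while a ``no'' certifies that $w^*$ disagrees with some surviving candidate inside $P$, which lets me append one more separating halfspace and shrink the candidate set. Since the relevant halfspace is pinned down by $O(d)$ such refinements and $O(\log n)$ resolution levels suffice, the search closes after $\poly(d)$ queries; the polytopes involved have $\Tilde O(d^2)$ facets, so the VC dimension of $Q$ is $\Tilde O(d^3)$ by the standard bound that an intersection of $m$ halfspaces in $\R^d$ has VC dimension $O(md\log m)$. The extra factor of $d$ over the $\Tilde O(d^2)$ bound attainable with counter-examples is exactly the price of a binary answer: a returned violating point would fix the refinement direction, whereas with only ``no'' one must search over $\Theta(d)$ candidate directions before shrinking.

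Finally, every subroutine --- heavy-region estimation by sampling, LP feasibility for a consistent $w$, version-space maintenance, and the arithmetic on the polytopes --- runs in $\poly(d,n)$ time, and each of the $O(\log n)$ (or $O(\log(1/\alpha))$) rounds issues $\poly(d)$ queries, which gives the stated $\poly(d,n,T)$ running time. The one thing that must be done carefully in the full proof is to maintain throughout the coupled invariant that every queried polytope is simultaneously (i) drawn from a fixed family of VC dimension $\Tilde O(d^3)$, (ii) genuinely one-sided, so that its answer is robust to the unknown $L$, and (iii) heavy enough that $O(\log n)$ successful certifications exhaust $S$; reconciling (i)--(iii) is exactly what forces the adaptive polytope construction and is where the technical work lies.
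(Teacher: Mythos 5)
You have correctly isolated the central difficulty (an unknown labeling domain $L\supseteq S$ means a ``no'' answer comes with no counter-example, so you may only act on regions whose answers are robust), and your high-level shape --- certify one-sided heavy regions, and use ``no'' answers to refine a candidate normal --- is in the same spirit as the paper's construction. But the progress step, which you yourself flag as the crux, is asserted rather than proved, and as stated it does not go through. The claims that ``the relevant halfspace is pinned down by $O(d)$ such refinements'' and that ``$O(\log n)$ resolution levels suffice'' have no argument behind them, and for an arbitrary dataset they are false as a function of $d$ and $n$ alone: nothing prevents all the unlabeled points from lying at arbitrarily small margin from the boundaries of the surviving candidates, in which case the polytope of agreement of the version space (or any $\poly(d)$-facet one-sided region) need not capture a constant --- or even $1/\poly(d)$ --- fraction of $U$ after polynomially many refinements; the resolution needed depends on the geometry (margins/bit complexity) of $S$, not on $n$. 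This degeneracy is exactly what the paper's proof is built to remove: it first applies Forster's transform (\Cref{th forster}, \Cref{lm isotropic}) to place a $k/d$ fraction of the data in approximate radially isotropic position, which \emph{guarantees} that for every direction a $1/(4k)$ fraction of points has margin $\Omega(1/\sqrt{k})$, and then it runs a robust modified perceptron (\Cref{th perceptron}): query the large-margin slab of the current $w_t$; on ``no'', binary-search thin slabs and then $d-1$ orthogonal directions to localize a $1/\poly(d)$-diameter box containing a misclassified point of $L$, and update with \emph{any} point of that box, the point being that updates with $(x_t\cdot w_t)(x_t\cdot w^*)\le 1/\poly(d)$ still converge in $O(d\log d)$ steps. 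Your proposal contains no substitute for either ingredient (no structural guarantee that heavy one-sided regions exist relative to the current hypothesis, and no convergence argument for the refinement loop), so the claimed $\tilde O(d^3)$ queries per constant-fraction progress step is unsupported; note also that even the paper only labels a $\Theta(1/d)$ fraction per round, paying $O(d\log(1/\alpha))$ rounds.

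Two smaller points. First, your empty-region issue is real but fixable exactly as in the paper: union every queried region with a known labeled point $x'\in S$ so the labeler can never answer arbitrarily; your sketch never addresses spurious answers when a refined region misses $L$. Second, your VC accounting (polytopes with $\tilde O(d^2)$ facets) is not where the paper's $\tilde O(d^3)$ comes from: the queries are regions with only $O(d)$ constraints, but they must be pulled back through the Forster map $x\mapsto Ax/\|Ax\|$, turning each constraint into a degree-$2$ polynomial inequality, whence $\tilde O(d^3)$; the extra factor of $d$ over the counter-example model is due to the binary-search localization, matching your intuition but for a different structural reason.
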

We want to remark that the query class $Q$ we use has a VC dimension $\Tilde{O}(d^3)$. Such a dependence could be improved to $\Tilde{O}(d^2)$, if an example $x \in T \cap L$ with label $-z$ is also returned when $q(T,z)=0$. For a more detailed discussion, we point the reader to Appendix~\ref{app main}.

A particularly surprising part of our result is that if we only want to perfectly $(1-\alpha)$ fraction of the examples in $S$, then the query complexity of our algorithm even does not depend on the size of $S$.  
We present the full proof of \Cref{th halfspace} in Appendix~\ref{app main} and give the intuition of why it is possible to get such a result. We start by assuming our dataset $S \subseteq S^{d-1}$ has the following nice property. For every $w \in S^{d-1}$, $\beta$-fraction of the examples in $S$ have margin $\gamma$ with respect to $w$. i.e. $|w \cdot x| \ge \gamma$. We create an $\gamma/2$-cover, $\mathcal{N}=\{u_1,\dots,u_\ell\}$ for $S^{d-1}$ and associate a ball $B(u_i)$ with radius $\gamma/2$ for each $u_i$. Then each $x \in S$ must belong to some $B(u_i)$. Furthermore, if example $x \in B(u_i)$ has margin $\gamma$ with respect to $w^*$, then every point inside $B(u_i)$ has the same label as $x$. Since $\beta$ fraction of the examples in $S$ have $\gamma$ margin with respect to $w^*$, if we make 2 queries for each $B(u_i)$, then we can safely label at least $\beta n$ examples in $S$. So, if this margin assumption recursively holds after we remove examples we have labeled, we can repeat such a procedure $O(\log n)$ times and finally perfectly label every example in $S$. 
However, such an intuition does not directly lead to efficient learning algorithms. There are two issues we need to overcome.
First, the above margin assumption in general can not be satisfied recursively and sometimes is even not satisfied by the original dataset $S$. Second, even if $\gamma=1/\poly(d)$, $(1/\gamma)^{O(d)}$ queries have to be made each round, due to the large size of $\mathcal{N}$, which is not computationally efficient. We now give a sketch of how to address these two issues.

The first issue can be overcome with Forster's Transform \cite{forster2002linear}. Roughly speaking, given any set of $n$ examples $S \subseteq \R^d$, Forster's transform finds a subspace $V$ of dimension $k$ containing at least $k/d$ fraction of examples in $S$ and a matrix $A$ such that $f_A(S \cap V) = \{f_A(x):=Ax/\norm{Ax} \mid x \in S \cap V\}$ satisfies the above margin assumption with $\gamma=1/(2\sqrt{k})$ and $\beta=1/(4k)$. In particular, \cite{diakonikolas2021forster,diakonikolas2023strongly} shows that given any set of $n$ examples $S$, we can compute in polynomial time a Forster's transform for $S$. This gives us a way to recursively find a large fraction of examples that satisfy the margin assumption and solve the first issue. So, for now, we assume $S$ satisfies the margin assumption with $\gamma=1/(2\sqrt{d})$ and $\beta=1/(4d)$.

The technique we use to overcome the second issue is inspired by the modified perceptron algorithm used by \cite{blum1998polynomial}. Instead of creating a cover for $S$ and doing a brute-force search, we will use queries to implement the modified perceptron algorithm to learn a halfspace $\hat{w}$ that can correctly label every example that has a large margin with respect to $\hat{w}$.
The modified perceptron algorithm works as follows, it maintains a hypothesis $w_t$ and makes an update $w_{t+1} = w_t-x_t(x_t \cdot w_t)$ if $x_t$ is a point that is misclassified by $w_t$. Furthermore, if every $x_t$ we use for an update has a margin $\Omega(1/\sqrt{d})$ with respect to $w_t$, then after $O(d\log d)$ updates, each example with a margin $\Omega(1/\sqrt{d})$ with respect to $w_t$ is correctly classified by $w_t$. As we mentioned previously, finding such an example where we make a mistake is hard. However, we will show that using an $x_t$ such that $(x_t\cdot w_t)(x_t\cdot w^*) \le 1/\poly(d)$ to make an update is enough to achieve the same guarantee. In particular, such an $x_t$ can be found using binary search together with $O(d\log d)$ region queries that are defined by $O(d)$ linear inequalities. To see why this is true, consider the region $T=\{x \mid v_t \cdot x \ge 1/(2\sqrt{d})\}$, where $v_t$ is the unit vector parallel to $w_t$. According to the margin assumption, a large fraction of the examples are contained in $T$. So if $q(T,1)=1$, we safely label a lot of examples correctly. Otherwise, there is at least one point in $T$(not necessarily in $S$) that is misclassified by $w_t$ and if we find such a point we can use it to make a perceptron update. In this case, we partition the region $T$ into small strips $T_i:=\{x \mid v_t \cdot x \in [a_i,b_i]\}$, where $\abs{b_i-a_i} \le 1/\poly(d)$. With binary search, we can use $O(\log d)$ queries to find one such $T_i$ that contains one point that is misclassified by $w_t$. Now, denote by $u_1,\dots,u_{d-1}$ a standard basis of the subspace orthogonal to $w_t$. Using the same binary search approach over $T_i$ for each direction $u_i$, we will finally find a small box $B \subseteq T_i$ with diameter $1/\poly(d)$ that contains at least one point that is misclassified by $w_t$. Since $B$ has a diameter only $1/\poly(d)$, this implies that each point $x_t$ in $B$ is very close to the decision boundary and satisfies $(x_t\cdot w_t)(x_t\cdot w^*) \le 1/\poly(d)$. So, we can choose any point in $B$ to make a perceptron update and after doing this $O(d\log d)$ rounds, we learn a $w_t$ that safely classifies many examples correctly. We remark that there is still a small issue in the above analysis. Since $L$ is unknown to our algorithm, it could be the case during the binary search a region $Z$ we query has an empty intersection with $L$, and an undesirable answer is returned. This issue can be overcome with the following trick. We first query the label of an example $x \in T \cap S$. If $x$ is misclassified by $w_t$, we immediately make an update. Otherwise, every time we make a query $(Z,y)$, we can instead query $(Z \cup \{x\},y)$, which prevents us from querying an empty region and does not make a query more complicated.

So far, we have given an overview of why $\Tilde{O}(d^3\log(1/\alpha))$ queries suffice to correctly label $1-\alpha$ fraction of examples in $S$. Finally, it remains to bound the VC dimension of the query class we use. Recall that the modified perceptron algorithm we used is implemented over the space under the transform $f_A(\cdot)$. As we will discuss in Appendix~\ref{app main} since the target hypothesis is a halfspace, the labels of points are preserved by Forster's transform. So, every time we make a query $(Z,y)$ in the modified perceptron algorithm, the actual query we should make is $(\{x \in V \mid f_A(x) \in Z\},y)$. As we discussed above, $Z$ is a set of $O(d)$ linear inequalities. So, the query class we use is defined by $O(d)$ degree-2 polynomial inequalities, which has VC dimension $\Tilde{O}(d^3)$.


\section{Conclusion and Future Directions}
The fast development of machine learning has not only resulted in many real applications but has also changed the learning paradigm itself. The success of foundation models makes it easier and faster for the learner to get feedback for more complicated questions, turning the learning paradigm from passively learning from labeled data to actively learning from interactions. 
In this work, we initiate the study of active learning with region queries, a specific type of such interaction. We summarize our contribution and list several interesting future directions as open questions.

 An important novelty of this work is using the VC dimension as a measure of the complexity of queries. As we show in the paper, when the learner and the expert share the dataset $S$, the VC dimension gives a good tradeoff between the complexity of the query class and the query complexity of the learning algorithms.
\emph{ Can VC dimension be used to measure the complexity of other learning problems that involve interaction and communication such as distributed learning \cite{balcan2012distributed,kane2019communication}?
} We think this would be an interesting direction to investigate.

To actively learn a hypothesis class $\H$ with $O(\log n)$ queries, a query class with VC dimension $O(d)$ is enough. On the other hand, we have also seen that for some hypothesis classes with good structure, we can learn it with a query class with VC dimension $O(\log d)$ or even $O(1)$. It is natural to ask \emph{which hypothesis class can be learned with a query class with $o(d)$ VC dimension?
} Studying the query complexity of active learning algorithms using a fixed query class would be also an interesting direction.

For several natural hypotheses classes, we design simple query classes and efficient learning algorithms. Surprisingly, these learning algorithms even work when the dataset is not shared between the learner and the labeler. \emph{Does such a phenomenon hold for general hypothesis classes?
} It is important to understand such a question since the assumption that the learner and the labeler share the knowledge of $S$ does not always hold for some real applications.

 Another important direction is learning with noisy queries. In this paper, we only study the realizable cases, assuming each query is answered correctly. \emph{Can we design learning algorithms robust to wrong answers in their queries?}



\section*{Acknowledgments}

This work was supported by the NSF Award CCF-2144298
(CAREER).

\bibliography{mydb}
\bibliographystyle{alpha}

\newpage

\appendix

\section{Notation and Preliminaries}

Let $\X$ be an example space. A hypothesis class $\H$ is a set of binary functions $h: \X \to \{\pm 1\}$. A hidden true hypothesis $h^* \in \H$ assigns a positive or negative label $y(x)=h^*(x)$ to each $x \in \X$.

A region query is a pair $(T,z)$, where $T \subseteq \X$ is a region in $\X$ and $z \in \{\pm 1\}$ is a proposed label. A region query family $Q$ is a set of region queries. We will define $\dom(Q):=\{T \mid (T,z) \in Q, z \in \{\pm 1\} \}$ the set of regions used in a query in $Q$. 
The complexity of a query family is defined by the VC dimension of the set family that $Q$ uses. 

\begin{definition}[VC Dimension of A Query Class]
Let $\X$ be a space of example and $C \subseteq 2^\X$ be a set family over $\X$. The VC dimension $VC\dim(C)$ of $C$ is defined as the largest number $d$ such that there exists a set $S$ of $d$ examples such that $\card{\{c \cap S \mid c \in C\}} = 2^d$. Let $Q$ be a family of region query family $Q$ over $S$. The VC dimension of $Q$ is defined as 
\begin{align*}
    VC\dim(Q): = VC\dim(\{T \mid (T,z) \in Q, z \in \{\pm 1\} \}) = VC\dim(\dom(Q)).
\end{align*}
\end{definition}

A learning process is a sequence of interactions between a learning algorithm $\A$ and a labeler. The learning algorithm $\A$ is given the hypothesis class $\H$, a dataset $S \subseteq X$ of $n$ examples, and a region query family $Q$. The labeler is given a labeling domain $L$ such that $S \subseteq L$. In a single round of interaction, the learning algorithm $\A$ submits a query $(T,z)$ to the labeler based on any information $\A$ received so far. The labeler returns an answer $q(T,z) \in \{0,1\}$ of the query to $\A$. Here, $q(T,z)=1$ if $\forall x\in T \cap L$, $y(x)=z$. In particular, if $T \cap L = \emptyset$, an arbitrary answer can be returned by the labeler.
At the end of the learning process, the learning algorithm outputs a set of labeled examples $O=\{(x,\hat{y}(x)) \mid x \in S' \subseteq S\}$. For $\alpha \in [0,1)$, we say $\A$ labels $1-\alpha$ fraction of $S$ if $\card{O} \ge (1-\alpha) n$ and for each $(x,\hat{y}(x)) \in O$, $\hat{y}(x) = y(x)$. In particular, if $\alpha=0$, we say $\A$ perfectly labels $S$.


\paragraph{Some Facts on VC Dimension}
We list some properties of VC dimension that will be frequently used during the proof.

(i) Let $C_1$, $C_2$ be two set families over a space of examples $\X$ such that $VC\dim(C_1)=d_1$ and $VC\dim(C_2)=d_2$. Then $VC\dim (C_1 \cup C_2) \le d_1+d_2+1$.

(ii) Let $C$ be a set family over a space of examples $\X$ such that $VC\dim(C)=d$. The $k$-fold unions of $C$ and $k$-fold intersections of $C$ is defined as 
\begin{align*}
    C^{k\cup}  :=\{\cup_{i=1}^k c_i \mid c_i \in C\}, C^{k\cap}  :=\{\cap_{i=1}^k c_i \mid c_i \in C\}.
\end{align*}
Then $VC\dim(C^{k\cup}) \le O(dk\log k)$ and $VC\dim(C^{k\cap}) \le O(dk\log k)$.


\section{Missing Details in \Cref{sec general}}\label{app general}

\subsection{Proof of \Cref{th general up}} \label{app general up}

In this section, we prove \Cref{th general up}, which shows every hypothesis class with VC dimension $d$ can be learned with a query class with VC dimension $O(d)$ with an information-theoretic optimal query complexity. To remind the reader, we restate \Cref{th general up} here.

\begin{theorem}(Restatement of \Cref{th general up})
   Let $\X$ be a space of example and $\H$ be a hypothesis class over $\X$ with VC dimension $d$.  
   There is a region query family $Q$ over $\X$ with VC dimension $O(d)$ and a learning algorithm $\A$ such that for any set of $n$ examples $S \subseteq \X$ labeled by any true hypothesis $h^* \in \H$,
   $\A$ makes $O(d \log n)$ region queries from $Q$ and correctly label every example in $S$, if the labeling domain $L=S$.
\end{theorem}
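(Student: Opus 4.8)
The plan is to fix once and for all a linear order $\preceq$ on $\X$ — whose existence is exactly the well-ordering theorem — and to let $Q$ consist of every region query $(T,z)$, $z\in\{\pm1\}$, whose region has the form $T=\{x\in\X:\,a\preceq x\preceq b\}\cap h^{-1}(z')$ for some $\preceq$-interval $[a,b]$, some $h\in\H$, and some $z'\in\{\pm1\}$; in particular $Q$ is determined by $\H$ and the fixed order $\preceq$ alone, never by $S$. For its complexity, note that $\dom(Q)$ lies in the family of pairwise intersections of $\{h^{-1}(z'):h\in\H,\,z'\in\{\pm1\}\}$ — which has VC dimension at most $2d+1$, being a union of two families each of VC dimension $d$ (complementation preserves VC dimension) — with the family of $\preceq$-intervals, of VC dimension $2$; the standard VC-dimension estimates for unions and bounded-fold intersections recalled in the preliminaries then give $VC\dim(Q)=O(d)$, and tracking constants yields the stated bound $6d$.

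The algorithm maintains a version space $V$, namely the set of distinct labelings of $S$ realized by hypotheses of $\H$ that are consistent with all answers received so far, together with a set $U\subseteq S$ of still-unlabeled points, under the two invariants that $h^\ast|_S\in V$ and that $U$ is always a suffix of $(S,\preceq)$. Initially $V=\H|_S$, so $|V|=n^{O(d)}$ by Sauer's lemma, and $U=S$. In one round, order $U$ as $u_1\prec\cdots\prec u_m$ and, internally (no queries yet), run greedy halving: $H^{(0)}=V$, and from $H^{(i-1)}$ let $y_i\in\{\pm1\}$ be a majority label of $u_i$ inside $H^{(i-1)}$ and $H^{(i)}=\{h\in H^{(i-1)}:h(u_i)=y_i\}$. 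Since $|H^{(i)}|\ge\tfrac12|H^{(i-1)}|$, the first index $t$ with $|H^{(t)}|<\tfrac23|V|$ (set $t=m$ if there is none) automatically satisfies $|H^{(t)}|\ge\tfrac13|V|$ in the former case, and $t\ge 1$. Fix any $h'\in\H$ with $h'|_S\in H^{(t)}$ and, for each sign $z$ occurring among $y_1,\dots,y_t$, submit the query $\bigl(\{x\in\X:u_1\preceq x\preceq u_t\}\cap (h')^{-1}(z),\,z\bigr)$. Because $L=S$ and — by the suffix invariant — every already-labeled point of $S$ is $\prec u_1$, the region of this query meets $L$ in exactly $\{u_i:i\le t,\ y_i=z\}$, so the answer is $1$ iff $h^\ast$ agrees with the majority pattern on those $u_i$; hence all of these (at most two) answers are $1$ iff $h^\ast|_S\in H^{(t)}$.

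If all answers are $1$, output $h^\ast(u_i)=y_i$ for $i\le t$, remove $u_1,\dots,u_t$ from $U$ (which stays a suffix), and replace $V$ by $H^{(t)}$: if $t<m$ this shrinks $|V|$ by a factor $>\tfrac32$, while if $t=m$ it empties $U$ and we stop. If some answer is $0$, then $h^\ast|_S\notin H^{(t)}$, so replace $V$ by $V\setminus H^{(t)}$: for $t<m$ this deletes a $\ge\tfrac13$ fraction of $V$, and for $t=m$ it deletes the $\ge\tfrac23$ fraction $H^{(m)}$, so $|V|$ shrinks by a factor $\ge\tfrac32$ either way. Every round uses at most two queries, and unless it empties $U$ it multiplies $|V|$ by at most $\tfrac23$; moreover once $|V|=1$ the unique surviving labeling is $h^\ast|_S$, so the next round empties $U$. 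Hence the algorithm halts after $O(\log(n^{O(d)}))=O(d\log n)$ rounds, using $O(d\log n)$ queries in all, and since $h^\ast|_S\in V$ is an invariant, every emitted label is correct. (The theorem asks only for correctness, not efficiency, so the fact that computing majority labels and the sizes $|H^{(i)}|$ may take time exponential in the input is not a concern here.)

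The heart of the matter — and where this diverges from the classical Halving/mistake-query route, whose query regions are as complicated as a majority vote and hence of unbounded VC dimension — is squeezing a constant-factor reduction of $V$ out of each round while issuing only $O(1)$ queries drawn from a simple region family. The crucial observation is that a bare ``no'', with no counterexample attached, already suffices: stopping the greedy process precisely at the index $t$ where $H^{(t)}$ lies strictly between $\tfrac13$ and $\tfrac23$ of $V$ guarantees that \emph{either} answer kills a constant fraction of $V$, and a ``yes'' additionally lets us commit a whole block of labels. Everything else is routine: such a $t$ in that range exists by the factor-$2$ monotonicity of $|H^{(i)}|$; replacing the data-dependent condition ``$x\in\{u_1,\dots,u_t\}$'' by the data-independent ``$u_1\preceq x\preceq u_t$'' is harmless because $L=S$ and the suffix invariant keeps previously-labeled points out of the window; and $VC\dim(Q)=O(d)$ is the VC-combination computation indicated above.
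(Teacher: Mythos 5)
Your proposal is correct and follows essentially the same route as the paper's proof: a fixed well-ordering of $\X$, a greedy majority-prefix (halving) argument producing an index $t$ where the surviving subclass has size between $\tfrac13$ and $\tfrac23$ of the version space, two order-interval-intersect-hypothesis region queries per round, a Sauer's-lemma bound of $n^{O(d)}$ on the version space giving $O(d\log n)$ rounds, and the union/intersection VC composition giving $VC\dim(Q)=O(d)$. Your explicit suffix invariant (and querying only the signs that actually occur, avoiding empty regions) is a slightly more careful bookkeeping of the same idea, not a different argument.
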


To start with, we will remind the reader of some basic background in set theory. 


\begin{definition}[Strcit Total Order]
    Let $\X$ be a non-empty set. A binary relation $``<"$ over $\X$ is a strict total order if for every $a,b,c \in \X$, the following conditions are satisfied.
    \begin{itemize}
        \item Not $a<a$. (irreflexive)
        \item If $a<b$, then not $b<a$.(asymmetric)
        \item If $a<b$, $b<c$, then $a<c$. (transitive)
        \item If $a \neq b$, then $a<b$ or $b<a$. (connected)
    \end{itemize}
\end{definition}
Consider a set $\X$ with a strict total order $``<"$, we have the following lemma. 
\begin{lemma}\label{lm total order}
    Let $\X$ be a set and $``<"$ be a strict total order over $X$. Let $I=\{[a,b] \mid a,b \in X\}$, where $x \in [a,b]$ if $a \le x \le b$. $VC(I) \le 2$.
\end{lemma}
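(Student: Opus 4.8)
The plan is to show that no three-element subset of $\X$ can be shattered by the interval family $I$, which immediately gives $VC(I)\le 2$. The key structural fact is that the defining order is a strict total order, so in particular it is \emph{connected} and \emph{transitive}: any three distinct elements of $\X$ can be linearly arranged, and an interval that contains the two extreme elements is forced to contain the middle one.

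First I would fix an arbitrary set $A=\{x_1,x_2,x_3\}\subseteq\X$ of three distinct points and argue that, by connectedness applied pairwise together with transitivity, we may relabel them so that $x_1<x_2<x_3$. (This uses exactly the four axioms of a strict total order stated above: connectedness to compare each pair, asymmetry/irreflexivity to rule out equalities, and transitivity to get a consistent chain.) Then I would consider the candidate labeling that selects $\{x_1,x_3\}$ and excludes $x_2$. Suppose some interval $[a,b]\in I$ realizes this labeling, i.e.\ $x_1\in[a,b]$ and $x_3\in[a,b]$. By definition of $[a,b]$ this means $a\le x_1$ and $x_3\le b$; combining with $x_1<x_2<x_3$ and transitivity of $<$ (and of $\le$) yields $a\le x_1<x_2<x_3\le b$, hence $a\le x_2\le b$, i.e.\ $x_2\in[a,b]$ — contradicting that $x_2$ was excluded. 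Therefore $A$ is not shattered.

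Since $A$ was an arbitrary three-point set, no three points are shattered, so $VC(I)\le 2$. (If one also wants to record that $2$ points \emph{can} be shattered whenever $|\X|\ge 2$, pick $x_1<x_2$ and exhibit $[x_2,x_1']$-style degenerate intervals, or note $\emptyset$, $\{x_1\}=[x_1,x_1]$, $\{x_2\}=[x_2,x_2]$, $\{x_1,x_2\}=[x_1,x_2]$ are all realizable; but this direction is not needed for the stated inequality.) There is no real obstacle here — the only point requiring a little care is the first step, namely justifying that three arbitrary elements of $\X$ can be put in a definite order using only the total-order axioms, rather than silently assuming a numerical embedding.
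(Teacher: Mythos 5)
Your proof is correct and follows essentially the same argument as the paper: order any three distinct points using the total-order axioms and observe that an interval containing the two extreme points must, by transitivity, contain the middle one, so the labeling excluding the middle point is unrealizable and no three points are shattered. The extra care you take in justifying the linear arrangement of the three points is a minor elaboration of the paper's one-line appeal to the strict total order.
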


\begin{proof}(Proof of \Cref{lm total order})
    Let $a,b,c$ be any 3 distinct points in $\X$ such that $a<b<c$. Since $``<"$ is a strict total order, we know that 3 distinct points can be ordered in the above way.
    Let $h=[l,r] \in H$ be any set such that $a \in h$ and $c \in h$. By transitive property, we know that $l \le a < b <c  \le r$, which implies that $b \in h$. Thus, no hypothesis in $I$ can label $a,c$ positive but $b$ negative, which implies $VC(I) \le 2$.
\end{proof}
\Cref{lm total order} implies that if a space of examples $\X$ admits a strict total order, then we are able to define the class of intervals over $\X$, which has a very small VC dimension. If $\X$ is finite, such a strict total order can be easily defined by any permutation of $\X$. If $\X$ is infinite or continuous, we next explain that such a strict total order(linear order) can also be defined. This fact follows the following well-known well-ordering theorem (equivalent to Zorn's lemma and axiom of choice).

\begin{theorem}[well-ordering theorem]\label{th well order}
    A set $\X$ is well-ordered by some strict total order if every non-empty subset of $\X$ has a least element under the order. Furthermore, every set $\X$ can be well ordered.
\end{theorem}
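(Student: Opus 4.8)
The first assertion is essentially the definition of a well-order — a strict total order on $\X$ is a well-order exactly when every nonempty subset has a least element — so there is nothing to prove there, and I would simply record it as such. The substance is the second assertion, that every set can be well-ordered, and the plan is to derive it from Zorn's Lemma (equivalently, the Axiom of Choice). Consider the collection $\mathcal{W}$ of all pairs $(A,<_A)$ with $A \subseteq \X$ and $<_A$ a well-ordering of $A$. This is a genuine set, being a subcollection of $\mathcal{P}(\X) \times \mathcal{P}(\X \times \X)$, and it is nonempty since $(\emptyset,\emptyset) \in \mathcal{W}$. I would partially order $\mathcal{W}$ by the \emph{initial-segment extension} relation: $(A,<_A) \preceq (B,<_B)$ iff $A \subseteq B$, $<_A$ is the restriction of $<_B$ to $A \times A$, and $A$ is a $<_B$-initial segment of $B$ (i.e.\ $b <_B a$ with $a \in A$ forces $b \in A$). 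Reflexivity, antisymmetry, and transitivity of $\preceq$ are routine.

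The argument then runs in three steps. First, I would show every $\preceq$-chain $\mathcal{C} \subseteq \mathcal{W}$ has an upper bound: take $U = \bigcup_{(A,<_A)\in\mathcal{C}} A$ with $<_U = \bigcup_{(A,<_A)\in\mathcal{C}} <_A$, viewed as a union of subsets of $\X \times \X$. One checks that $<_U$ is a well-defined strict total order on $U$ — comparability of two elements of $U$ holds because they lie in a common member of the chain, and coherence on overlaps is exactly the restriction condition — and that it is a well-order: a nonempty $T \subseteq U$ meets some $A$ with $(A,<_A)\in\mathcal{C}$, and the $<_A$-least element of $T \cap A$ is $<_U$-least in all of $T$, the initial-segment condition ruling out any element of $T \setminus A$ lying below it. Moreover $(A,<_A) \preceq (U,<_U)$ for each $(A,<_A) \in \mathcal{C}$, so $(U,<_U)$ is the required bound. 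Second, Zorn's Lemma yields a maximal element $(M,<_M) \in \mathcal{W}$. Third, I would argue $M = \X$: if not, pick any $p \in \X \setminus M$ and extend $<_M$ to $M \cup \{p\}$ by declaring $m <_M' p$ for all $m \in M$, which is again a well-order having $M$ as an initial segment, so $(M,<_M) \prec (M \cup \{p\}, <_M')$, contradicting maximality. Hence $<_M$ well-orders $\X$.

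The main obstacle is the chain step: one must verify with care that the union of a chain of pairwise initial-segment-compatible well-orders is itself a well-order, since the ``take the least element'' argument hinges on the interplay of the restriction and initial-segment conditions; the rest is bookkeeping, together with the observation that $\mathcal{W}$ is a set so that Zorn applies. As an alternative I could give Zermelo's direct proof: fix a choice function $g$ on the nonempty subsets of $\X$, define $F(\alpha) = g\big(\X \setminus \{F(\beta) : \beta < \alpha\}\big)$ by transfinite recursion for as long as the complement stays nonempty, use Hartogs' theorem to see $F$ cannot remain defined beyond some ordinal, and transport the well-order of that ordinal along the resulting bijection onto $\X$; but the Zorn route minimizes the set-theoretic overhead.
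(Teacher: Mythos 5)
Your proof is correct: it is the standard Zorn's-Lemma argument for Zermelo's well-ordering theorem (the poset of well-ordered subsets under initial-segment extension, unions of chains, maximality forcing $M=\X$), and the first assertion is indeed just the definition of a well-order. The paper itself does not prove this statement — it invokes it as classical background (noting its equivalence to Zorn's Lemma and the Axiom of Choice) — so your argument supplies exactly the standard proof that the paper implicitly relies on, and it is complete as written.
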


According to \cite{pincus1997dense}, well-ordering theorem implies that every example space admits a strict total order.

With the background of the basic set theory, we are able to prove the following structural result.
\begin{lemma}\label{lm set construct}
Let $\X$ be a space of examples and $``<"$ be a strict total ordering over $\X$. Let $\H$ be any hypothesis class over $\X$.
Let $S \subseteq \X$ be any subset of $n$ examples. Define $H_S=\{h_S: S \to \{\pm 1\} \mid \exists h \in \H, h_S(x)=h(x), \forall x \in S\}$ be the hypothesis class of $\H$ restricted at set $S$. If $\card{H_S} > 1$, then there exists an interval $[a,b]$ and a hypothesis $h$ such that $\card{\{h_S \in H_S \mid h_S(x)=h(x), \forall x \in [a,b] \cap S\}} \in [\card{H_S}/3,2\card{H_S}/3]$.  
\end{lemma}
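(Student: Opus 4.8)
\textbf{Proof proposal for \Cref{lm set construct}.}

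The plan is to order $S$ as $x^{(1)} < x^{(2)} < \cdots < x^{(n)}$ according to the strict total order on $\X$, and to track how the size of the restricted version space shrinks as we condition on more and more prefixes of this ordering. Concretely, fix any hypothesis $h^\ast \in \H$ whose restriction to $S$ will serve as our reference; actually it is cleaner to first pick an arbitrary $h \in \H$ and define, for each $t \in \{0,1,\dots,n\}$, the set $G_t := \{ h_S \in H_S \mid h_S(x^{(i)}) = h(x^{(i)}) \text{ for all } i \le t \}$. Then $G_0 = H_S$, the sequence $|G_0| \ge |G_1| \ge \cdots \ge |G_n|$ is nonincreasing, and $|G_n| \ge 1$ since $h_S$ itself (the restriction of $h$) lies in every $G_t$. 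The key point is that conditioning on one more example can drop the count by at most a factor related to the number of labelings, but here labels are binary, so each step either keeps $|G_t|$ unchanged or decreases it — and we want to control the first time it gets ``small enough.''

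The subtlety is that a single step $|G_{t-1}| \to |G_t|$ can overshoot: if almost all of $G_{t-1}$ labels $x^{(t)}$ the minority way, then $|G_t|$ could be much smaller than $|G_{t-1}|/3$. So a naive ``stop at the first $t$ with $|G_t| \le 2|H_S|/3$'' argument does not immediately give the lower bound $|G_t| \ge |H_S|/3$. To fix this, I would not commit in advance to following $h$'s labels; instead, at each step choose which label to condition on \emph{greedily so as to keep the surviving set as large as possible}. That is, define $G_0 = H_S$ and, given $G_{t-1}$ with $|G_{t-1}| > 1$, let $G_t$ be the larger of the two sets $\{g \in G_{t-1} : g(x^{(t)}) = +1\}$ and $\{g \in G_{t-1} : g(x^{(t)}) = -1\}$ (breaking ties arbitrarily). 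Then $|G_t| \ge |G_{t-1}|/2$ always, so the sequence decreases by a factor of at most $2$ at each step. Since $|G_0| = |H_S| > 1$ and eventually $|G_t| = 1$ (after conditioning on enough coordinates the labeling is pinned down — if two restrictions agree on all of $S$ they are equal), there is a first index $t$ with $|G_t| \le \tfrac{2}{3}|H_S|$; by the factor-$2$ bound, $|G_t| \ge \tfrac{1}{2}|G_{t-1}| > \tfrac{1}{2}\cdot\tfrac{2}{3}|H_S| = \tfrac{1}{3}|H_S|$. Hence $|G_t| \in [\,|H_S|/3,\ 2|H_S|/3\,]$.

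It remains to realize $G_t$ in the form claimed in the statement, i.e.\ as $\{h_S \in H_S : h_S(x) = \tilde h(x) \ \forall x \in [a,b]\cap S\}$ for some interval $[a,b]$ and some $\tilde h \in \H$. Take $a = x^{(1)}$ and $b = x^{(t)}$; then $[a,b] \cap S = \{x^{(1)},\dots,x^{(t)}\}$ since the ordering of $S$ respects $<$ and intervals are order-convex by \Cref{lm total order}'s underlying definition. For the reference hypothesis: $G_t$ is nonempty (it has size $\ge |H_S|/3 \ge 1$), so pick any $g^\star \in G_t$; by definition of $H_S$ there is $\tilde h \in \H$ restricting to $g^\star$ on $S$, and by construction every $g \in G_t$ agrees with $g^\star$ — hence with $\tilde h$ — on $x^{(1)},\dots,x^{(t)}$, while every $g \in H_S \setminus G_t$ disagrees with $\tilde h$ somewhere in $\{x^{(1)},\dots,x^{(t)}\}$ (this is exactly the greedy recursion unrolled). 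Therefore $\{h_S \in H_S : h_S \equiv \tilde h \text{ on } [a,b]\cap S\} = G_t$, which has the desired size. The only place needing a little care is the bookkeeping that the greedy choices are simultaneously consistent with a single $\tilde h \in \H$ — but that is automatic because $G_t \subseteq H_S$ is nonempty and any element of it extends to such an $\tilde h$; I expect this consistency check, rather than any hard inequality, to be the main thing to write carefully.
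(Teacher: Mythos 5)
Your proof is correct and follows essentially the same route as the paper: the paper's ``majority prediction class'' $M^{(i)}$ is exactly your greedy larger-half choice along the ordered prefix, with the same stopping rule at the first time the surviving set drops to at most $2|H_S|/3$ and the same realization of that set as agreement with an extending hypothesis on the interval $[x^{(1)},x^{(t)}]$. No gaps to flag.
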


\begin{proof}(Proof of \Cref{lm set construct})
    We order examples in $S$ according to the strict total order $``<"$ and denote by $x^{(1)}<x^{(2)}<\dots<x^{(n)}$ these ordered examples. Given this ordered dataset $S$, we recursively define $i$th majority prediction class $M^{(i)}$ in the following way, $M^{0} = H_S,$
    \begin{align*}
     M^{(i+1)} &= \{h_S \in M^{(i)} \mid \card{M^{(i)} \cap \{h' \in H_S \mid h'(x^{(i+1)}) = h_S (x^{(i+1)})\} } \ge \card{M^{(i)}}/2 \}. 
    \end{align*}
     That is to say, $M^{(i+1)}$ is the class of hypothesis in $M^{(i)}$ that predicts the label of $x^{(i+1)}$ according to the majority of $M^{(i)}$. Let $i^* \in [n]$ be the smallest number such that $\card{M^{(i^*)}} \le 2\card{H_S}/3$. We notice that $\card{M^{(i^*)}} \ge \card{H_S}/3$ because
    \begin{align*}
        \card{M^{(i^*)}} \ge \card{M^{(i^*-1)}}/2 > \card{H_S}/3,
    \end{align*}
by the definition of the majority prediction class and $i^*$. Next, we show that such an $i^*$ exists. Notice that $M^{(n)}$ contains a single hypothesis in $H_S$, thus, we have $1=\card{M^{(n)}} \le \card{H_S}/\card{H_S} \le \card{H_S}/2 < 2\card{H_S}/3$. Furthermore, since $\card {M^{(0)}} = \card{H_S}$, we know that $i^* \in [n]$ exists. Now we set $a=x^{(1)},b=x^{(i^*)}$ and $h \in H$ be any hypothesis such that $\exists h_S \in M^{(i^*)}$ agrees with $h$ for every example in $S$. Then we have 
\begin{align*}
    \card{\{h_S \in H_S \mid h_S(x)=h(x), \forall x \in [a,b] \cap S\}} = \card{M^{(i^*)}} \in [\card{H_S}/3,2\card{H_S}/3],
\end{align*}
since $M^{(i^*)} = \{h_S \in H_S \mid h_S(x)=h(x), \forall x \in [a,b] \cap S\}$.

\end{proof}

Given the above structural result, we present \Cref{alg general}, the algorithm we use in the proof of \Cref{th general up}.

	\begin{algorithm}
		\caption{\textsc{GeneralQuery}$(S,\H,Q)$ (Label $S$ with query set $Q$ given hypothesis class $\H$ )}\label{alg general}
		\begin{algorithmic} 
  
\State Let $H_S=\{h_S \mid \exists h \in \H, h_S(x)=h(x), \forall x \in S\}$.
\While {$\card{H_S}>1$}
\State Find interval $[a,b] \in 2^\X$ and $\hat{h} \in \H$ that satisfies the property in the statement of \Cref{lm set construct}.
\State Let $S^+ = \{x \in [a,b]\mid  \hat{h}(x)=1\}$ and $S^-= \{x \in [a,b]\mid  \hat{h}(x)=-1\}$
\State Make query $(S^+,1)$ and $(S^-,-1)$.
\If{$q(S^+,1)=q(S^-,-1)=1$}
\State $\H \gets \{h \in \H \mid h(x)=\hat{h}(x), \forall x \in S\cap [a,b]\}$.
\Else 
\State $\H \gets \H \setminus \{h \in \H \mid h(x)=\hat{h}(x), \forall x \in S\cap [a,b]\}$.
\EndIf
\State $H_S=\{h_S \mid \exists h \in \H, h_S(x)=h(x), \forall x \in S\}$
\EndWhile
\State Label $S$ according to the single partial hypothesis in $H_S$.
		\end{algorithmic}
	\end{algorithm}

\begin{proof}(Proof of \Cref{th general up})
We show that \Cref{alg general} uses a query class $Q$ with VC dimension $O(d)$ that labels $S$ correctly with $O(d\log n)$ queries. 

We first show the correctness of the algorithm. Let $h^*_S$ be the target hypothesis restricted at $S$. Every time we make queries $(S^+,1),(S^-,-1)$ during the execution of \Cref{alg general}, $h^*_S$ agrees with $\hat{h}$ at every example in $S \cap [a,b]$ if and only if $q(S^+,1)=q(S^-,-1)=1$, which implies that $h^*_S$ is always contained in $H_S$. So, at the end of \Cref{alg general}, every example in $S$ is labeled according to $h^*_S$ and thus is labeled correctly.

Next, we bound the number of queries used by the algorithm. According to \Cref{lm set construct}, we know that every time we find an interval $[a,b]$ and $\hat{h}$, we have 
\begin{align*}
    \card{\{h_S \in H_S \mid h_S(x)=\hat{h}(x), \forall x \in [a,b] \cap S\}} \in [\card{H_S}/3,2\card{H_S}/3].
\end{align*} This implies 
\begin{align*}
\card{H_S \setminus \{h_S \in H_S \mid h_S(x)=\hat{h}(x), \forall x \in [a,b] \cap S\}} \le 2\card{H_S}/3.    
\end{align*}
 So, whether $h^*_S$ agrees with $\hat{h}$ over $S \cap [a,b]$ or not, after each update the size of $H_S$ will always shrink by a factor of $2/3$. Since $H$ has a VC dimension of $d$, we know from Sauer's lemma that $\card{H_S} \le O(n^d)$ at the beginning of the execution of \Cref{alg general}. Thus, after $O(d\log n)$ updates $\card{H_S}=1$ and \Cref{alg general} will terminate. The total number of queries is $O(d\log n)$ since we make 2 queries for a single update.

Finally, we upper bound the VC dimension of the query class $Q$ that \Cref{alg general} uses. Notice that $Q=\{[a,b] \cap \{x \mid g(x)=1\} \mid a,b \in X, g \in H \cup \Bar{H}\}$, where $\Bar{H} = \{-h(x) \mid h \in H\}$ is the set of complement of hypothesis in $H$. By the property of VC dimension, we have 
\begin{align*}
    VC(Q) \le VC(I)VC(H \cup \Bar{H}) \le VC(I)(2VC(H)+1) \le 2(2VC(H)+1) \le 6d.
\end{align*}

\end{proof}

\subsection{Proof of \Cref{th general low}}\label{app general low}

In this section, we present the proof of \Cref{th general low}, showing a matching lower bound for \Cref{th general up}. Here, we restate \Cref{th general low} as a reminder.

\begin{theorem}(Restatement of \Cref{th general low})
For every $d \in N^+$ and $n \ge d$ large enough, there exists a space of examples $\X$ and a hypothesis class $\H$ over $\X$ with VC dimension $d$ such that there exists a set of $n$ example $S$ such that
for every region query family $Q$ over $\X$ with $VC\dim(Q) \le (d-2)/3$ and every active learning algorithm $\A$, there exists a true hypothesis $h^* \in \H$, such that if $\A$ makes less than $\poly(n)$ region queries from $Q$, then with probability at least $1/3$, some example $x \in S$ is labeled incorrectly by $\A$. In particular, this even holds when $\A$ knows the labeling domain $L=S$.
\end{theorem}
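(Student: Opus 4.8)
## Proof Proposal for Theorem~\ref{th general low}

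The plan is to follow the blueprint sketched in \Cref{sec general low}: first build a ``base'' hard instance relative to a \emph{fixed} query class, and then amplify it so that a single hypothesis class defeats \emph{every} query class of bounded VC dimension simultaneously. Concretely, I would fix parameters $k = k(n,d)$ (to be chosen as $\poly(n)$) and $t = t(d) = O(d)$, and start from the following gadget. Given a set $C \subseteq S$ of size $k$, let $h_0$ be the hypothesis that is $+1$ exactly on $C$, and let $h_1,\dots,h_k$ be the $k$ hypotheses obtained by flipping $h_0$ at one element of $C$. The target $h^*$ is chosen uniformly at random from $\{h_1,\dots,h_k\}$ (or, to get the ``with probability $1/3$'' conclusion cleanly, from $\{h_0,h_1,\dots,h_k\}$). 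The first claim is that a query $(T,z)$ is \emph{useless} unless $T \cap S \subseteq C$: if $T$ meets both $C$ and $S\setminus C$ then under every $h_j$ the region $T\cap S$ contains a negatively labeled point, so the answer is identically $0$ (and even a returned counterexample lies outside $C$ and is uninformative about which $h_j$ is the target); if $T\cap S \subseteq S\setminus C$ the answer is determined and gives nothing. Hence only regions $T$ with $T\cap S\subseteq C$ matter, and to have any hope of identifying the flipped coordinate the learner's informative queries must, collectively, cover all of $C$. If every usable region $T$ satisfies $|T\cap S|\le t$, then the learner needs $\Omega(k/t)$ queries before its informative regions cover $C$; before that, a uniformly random flipped coordinate is in an uncovered region with constant probability, so the learner errs on that point with probability $\ge 1/3$. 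Formally this is an adversary/averaging argument: against any (randomized) algorithm making $m < k/(3t)$ queries, by Yao's principle it suffices to bound the error of the best deterministic algorithm against the uniform prior on $\{h_0,\dots,h_k\}$; after $m$ informative queries at most $mt < k/3$ elements of $C$ are ``touched'', so on at least $2k/3$ choices of $h^*$ the algorithm's transcript is identical to the transcript on $h_0$, forcing an error on a constant fraction of them.

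The amplification step is where the real work is. We are handed an arbitrary query family $Q$ with $\mathrm{VC}\dim(Q)\le (d-2)/3 =: s$. By Sauer's lemma, $|\dom(Q)| \le O(n^{s})$. The key combinatorial lemma (as flagged, via the explicit constructions of \cite{beideman2014set}) is that for $n$ large enough there exist $N$ subsets $C_1,\dots,C_N \subseteq \X = S$, each of size $k = \poly(n)$, with pairwise intersections $|C_i \cap C_j| \le t = O(d)$ (more precisely $t \le 3s+1 \le d$, so that the hypothesis class built from a single $C_i$ has VC dimension exactly $d$), and with $N$ strictly larger than the Sauer bound $O(n^s)$ on $|\dom(Q)|$. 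Here I would need to check carefully the trade-off in \cite{beideman2014set}: we want $N \ge n^{\omega(1)}$ while keeping $k$ polynomially large and $t$ only $O(d)$ — this is exactly the regime of near-disjoint set systems over a universe of size $n$, and the cited construction should give a set system of size $n^{\Omega(k/t)}$ or similar, which with $k/t = \poly(n)/O(d)$ easily beats $n^{s}$. Now the pigeonhole: since $N > |\dom(Q)|$, for the purposes of counting, $\dom(Q)$ cannot contain a ``large'' subset ($\ge t+1$ elements) of more than ... — more precisely, one shows there exists an index $i^*$ such that \emph{no} $T \in \dom(Q)$ has $T \subseteq C_{i^*}$ with $|T| \ge t+1$. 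The argument: any $T\in\dom(Q)$ with $|T|\ge t+1$ can be a subset of at most one $C_i$ (if $T\subseteq C_i$ and $T\subseteq C_j$ then $T \subseteq C_i\cap C_j$, contradicting $|C_i\cap C_j|\le t<|T|$); so the number of ``bad'' indices $i$ — those for which some $T\in\dom(Q)$ is a size-$(>t)$ subset of $C_i$ — is at most $|\dom(Q)| < N$, leaving a good index $i^*$. Fix that $C_{i^*}$, instantiate the base gadget on it, and let $\H$ be $\{h_0,h_1,\dots,h_k\}$ restricted to $C := C_{i^*}$ and $-1$ elsewhere. Every region $T\in\dom(Q)$ with $T\cap S\subseteq C$ has $|T\cap S| = |T\cap C|\le t$ (since $T\cap C$ is itself a subset of $C$ that is ``realized'' — we may need $T\cap C$ to itself be a member of a slightly enlarged $\dom(Q)$, or argue directly that $|T\cap C|>t$ would already make $i^*$ bad). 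Hence the base lower bound applies with this $t = O(d)$, giving query complexity $\Omega(k/t) = \poly(n)$, exactly as claimed; and since the gadget's hypothesis class has VC dimension $t+1\le d$ (pad with dummy shattered points if needed to hit $d$ exactly), the VC-dimension bookkeeping works out: $\mathrm{VC}\dim(\H)=d$ and the bound on $\mathrm{VC}\dim(Q)$ we assumed, $(d-2)/3$, is precisely what makes $t\le d$.

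I would then assemble these pieces: (1) state and prove the base-gadget lower bound as a standalone lemma (adversary argument + Yao), (2) state the near-disjoint set-system lemma citing \cite{beideman2014set} with explicit parameters, (3) do the pigeonhole to extract $C_{i^*}$, (4) verify the VC-dimension accounting for $\H$ and that the assumed bound on $\mathrm{VC}\dim(Q)$ forces all in-$C$ regions to be small, (5) conclude. One subtlety to handle carefully in step (1): since $\A$ is adaptive and the region it queries next depends on previous answers, I should argue that against the uniform prior all ``useless'' queries genuinely return a fixed, target-independent answer (true, since the answer is identically $0$ or identically determined), so adaptivity buys nothing among useless queries, and among useful queries the covering bound is on their \emph{union of $S$-footprints} which has size $\le mt$. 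The main obstacle I anticipate is \textbf{step (2)}: getting the quantitative set-system exactly right — simultaneously $k = \poly(n)$, $t = O(d)$, and $N > $ (Sauer bound $n^{(d-2)/3}$) — requires matching the parameters of the \cite{beideman2014set} construction to our needs, and if the off-the-shelf bound is not quite strong enough one may need a direct probabilistic construction (random $k$-subsets of $[n]$, expected pairwise intersection $k^2/n$, concentration + union bound to get all pairwise intersections $\le t$ while $N$ is exponential in $\mathrm{poly}(n)$ — this works as long as $k^2/n \ll t = O(d)$, i.e. $k \ll \sqrt{dn}$, which still leaves $k$ polynomially large). Everything else is routine once these parameters are pinned down.
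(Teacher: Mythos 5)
Your overall architecture (fixed-query-class gadget, near-disjoint set system, pigeonhole via Sauer's lemma) is the same as the paper's, but there is a genuine gap in how you instantiate the hypothesis class, and it breaks the quantifier order of \Cref{th general low}. The theorem demands a \emph{single} $\H$ (and $S$) fixed in advance that defeats \emph{every} query family $Q$ with $VC\dim(Q)\le (d-2)/3$. In your step after the pigeonhole you ``fix that $C_{i^*}$, instantiate the base gadget on it, and let $\H$ be $\{h_0,\dots,h_k\}$ restricted to $C_{i^*}$'' --- but $i^*$ is extracted from $\dom(Q)$, so your $\H$ depends on $Q$; this only proves the weaker statement in which $\H$ may be chosen after $Q$. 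The paper avoids this by defining $\H=\bigcup_{i\in[N]}\H_{C_i}$, the union of the gadget classes over \emph{all} sets in the near-disjoint family (\Cref{lm inclusion}, \Cref{th intersection}), and then, given $Q$, the adversary merely selects $h^*$ inside the sub-family $\H_{C_{i^*}}\subseteq\H$. This union forces an extra argument you do not have: one must show the union still has small VC dimension, which the paper does by observing that a shattered set of size $\gamma+3$ would have to lie in a single $C_i$ (any all-positive labeling forces this), while the pairwise-intersection bound $\le\gamma$ pins every hypothesis labeling $\gamma+1$ of those points positive to the same $C_i$, whose gadget can flip at most one point --- giving $VC\dim(\H)\le\gamma+2=O(d)$. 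Relatedly, your remark that a single gadget has VC dimension $t+1$ is off (it is $1$, since no two points of $C$ can both be labeled negative); in the paper the $\Theta(d)$ dimension of $\H$ comes precisely from the union over the $C_i$, not from one gadget, so ``padding with dummy shattered points'' is papering over the part of the construction that actually matters.

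A smaller inaccuracy in your base-gadget lemma: a query $(T,z)$ whose region meets both $C$ and $S\setminus C$ is \emph{not} always useless. If $T\cap C=\{x\}$, the query $(T,-1)$ answers $1$ under $h_{\{x\}}$ and $0$ under every other candidate, so it is informative about $x$. This is why the paper's \Cref{lm cover} defines $x$ as ``covered'' either when $x\in T\subseteq C^*$ or when $T\cap C^*=\{x\}$; each query then covers at most $t$ points and the $\Omega(k/t)$ averaging bound goes through. Your count ``$m$ queries touch at most $mt$ elements'' survives under that corrected definition, but as written your uselessness claim is false and an adversary-side reader would object. Your concerns about the set-system parameters are unfounded: the paper uses \Cref{th intersection} with universe size $n=4k^2\ln 4k$ (so $k\approx\sqrt{n}$ up to logarithms), $\gamma=3d$, and $N\ge(2k\ln 2k)^{\gamma+1}>n^{d+1}\ge|\dom(Q)|$, which is exactly the regime you describe; no probabilistic construction is needed.
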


We start with \Cref{lm cover}, showing how to construct a hard instance for a fixed query family.

\begin{lemma}\label{lm cover}
    Let $\X$ be a space of examples and let $Q$ be a region query class over $\X$.
    Let $C^* \subseteq \X$ be a set of $k$ examples. Let $H_{C^*}=\{h_S \mid S \subseteq C^*, \card{S} \le 1\}$ be a hypothesis class over $\X$, where $h_S(x)=1$ if and only if $x \in C^* \setminus S$. Assuming for every $T \in \dom(Q)$, if $T \subseteq C^*$, then $\card{T} \le t$. Then for every learner $\A$ that makes $k/2t$ queries from $Q$, there is some hypothesis $h^* \in H_{C^*}$ such that with probability at least $1/3$, there exists some $x \in C^*$ that is mislabeled by $\A$ assuming the labeling domain is the same as the example space i.e $L=\X$.
\end{lemma}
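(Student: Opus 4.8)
The plan is to set up an information-theoretic / adversary argument. Fix the query class $Q$ and the set $C^* \subseteq \X$ of size $k$ with the stated property that every $T \in \dom(Q)$ with $T \subseteq C^*$ has $|T| \le t$. The hypothesis class $H_{C^*}$ consists of $h_\emptyset$ (labeling all of $C^*$ positive, everything else negative) together with the $k$ hypotheses $h_{\{x\}}$ for $x \in C^*$, each flipping exactly one point of $C^*$ to negative. I will argue that a learner making at most $k/(2t)$ queries cannot, with sufficient confidence, distinguish $h_\emptyset$ from a random $h_{\{x\}}$.

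First I would observe how each query $(T,z)$ responds under these hypotheses when $L = \X$. If $T \cap C^* = \emptyset$, the answer is determined by the (fixed, all-negative) labels outside $C^*$ and is identical for every $h^* \in H_{C^*}$, so it carries no information. If $T \cap C^* \ne \emptyset$ and $T \not\subseteq C^*$, then $T$ contains a point outside $C^*$ which is negative under all hypotheses; so for $z = 1$ the answer is $0$ regardless of $h^*$, and for $z = -1$ the answer depends only on whether $T \cap C^*$ is empty — which it isn't — so again the answer is the same (namely $0$) for all hypotheses. Hence informative queries must have $T \subseteq C^*$, and by hypothesis any such $T$ has $|T| \le t$. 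For such a $T \subseteq C^*$: the query $(T,1)$ returns $1$ under $h_\emptyset$, and returns $1$ under $h_{\{x\}}$ unless $x \in T$; the query $(T,-1)$ returns $0$ under $h_\emptyset$ and under every $h_{\{x\}}$ with $x \notin T$ (since then $T$ still contains a positive point, as $|T|\ge 1$ once it is informative), and may return $1$ only when $T = \{x\}$. In all cases, a single query of either form with $|T|\le t$ can "rule out" at most $t$ of the candidates $h_{\{x\}}$: i.e., there are at most $t$ values of $x$ for which the answer under $h_{\{x\}}$ differs from the answer under $h_\emptyset$.

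Next I would run the adversary: the labeler answers every query as if the true hypothesis were $h_\emptyset$. After the learner has made $m \le k/(2t)$ (adaptive) queries, the set $B$ of indices $x \in C^*$ for which some query answer would have differed under $h_{\{x\}}$ has size $|B| \le mt \le k/2$. Therefore at least $k/2$ of the hypotheses $h_{\{x\}}$, $x \in C^* \setminus B$, are consistent with the entire transcript — the learner's view is exactly the same whether the truth is $h_\emptyset$ or any of these $h_{\{x\}}$. Now pick $h^*$ uniformly at random among $\{h_\emptyset\}\cup\{h_{\{x\}} : x \in C^*\}$ (or just among these $\ge k/2$ indistinguishable ones plus $h_\emptyset$); since the transcript is identical across them, the learner's output labeling of $C^*$ is a fixed function of the transcript and cannot be correct for more than one of these $\ge k/2 + 1$ hypotheses (they pairwise disagree on $C^*$). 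Hence the probability the learner labels all of $C^*$ correctly is at most $\frac{1}{k/2+1} \le \frac{2}{3}$ for $k$ large, giving failure probability at least $1/3$. (If one wants the clean constant, restrict the prior to $h_\emptyset$ together with the $\ge k/2$ surviving $h_{\{x\}}$'s; the learner is right on at most one, so it errs with probability $\ge 1 - 1/(k/2+1) \ge 1/3$ already for modest $k$; I would just state "for $k$ large enough".)

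The main obstacle is the careful case analysis of what a region query reveals — in particular handling queries with $T$ straddling $C^*$ and its complement, and handling the label $z=-1$ and the degenerate singleton case $T=\{x\}$ — and making precise the claim that each informative query eliminates at most $t$ candidates, so that $k/(2t)$ queries leave $\ge k/2$ survivors. Once that counting is pinned down, the "indistinguishable hypotheses $\Rightarrow$ constant output $\Rightarrow$ can be right on at most one" step is routine. I would also remark, as the paper does, that the argument is unchanged if the labeler additionally returns a counterexample $x \in T\cap L$ with label $-z$ when the answer is $0$: under the $h_\emptyset$-adversary the answer to every informative query is as computed above, and whenever it is $0$ the returned counterexample lies outside $C^*$ and is thus again independent of which $h_{\{x\}}$ is the truth, so no extra information leaks.
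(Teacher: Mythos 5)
Your overall strategy---answer every query as if the target were $h_\emptyset$, bound how many of the candidates $h_{\{x\}}$ the transcript can distinguish, and conclude that the learner's transcript-determined output must be wrong for most surviving candidates---is essentially the paper's argument, which packages the same idea via a notion of a point being ``covered'' by a query plus an averaging computation over a uniform prior. However, your case analysis contains an error at exactly the step you single out as the main obstacle. You claim that every straddling query ($T \cap C^* \neq \emptyset$ and $T \not\subseteq C^*$) is uninformative, and in particular that $(T,-1)$ answers $0$ under every hypothesis in $H_{C^*}$. This is false when $T \cap C^*$ is a singleton $\{x\}$: under $h_{\{x\}}$ every point of $T$ is negative (the point $x$ is flipped, and everything outside $C^*$ is negative under all hypotheses), so $(T,-1)$ returns $1$, whereas under $h_\emptyset$ it returns $0$. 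Consequently the inference ``informative queries must have $T \subseteq C^*$'' is wrong, and such queries are not constrained by the hypothesis $\card{T} \le t$ at all; this is precisely why the paper defines $x$ to be covered by $T$ if \emph{either} $x \in T \subseteq C^*$ \emph{or} $T \cap C^* = \{x\}$. The damage is local: a singleton-intersection straddling query distinguishes only the one candidate $h_{\{x\}}$ with $\{x\} = T \cap C^*$, so each query still separates at most $t$ candidates and your bound $\card{B} \le mt \le k/2$ survives once the case analysis is repaired---but as written, the key counting step rests on a false claim.

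A smaller inaccuracy: in your closing remark about the counterexample model, it is not true that a returned counterexample always lies outside $C^*$. For a query $(T,-1)$ with $\emptyset \neq T \subseteq C^*$, answered $0$ under the $h_\emptyset$-adversary, the counterexample is a positive point of $C^*$, and revealing it rules out one additional candidate per query (similarly for straddling $(T,-1)$ queries). This only affects constants and is not needed for the lemma as stated, which concerns binary answers, but the claim should be corrected if you keep the remark. The final step (indistinguishable survivors force a constant output, hence error probability at least $1/3$) matches the paper's averaging argument and is fine.
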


\begin{proof}(Proof of \Cref{lm cover})
    Let $x \in C^*$ be an example. We say $x$ is covered by some query $T \in \dom(Q)$ if either $x \in T \subseteq C^*$ or $T \cap C^* = \{x\}$. Assume that the target hypothesis $h^*$ is drawn uniformly from $H_{C^*}$. Denote by $Q_S \subseteq Q$ the random subset of queries that $\A$ makes and $\hat{h}_S$ the output hypothesis by $\A$ conditioned on the target hypothesis is $h^*=h_S$. Notice that if $x$ is not covered by $\dom(Q_\emptyset)$, then we must have $\hat{h}_{\{x\}} = \hat{h}_\emptyset$. This is because no matter whether the target hypothesis is $h_\emptyset$ or $\hat{h}_{\{x\}}$, each query $\A$ made so far will have exactly the same answer. Specifically, let $(T,z)\in Q_\emptyset$ be any query used by $\A$ so far. We know that $x \not \in T$. If $T \subseteq C^*$, then $T$ only contains positive examples. If $T \subseteq \X \setminus C^*$, then $T$ contains only negative examples. Otherwise, $T$ contains at least one positive example and one negative example.
    
    Now, assuming $\Pr(\hat{h}_\emptyset \neq h_\emptyset) \le 1/3$, we will show there must be some $x \in C^*$ such that $\Pr(\hat{h}_{\{x\}} \neq h_{\{x\}}) > 1/3$. This will follow the standard way of bounding the probability of making an error used in the active learning literature such as \cite{hanneke2015minimax}.
    \begin{align*}
            \max_{x \in C^*} \Pr(\hat{h}_{\{x\}} \neq h_{\{x\}}) & \ge \frac{1}{k} \sum_{x \in C^*}\Pr(\hat{h}_{\{x\}} \neq h_{\{x\}}) \ge \frac{1}{k} \sum_{x \in C^*}\Pr(\hat{h}_{\{x\}} (x)= h_\emptyset (x)) = \frac{1}{k} \E\sum_{x \in C^*} 1_{\{\hat{h}_{\{x\}} (x)= h_\emptyset (x)\}}\\
        & \ge \frac{1}{k} \E\sum_{x \in C^*} 1_{\{x \text{ not covered by }  Q_\emptyset\}} 1_{\{\hat{h}_{\{x\}} (x)= h_\emptyset (x)\}} = \frac{1}{k} \E\sum_{x \in C^*} 1_{\{x \text{ not covered by }  Q_\emptyset\}} 1_{\{\hat{h}_{\emptyset} (x)= h_\emptyset (x)\}} \\
        & \ge \frac{1}{k} \E\sum_{x \in C^*} 1_{\{x \text{ not covered by }  Q_\emptyset\}} 1_{\{\hat{h}_{\emptyset} (x)= h_\emptyset (x)\}} 
         \ge \frac{1}{k} \E 1_{\{\hat{h}_{\emptyset} = h_\emptyset \}} \sum_{x \in C^*} 1_{\{x \text{ not covered by }  Q_\emptyset\}}  \\
        & \ge \frac{1}{k} \Pr(\hat{h}_{\emptyset} = h_{\emptyset}) (k-k/2) > 1/3.
    \end{align*}
So, we conclude that ant learner $\A$ that makes $k/2t$ queries from $Q$ will with probability at least $1/3$ label at least one example in $C^*$ incorrectly.    
\end{proof}

Next, we present \Cref{lm inclusion}, which gives a way to extend the hard instance we constructed in \Cref{lm cover} for a single query class to multiple query classes.

\begin{lemma}\label{lm inclusion}
    Let $\X$ be any space of $n$ examples and $Q$ be a query class over $\X$. Let $\{C_1,\dots,C_N\} \subseteq 2^\X$ 
    be a collection of $N>\card{\dom(Q)}$ subsets over $\X$ such that for every $i,j \in [N]$, $i \neq j$,
    $\card{C_i \cap C_j}\le  t$. There is some $C^* \in \{C_1,\dots,C_N\}$ such that for every $T\in \dom(Q) $ if $T \subseteq C^*$, $\card{T} \le t$.
\end{lemma}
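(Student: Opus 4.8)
The plan is to argue by contradiction using a pigeonhole argument on the finite family $\dom(Q)$. Suppose the conclusion fails: then for every $i \in [N]$ there exists a region $T_i \in \dom(Q)$ with $T_i \subseteq C_i$ and $\card{T_i} > t$. This defines a map $i \mapsto T_i$ from $[N]$ into $\dom(Q)$.

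Since $N > \card{\dom(Q)}$, this map cannot be injective, so there exist distinct indices $i \neq j$ in $[N]$ with $T_i = T_j$; call this common region $T$. Then $T \subseteq C_i$ and $T \subseteq C_j$, hence $T \subseteq C_i \cap C_j$, and therefore $\card{T} \le \card{C_i \cap C_j} \le t$ by the hypothesis on pairwise intersections. This contradicts $\card{T_i} > t$. Hence some $C^* \in \{C_1,\dots,C_N\}$ must have the property that every $T \in \dom(Q)$ with $T \subseteq C^*$ satisfies $\card{T} \le t$, as claimed.

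There is essentially no hard step here: the only thing to be careful about is that $\dom(Q)$ is being treated as a set of subsets (so repeated choices of the "same" region really do collide under the map), and that we only need strict inequality $N > \card{\dom(Q)}$, not anything quantitative. The lemma will later be combined with Sauer's lemma — which bounds $\card{\dom(Q)} = O(n^{\mathrm{VC}\dim(Q)})$ — and with an explicit construction of many $k$-sets $C_1,\dots,C_N$ with small pairwise intersection, so that the surviving $C^*$ can host the hard instance of \Cref{lm cover}; but that packaging is outside the scope of this statement.
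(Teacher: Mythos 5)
Your proof is correct and uses the same core idea as the paper: a region $T$ with $\card{T} > t$ can be contained in at most one $C_i$, since containment in two would force $T \subseteq C_i \cap C_j$ and hence $\card{T} \le t$. The paper phrases this as a direct counting ("witnessing") argument while you phrase it as contradiction plus pigeonhole on the map $i \mapsto T_i$, but these are the same argument.
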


\begin{proof}(Proof of \Cref{lm inclusion})
    We say a query $T_i \in \dom(Q)$ witnesses a set $C_i \in \{C_1,\dots,C_N\}$ if $T_i \subseteq C_i$ and $\card{T_i}>t$. Let $T \in \dom(Q)$ be any region such that $\card{T}>t$, we claim that $T$ can witness at most one set $C$ from $\{C_1,\dots,C_N\}$. This is because if there exists $C_i,C_j \in \{C_1,\dots,C_N\}$, $i \neq j$, that are witnessed by $T$, then $T \subseteq C_i \cap C_j$, which implies that $\card{T} \le \card{C_i \cap C_j} \le t$ and gives a contradiction. Since $N> \card{\dom(Q)}$, we know that there must be at least one $C^* \in \{C_1,\dots,C_N\}$ that is not witnessed by any $T \in \dom(Q)$. Thus for every $T \in \dom(Q)$, if $T \subseteq C^*$ then we must have $\card{T} \le t$.
\end{proof}

Besides the above two technical lemmas, we will make use of the following results that construct set families with small pairwise intersections.

\begin{theorem}[Theorem 3 in \cite{beideman2014set}]\label{th intersection}
    For every positive integer $k \ge \gamma$, there exist $N \ge (2k\ln 2k)^{\gamma+1}$ subsets $S_1,\dots,S_N \subseteq [4k^2 \ln 4k]$ such that for every $i \neq j \in [N]$, $\card{S_i}=\card{S_j} = k$ and $\card{S_i \cap S_j} \le \gamma$.
\end{theorem}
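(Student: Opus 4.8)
The plan is to give an \emph{explicit} algebraic construction of the required family, of Reed--Solomon (polynomial-method) type, rather than a probabilistic one. The reason to go explicit is that the bound being claimed is essentially tight: a double-counting argument on $(\gamma+1)$-subsets shows that any family of $k$-subsets of a ground set $[m]$ with pairwise intersections at most $\gamma$ has size at most $\binom{m}{\gamma+1}/\binom{k}{\gamma+1} \le (em/k)^{\gamma+1}$, which for $m=\Theta(k^2\ln k)$ is $\Theta(k\ln k)^{\gamma+1}$; so the theorem demands a family within a constant factor of optimal in the base. A naive probabilistic deletion argument loses roughly a factor of $k$ in the base (in the small-$\gamma$ regime that the paper actually uses) and falls short, whereas the polynomial construction matches the packing bound up to constants.

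Here is the construction I would carry out. First pick a prime $q$ with $2k\ln 2k \le q \le 4k\ln 2k$; such a prime exists by Bertrand's postulate applied to $n=\lceil 2k\ln 2k\rceil-1$ (which is $\ge 1$ for all $k\ge 1$). Since $q\ge 2k\ln 2k\ge k$, fix a subset $A\subseteq \mathbb{F}_q$ with $\card{A}=k$. Take the ground set to be $A\times\mathbb{F}_q$, of size $kq\le 4k^2\ln 2k\le 4k^2\ln 4k$, which therefore embeds into $[4k^2\ln 4k]$. For each polynomial $f\in\mathbb{F}_q[x]$ of degree at most $\gamma$, define $S_f=\{(a,f(a)) : a\in A\}$; since $a\mapsto(a,f(a))$ is injective, $\card{S_f}=k$. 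There are $q^{\gamma+1}$ such polynomials (using $k\ge\gamma$ so the polynomial space has full dimension $\gamma+1$), and when $\gamma<k$ the map $f\mapsto S_f$ is injective, because a nonzero polynomial of degree $\le\gamma<k=\card{A}$ cannot vanish on all of $A$. Hence the family $\{S_f\}$ has $N=q^{\gamma+1}\ge(2k\ln 2k)^{\gamma+1}$ members.

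It remains to verify the intersection bound and dispose of the degenerate case. For distinct $f,g$ of degree $\le\gamma$, a point $(a,f(a))$ lies in $S_g$ only if $f(a)=g(a)$, i.e. only if $a$ is a root of the nonzero polynomial $f-g$, of which there are at most $\deg(f-g)\le\gamma$; hence $\card{S_f\cap S_g}\le\gamma$. This yields $N$ sets of size $k$ in $[4k^2\ln 4k]$ with pairwise intersections at most $\gamma$, as required. The one remaining case is $\gamma=k$ (permitted by the hypothesis $k\ge\gamma$), where the intersection constraint is vacuous and one instead checks directly that $\binom{4k^2\ln 4k}{k}\ge(2k\ln 2k)^{k+1}$, which follows from $\binom{m}{k}\ge(m/k)^k$ together with an elementary estimate (with the finitely many small $k$ handled by hand).

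I expect the only real obstacle to be constant bookkeeping rather than any conceptual difficulty: one must land the prime $q$ inside the narrow window $[2k\ln 2k,\,4k\ln 2k]$ so that simultaneously $q^{\gamma+1}$ is large enough and $kq\le 4k^2\ln 4k$, and one must track the slack between $\ln 2k$ and $\ln 4k$ so that the ground-set constant comes out to exactly $4$ and Bertrand's postulate is applicable for all $k\ge 1$ (with the genuinely degenerate small cases, and the case $\gamma=k$, dispatched separately). The conceptual core --- the Reed--Solomon incidence construction and the ``agreement at most degree'' bound --- is short; the rest is checking that the stated numerical constants are mutually consistent.
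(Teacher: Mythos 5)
The paper itself contains no proof of this statement---it is imported verbatim as Theorem~3 of \cite{beideman2014set}---so there is no internal argument to compare against. Your Reed--Solomon construction is correct (the Bertrand prime window $2k\ln 2k\le q\le 4k\ln 2k$, the injectivity of $f\mapsto S_f$ for $\gamma<k$, the intersection bound via the $\le\gamma$ roots of $f-g$, and the separate counting for the vacuous case $\gamma=k$ all check out), and it is essentially the standard explicit polynomial construction underlying the cited result: the stated parameters $(2k\ln 2k)^{\gamma+1}$ and $4k^2\ln 4k$ are exactly what falls out of taking a prime $q$ in that window and the ground set $A\times\mathbb{F}_q$ of size $kq$.
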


With the above technical lemmas, we are ready to present the proof of \Cref{th general low}.

\begin{proof}(Proof of \Cref{th general low})
    Let $\X$ be a space of $n=4k^2 \ln 4k$ examples. By Sauer's lemma, we know that any query family $Q$ with VC dimension $d$ must have
    \begin{align*}
    \card{\dom(Q)} \le \sum_{i=0}^d\binom{n}{i} \le cn^d < n^{d+1} = \left(4k^2 \ln 4k\right)^{d+1},  
    \end{align*}
     when $n$ is larger than some suitably large constant $c$. By \Cref{th intersection}, there exists some integer $N \ge (2k \ln 2k)^{\gamma+1}>(4k^2 \ln 4k)^{d+1}>\card{\dom(Q)}$, such that we are able to find subsets $C_1,\dots,C_N \subseteq X$, where each subset has size $k$ and any pair of these $N$ sets has at most $\gamma$ common examples. Notice that when $k$ is larger than some suitably large constant, $\gamma=3d$ is sufficient to make $(2k \ln 2k)^{\gamma+1}>(4k^2 \ln 4k)^{d+1}$.

    Our next step is to use the set family $\{C_1,\dots, C_N\}$ to construct our hypothesis class $\H$. For $i \in [N]$, define $\H_{C_i}=\{h_S \mid S \subseteq C_i, \card{S} \le 1\}$, where $h_S(x) = 1$ if and only if $x \in C^* \setminus S$. The hypothesis class we use is $\H=\bigcup_{i \in [N]}\H_{C_i}$. 
    
    We start by showing $\H$ has VC dimension at most $\gamma+2 \le 3d+2$. Let $I=\{x_1,\dots,x_{\gamma+3}\}$ be $\gamma+3$ different examples in $\X$. Assuming that $\H$ can shatter $I$, then we obtain that there exists some $h \in \H$ that labels every example in $I$ positive. By construction, there must be some $i \in [N]$ such that $h \in \H_{C_i}$, which implies that $I \subseteq C_i$. However, we next show that there is no $h' \in \H$ can label $x_1,\dots,x_{\gamma+1}$ positive but $x_{\gamma+2},x_{\gamma+3}$ negative. Assuming such an $h'$ exists, then there must be some $j \in [N]$ such that $\{x_1,\dots,x_{\gamma+1}\} \subseteq C_j$. However, if $j \neq i$, then $\card{C_i \cap C_j} \le \gamma$. Thus, we must have $i=j$, which means $h' \in \H_{C_i}$. However, by construction each hypothesis in $\H_{C_i}$ can only label at most one example contained in $C_i$ negative, which gives us a contradiction. So, we conclude the hypothesis class $\H$ we use has VC dimension at most $\gamma+2=3d+2$.

    Next, we show that assuming the labeling domain, the dataset and the space of examples are the same. i.e. $L=S=\X$, for every query class $Q$ with VC dimension at most $d$ there exists a subset of $k$ examples $C^*$ such that every learner $\A$ that makes less than $k/(2\gamma)$ queries will with probability at least $1/3$ mislabel some example $x \in C^*$. Since $N>\card{\dom(Q)}$, by \Cref{lm inclusion}, we know that there exists some $C^* \in \{C_1,\dots, C_N\}$ such that for every query $(T,z) \in Q$, if $T \subseteq C^*$, then $\card{T} \le \gamma$. By \Cref{lm cover}, we know that if $\A$ only makes less than $k/(2\gamma)$ queries from $Q$ then with probability at least $1/3$ some example $x \in C^*$ will be mislabeled by $\A$.

    Thus, for every $d$ and every $k$ that is larger than some constant, we constructed a hypothesis class $\H$ with VC dimension at most $3d+2$ over an example space $\X$ with size $\Tilde{O}(k^2)$, which is also the dataset $S$ to be labeled, such that for every learner $\A$ and query class $Q$ with VC dimension at most $d$, if $\A$ makes less than $k/2d$ queries than there is a true hypothesis $h^* \in \H$ such that with probability at least $1/3$, $\A$ will misclassify at least one of the examples, even assuming the labeling domain $L=S$.
 \end{proof}

We remark that the construction of the hard instance in \Cref{th general low} is fully combinatorial. So, given any large enough space of examples $\X$, we can embed the hard instance we constructed into $\X$ to get a hard instance in that example space.

\subsection{Proof of \Cref{col general low}}\label{app col}

\begin{corollary}[Restatement of \Cref{col general low}]
There is a space of examples $\X$ such that for every $d \in N^+$ and $n \ge d$ large enough, there exists a hypothesis class $\H$ over $\X$ with VC dimension $d$ such that there exists a set of $n$ example $S$ such that
for every region query family $Q$ over $\X$ with $VC\dim(Q) \le (d-3)/3$ and every active learning algorithm $\A$, there exists a true hypothesis $h^* \in \H$, such that if $\A$ makes less than $\poly(n)$ region queries from $Q$, then with probability at least $1/3$, some example $x \in S$ is labeled incorrectly by $\A$. In particular, this even holds when $\A$ knows the labeling domain $L=S$.
\end{corollary}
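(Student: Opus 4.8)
The plan is to derive \Cref{col general low} from \Cref{th general low} by observing that the hard instance constructed in the proof of \Cref{th general low} is purely combinatorial, so the only dependence on the ambient example space is through its cardinality. First I would fix a single ``universal'' example space, say $\X = \N$ (or any countably infinite set), and note that for every $d$ and every sufficiently large $k$, the construction in \Cref{th general low} produces a set $\X_{d,k}$ of $n=4k^2\ln 4k$ examples carrying a hypothesis class $\H_{d,k}$ of VC dimension at most $3d+2$, together with the dataset $S=\X_{d,k}$, such that the stated lower bound holds against every query class of VC dimension at most $d$. Since $\X_{d,k}$ is a finite set, I can embed it into $\N$ via any injection; this embedding is the key step, and it costs essentially nothing because VC dimension, region queries, and the labeler's behavior all transfer verbatim under a relabeling of the ground set.

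The one subtlety — and the reason the bound in \Cref{col general low} is $(d-3)/3$ rather than $(d-2)/3$ — is that once we insist on a \emph{single} fixed $\X$ that must work for all $(d,k)$ simultaneously, we must account for the ``padding'' examples of $\N$ lying outside the image of the embedding. A query region $T \subseteq \N$ intersected with the dataset $S$ behaves exactly as a region in $\X_{d,k}$, and the labeling domain is still $L=S$, so \Cref{lm inclusion} and \Cref{lm cover} apply to the embedded sets $C_1,\dots,C_N$ with no change; the restriction map $T \mapsto T\cap S$ can only shrink $\dom(Q)$, so Sauer's-lemma counting $\card{\{T\cap S : T\in\dom(Q)\}}\le O(n^{d'})$ still holds when $VC\dim(Q)\le d'$. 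To get a clean VC-dimension of exactly $d$ for the hypothesis class over the fixed space $\X=\N$ (rather than ``at most $3d+2$''), I would run the construction with parameter $d' = \lfloor (d-3)/3\rfloor$ in place of $d$, which forces $VC\dim(\H)\le 3d'+2 \le d-1 < d$, and then pad $\H$ with one extra hypothesis (or a single extra shattered point outside all the $C_i$'s) so that $VC\dim(\H)$ equals $d$ on the nose; this is a routine adjustment. With $d' = \lfloor(d-3)/3\rfloor$, any query class with $VC\dim(Q)\le (d-3)/3$ has $VC\dim(Q)\le d'$, and \Cref{th general low}'s argument gives a true hypothesis $h^*\in\H$ forcing $\poly(n)$ queries, with $n$ polynomial in $k$.

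Concretely, the steps in order: (i) restate the construction of \Cref{th general low} with parameter $d'=\lfloor(d-3)/3\rfloor$, producing $\X_{d,k}$, $\H'=\bigcup_i\H_{C_i}$, and $S=\X_{d,k}$ of size $n=\Tilde\Theta(k^2)$ with $VC\dim(\H')\le 3d'+2\le d-1$; (ii) choose an injection $\iota\colon\X_{d,k}\hookrightarrow\X$ into the fixed universal space $\X$ and transport $\H'$, $S$ along $\iota$, checking that VC dimension is invariant and that $\H'$ can be extended to a class $\H$ of VC dimension exactly $d$ by adding at most a few hypotheses supported on fresh coordinates of $\X$; (iii) observe that for any query family $Q$ over $\X$ with $VC\dim(Q)\le(d-3)/3\le d'$, the traces $\{T\cap\iota(S)\}$ form a set system over $\iota(S)$ of VC dimension at most $d'$, so $N$ can still be chosen with $N>\card{\{T\cap\iota(S):T\in\dom(Q)\}}$ and pairwise intersections $\le\gamma=3d'$; (iv) apply \Cref{lm inclusion} to select $C^*$, then \Cref{lm cover} to conclude that any learner making fewer than $k/(2\gamma)=\poly(n)$ queries mislabels some $x\in S$ with probability at least $1/3$, even when $L=S$.

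The main obstacle I anticipate is purely bookkeeping rather than conceptual: making the VC dimension of the final $\H$ over the \emph{fixed} space $\X$ come out to exactly $d$ while simultaneously keeping the pairwise-intersection parameter $\gamma$ small enough that \Cref{th intersection} still yields $N$ exceeding the Sauer bound for query classes of dimension $(d-3)/3$. Getting the constants to line up is exactly what shifts the threshold from $(d-2)/3$ in \Cref{th general low} to $(d-3)/3$ here, and verifying that the extra padding hypotheses do not interfere with the indistinguishability argument of \Cref{lm cover} (because they are supported away from every $C_i$) is the one place where a careful — though short — argument is needed.
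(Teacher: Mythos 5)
Your proposal is correct and follows essentially the same route as the paper: exploit that the hard instance of \Cref{th general low} is purely combinatorial, embed it into a single fixed countable space, observe that with $L=S$ the answers depend only on $T\cap S$ so the Sauer/trace counting and \Cref{lm inclusion}, \Cref{lm cover} apply unchanged, and spend the one unit of slack between $(d-2)/3$ and $(d-3)/3$ on controlling the VC dimension of the hypothesis class over the big space. The only cosmetic difference is that the paper fixes $\X=\bigcup_m \X_m$ and takes $\H=\bigcup_m H_m$ with each hypothesis extended by $-1$ off its block, so one class (of VC dimension at most $d+1$) works for every $n$ simultaneously, whereas you re-embed and pad the class separately for each $(d,n)$ to hit VC dimension exactly $d$ --- both choices are compatible with the corollary's quantifiers.
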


\begin{proof}(Proof of \Cref{col general low})
 For each $m$, let $\X_m=\{x^{(m)}_i\}_{i=1}^m$ be the space of examples constructed in \Cref{th general low} with parameter $m$. Let $\X=\cup_{m}\X_m$ be a space of examples. Since the constructions of $\X_m$ are fully combinatorial, we can assume for each $m_1,m_2 \in N^+$, $\X_{m_1} \cap \X_{m_2} = \emptyset$.
 
 Let $d \in N^+$ and let $H_m$ be the hypothesis class over $\X_m$ with VC dimension $d$ constructed in \Cref{th general low}. 
 For each $m$ and for each $f \in H_m$, we extend $f$ to $\X$ in the following way. For every $x \in \X \setminus \X_m$, $f(x)=-1$. $H_m$ still has VC dimension $d$ over $\X$ under the extension. Furthermore, since each $\X_m$ is disjoint, $H= \cup_m H_m$ has VC dimension at most $d+1$. For each $n>d$ larger enough, let $S_n = \X_n \subseteq \X$ be a subset of $n$ examples. By \Cref{th general low}, we know that for every learning algorithm $\A$ and for every query class $Q$ with VC dimension at most $(d-2)/3$ there exists a hypothesis $h^* \in H_n \subseteq \H$ such that $\A$ must make $\poly(n)$ queries from $Q$ to perfectly label $S_n$ with probability more than $2/3$, even if $\A$ knows that a query will be checked based on $S_n$. 
\end{proof}

\section{Missing Details in \Cref{sec efficient}}\label{app efficient}

In this section, we design efficient learning algorithms for several concrete hypothesis classes including the class of union of $k$ intervals, the class of high dimensional boxes, and the class of high dimensional halfspaces, giving missing details in \Cref{sec efficient}.

\subsection{Proof of \Cref{th k-subset}}\label{app interval}
In this section, we prove \Cref{th k-subset} by designing an efficient learning algorithm for the class of the union of $k$ intervals. We restate \Cref{th k-subset} as follows.

\begin{theorem}(Restatement of \Cref{th k-subset})
    Let $\X=\R$ be the space of examples and $\H=\{h \mid \exists [a_i,b_i], i \in [k], s.t. h(x)=1 \iff x \in \cup_{i=1}^k [a_i,b_i] \}$ be the class of union of $k$ intervals over $\R$. Let $I$ be the class of intervals over $\R$ and query family $Q=\{(T,z) \mid T \in I, z \in \{\pm 1\}\}$. There is a learner $\A$ such that for every subset of $n$ examples $S$, labeled by any $h^* \in \H$ and for every labeling domain $S \subseteq L$(possibly unknown to $\A$), $\A$ runs in $O((T+n)k\log n)$ time, makes $O(k\log n)$ queries from $Q$ and labels every example in $S$ correctly, where $T$ is the running time to implement a single region query.
\end{theorem}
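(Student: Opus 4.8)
\textbf{Proof proposal for \Cref{th k-subset}.} The plan is to repeatedly peel off, from the left, the block of remaining examples lying in the leftmost ``monochromatic'' piece of $h^*$, using only interval queries and binary search. As preprocessing I would sort $S$ once as $x^{(1)}<\cdots<x^{(n)}$ and keep a pointer to the current leftmost unlabeled example; all query regions will be closed intervals $[x^{(i)},x^{(j)}]$, so they all belong to the query family $Q$ of \Cref{th k-subset}, whose VC dimension is at most $2$ by \Cref{lm total order}. The structural fact I rely on is that any $h^*\in\H$ changes value at most $2k$ times along $\R$, so $\R$ is a union of at most $2k+1$ intervals $I_1,\dots,I_{2k+1}$ on each of which $h^*$ is constant (adjacent ones carrying opposite labels). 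In particular, if $x^{(1)}$ is the smallest remaining example and $x^{(1)}\in I_{j_0}$, then for every remaining $x^{(j)}\in I_{j_0}$ we have $[x^{(1)},x^{(j)}]\subseteq I_{j_0}$, so every point of $\R$ — hence every point of the (unknown) labeling domain $L$ — inside $[x^{(1)},x^{(j)}]$ carries the label $y:=h^*(x^{(1)})$, and therefore $q([x^{(1)},x^{(j)}],y)=1$.

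Each round then proceeds as follows. First, a single query on the singleton interval $[x^{(1)},x^{(1)}]$ reveals $y=h^*(x^{(1)})$ (this interval meets $L$ since $x^{(1)}\in S\subseteq L$, so the answer is exact and the ``arbitrary answer on empty intersection'' clause is not triggered). Then I binary-search over $j$ for the largest index $i^*$ with $q([x^{(1)},x^{(i^*)}],y)=1$; monotonicity is immediate because $[x^{(1)},x^{(j')}]\subseteq[x^{(1)},x^{(j)}]$ for $j'<j$, and every query interval contains $x^{(1)}\in L$, so again no empty-intersection issue arises. I then label $x^{(1)},\dots,x^{(i^*)}$ all by $y$: this is correct because $q([x^{(1)},x^{(i^*)}],y)=1$ certifies that every example of $L$, a fortiori every example of $S$, inside $[x^{(1)},x^{(i^*)}]$ truly has label $y$ under $h^*$. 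Moreover, by the structural fact above, $i^*$ is at least the last index lying in $I_{j_0}$, so after the round the new leftmost unlabeled example lies in some piece $I_j$ with $j>j_0$.

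Since the index of the leftmost occupied piece strictly increases each round and there are at most $2k+1$ pieces, the algorithm terminates after at most $2k+1$ rounds, each using $1+O(\log n)$ queries and $O((T+n)\log n)$ time (the $O(T)$ per query, the $O(\log n)$ binary-search steps, and $O(n)$ bookkeeping to advance the pointer and mark labels); this gives the claimed $O(k\log n)$ queries and $O((T+n)k\log n)$ running time, and correctness of every label was verified above. The only genuinely delicate point — and the one I would write most carefully — is the interaction with the unknown $L$: one must check that (i) an answer of $1$ still certifies the labels of the $S$-examples even though the labeler reasons about $L\supseteq S$, (ii) binary search stays monotone and never issues an empty-intersection query, and (iii) the round count remains $O(k)$ despite $L$ possibly being far larger than $S$. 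All three follow from the same observation, namely that the interval from $x^{(1)}$ to any same-piece remaining example stays inside a single constancy interval of $h^*$, so extra points of $L$ in that range cannot affect the answer.
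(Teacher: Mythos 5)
Your proposal is correct and follows essentially the same strategy as the paper's proof: sort $S$, binary-search with interval queries for the longest prefix $[x^{(1)},x^{(i^*)}]$ certified monochromatic over $L$, peel it off, and repeat, with the $O(k)$ round bound coming from the at most $2k+1$ constancy pieces of $h^*$. The only (harmless) differences are that you learn $h^*(x^{(1)})$ upfront with a singleton-interval query while the paper's \textsc{FindLeft} queries both labels during the search, and your progress argument tracks the strictly increasing piece index rather than counting boundary pairs.
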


We start with \Cref{alg findleft}, a sub-routine used to label examples in the left-most interval of the target hypothesis. The guarantee of \Cref{alg findleft} is presented in \Cref{lm left}.

\begin{algorithm}[ht]
		\caption{\textsc{FindLeft}$(S)$ (Find the smallest $i^*$ such that $q([x^{(1)},
x^{(i^*)}],y)=1$) for some $y \in \{\pm 1\}$}\label{alg findleft}
		\begin{algorithmic} 
\State Order $S$ as $x^{(1)}<\dots<x^{(m)}$, where $m=\card{S}$.
\If{$q([x^{(1)},x^{(m)}],y)=1$ for some $y \in \{\pm 1\}$}
\State \Return $m$
\EndIf
\State Let $C=\{x^{(1)},\dots,x^{(m)}\}$ \Comment{Candidates of the boundary points}
\While{$\card{C}>1$} \Comment{Find the boundary via binary search}
\State Let $x'$ be the median of $C$. \Comment{If $\card{C}$ is even, select $x'$ as the larger one}
\If{$q([x^{(1)},x'],y)=0$, $ \forall y \in \{\pm 1\}$} 
\State Remove $x'$ and all points greater than $x'$ from $C$
\Else \State Remove all points less than $x'$ from $C$
\EndIf
\EndWhile
\Return the index of the single element in $C$ 
		\end{algorithmic}
	\end{algorithm}

\begin{lemma}\label{lm left}
    Let $S=(x^{(1)},\dots,x^{(m)}) \subseteq \R$ be a subset of $n$ examples labeled by a union of $k$ intervals $h^*=\cup_{i=1}^k [a_i,b_i]$. Let $L \subseteq \R$ be any arbitrary labeling domain such that $S \subseteq Y$. \textsc{FindLeft}$(S)$ makes $O(\log m)$ interval queries and returns the smallest index $i^*$ such that there is some $y \in \{\pm 1\}$ such that $q([x^{(1)},x^{(i^*)}],y)=1$ and for every $y \in \{ \pm 1\}$, $q([x^{(1)},x^{(i^*+1)}],y)=0$.
\end{lemma}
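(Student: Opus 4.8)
The plan is to prove \Cref{lm left} by analyzing \textsc{FindLeft}$(S)$ directly: first handling the degenerate case where the whole range is monochromatic, then arguing that the binary search correctly locates the claimed boundary index, and finally counting queries. I would set up notation by writing $S = (x^{(1)} < \dots < x^{(m)})$ and, for each $i \in [m]$, consider the predicate $P(i)$ that asserts ``there is some $y \in \{\pm 1\}$ with $q([x^{(1)}, x^{(i)}], y) = 1$''. The first observation is that $P$ is \emph{monotone non-increasing} in $i$: if $[x^{(1)}, x^{(i+1)}] \cap L$ is entirely labeled $y$, then so is $[x^{(1)}, x^{(i)}] \cap L$, since $[x^{(1)}, x^{(i)}] \subseteq [x^{(1)}, x^{(i+1)}]$ and $[x^{(1)}, x^{(i)}] \cap L \neq \emptyset$ (it contains $x^{(1)}$). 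Hence $P$ holds on an initial segment $\{1, \dots, i^*\}$ and fails afterward, where $i^* \ge 1$ because $P(1)$ trivially holds (the singleton $\{x^{(1)}\}$ has a well-defined label). If $P(m)$ holds, the algorithm returns $m$ in the first \textbf{if} and there is nothing more to prove; otherwise $i^* < m$ and the binary search is invoked.

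Next I would verify the binary-search invariant. The loop maintains a candidate set $C$, initialized to all of $S$, and I claim it preserves the invariant that $i^* \in \{\,i : x^{(i)} \in C\,\}$, i.e. the target index is never discarded. The test in the loop evaluates whether $q([x^{(1)}, x'], y) = 0$ for \emph{all} $y$, which is exactly $\lnot P(j')$ where $x' = x^{(j')}$ is the (upper) median of $C$. If $\lnot P(j')$ holds then $j' > i^*$, so it is safe to remove $x'$ and everything above it; if $P(j')$ holds then $j' \le i^*$, so it is safe to remove everything strictly below $x'$ — crucially $x'$ itself is kept, which is needed in case $j' = i^*$. Since each iteration removes at least $\lfloor |C|/2 \rfloor \ge 1$ elements (here the choice of the \emph{larger} median when $|C|$ is even matters to guarantee strict progress from $|C|=2$), the loop terminates with $|C| = 1$, and by the invariant that single remaining index must be $i^*$. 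I then need to confirm the two output properties: $P(i^*)$ holds (some $y$ with $q([x^{(1)}, x^{(i^*)}], y) = 1$) by definition of $i^*$ as the last index where $P$ holds, and $\lnot P(i^*+1)$ holds (for every $y$, $q([x^{(1)}, x^{(i^*+1)}], y) = 0$) because $i^* + 1 \le m$ in this branch and $i^*$ is the \emph{largest} index with $P$. One subtlety worth a sentence: the labeler may answer arbitrarily when $[x^{(1)}, x'] \cap L = \emptyset$, but this never happens since $x^{(1)} \in S \subseteq L$ always lies in the interval.

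For the query count: the initial \textbf{if} uses at most two queries (one per value of $y$, or we can phrase it as a single check), and each iteration of the \textbf{while} loop uses at most two queries to evaluate the test for both $y \in \{\pm 1\}$. Since $|C|$ at least halves each iteration starting from $m$, there are $O(\log m)$ iterations, giving $O(\log m)$ interval queries total. Every region queried is of the form $[x^{(1)}, x^{(j)}]$, which is an interval, so all queries lie in $Q$. The running time bookkeeping (finding medians, maintaining $C$) is $O(m)$ per iteration or better with a sorted array and two pointers, hence $O(m \log m)$ plus the cost of the $O(\log m)$ query implementations; I would note this contributes the per-call cost that \Cref{th k-subset} later sums over the $O(k)$ intervals.

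I do not anticipate a deep obstacle here — the statement is essentially correctness of a binary search — but the step requiring the most care is the monotonicity argument for $P$ combined with the precise semantics of the query when restricted to the unknown superset $L$: one must check that shrinking the query interval never turns a ``$1$'' answer into a ``$0$'' (true, because the smaller interval still meets $L$ at $x^{(1)}$ and is a subset) and be careful that ``$P$ fails'' means \emph{both} labels fail, so that $\lnot P(i^*+1)$ indeed gives the ``for every $y$'' conclusion in the lemma statement. The second most delicate point is the even-median tie-breaking rule, which is exactly what prevents the loop from stalling when $|C| = 2$ with the target being the smaller element.
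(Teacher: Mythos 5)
Your proof is correct and follows essentially the same route as the paper's: you establish that the answer to the prefix queries $[x^{(1)},x^{(i)}]$ is monotone in $i$ (the paper's opening observation), use this to argue the binary search never discards the target index $i^*$, and count at most two queries per halving iteration for the $O(\log m)$ bound. Your extra remarks on the upper-median tie-breaking and on $x^{(1)} \in S \subseteq L$ guaranteeing nonempty query regions are consistent refinements of the same argument.
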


\begin{proof}(Proof of \Cref{lm left})
We first notice that if $q([x^{(1)},x^{(i)}],y)=0$, $\forall y \in \{ \pm 1\}$, then $\forall j>i$, we also have $q([x^{(1)},x^{(j)}],y)=0$, $\forall y \in \{ \pm 1\}$. This is because for any labeling domain $L$, $[x^{(1)},x^{(i)}] \cap L \subseteq [x^{(1)},x^{(j)}] \cap L$. Thus, if $[x^{(1)},x^{(i)}] \cap L$ contains examples with both positive examples and negative examples then so does $[x^{(1)},x^{(j)}] \cap L$. This implies that if some $x'$ such that $q([x^{(1)},x'],y)=1$, for some $y \in \{\pm 1\}$, is removed from $C$, then we must have found some $x''>x'$ such that $q([x^{(1)},x'],y)=1$, for some $y \in \{\pm 1\}$. In particular, this implies that $x^{(i^*)}$, where $i^*$ is the index that satisfies the statement, will never be removed from $C$. This proves the correctness of \Cref{alg findleft}. It remains to prove the query complexity of \Cref{alg findleft}. In each iteration of \Cref{alg findleft}, we only make at most $2$ region queries and remove half of the remaining points in $C$. This implies that \Cref{alg findleft} will run at most $O(\log(\card{C}))$ iterations and the query complexity is $O(\log m)$.
\end{proof}

Given \Cref{alg findleft} and \Cref{lm left}, we are now ready to present \Cref{alg ksubset}, the learning algorithm and the proof of \Cref{th k-subset}.

\begin{algorithm}[ht]
		\caption{\textsc{Label $k$-Interval}$(S,\H)$ (Label $S$ with interval queries given hypothesis class $\H$ )}\label{alg ksubset}
		\begin{algorithmic}

\While {$\card{S}>0$}
\State Order $S$ as $x^{(1)}<\dots<x^{(m)}$, where $m=\card{S}$.
\State $i^* \gets \textsc{FindLeft}(S)$ 
\If{$q([x^{(1)},x^{(i^*)}],1)=1$}
\State Label $x^{(1)},\dots,x^{(i^*)}$ by 1
\Else \State Label $x^{(1)},\dots,x^{(i^*)}$ by -1
\EndIf
\State $S \gets S \setminus \{x^{(1)},\dots,x^{(i^*)}\}$.
\EndWhile
		\end{algorithmic}
	\end{algorithm}

\begin{proof}(Proof of \Cref{th k-subset})
    We first show the correctness of \Cref{alg ksubset}. By \Cref{lm left}, we know that every time \Cref{alg ksubset} calls \Cref{alg findleft}, we will find some $i^*$ such that for some $y \in \{\pm 1\}$, $q([x^{(1)},x^{(i^*)}],y)=1$, which implies that the true labels of $x^{(1)},\dots, x^{(i^*)}$ are $y$. Thus, \Cref{alg ksubset} labels every example correctly.

Next, we bound the query complexity of \Cref{alg ksubset}. By \Cref{lm left}, we know that each time we call \Cref{alg findleft}, the example $x^{(i^*)}$ satisfies the following property. There is some $y \in \{\pm 1\}$ such that $q([x^{(1)},x^{(i^*)}],y)=1$ but $q([x^{(1)},x^{(i^*+1)}],y)=0$. 
Since the target hypothesis $h^*$ is a union of $k$ intervals, over any dataset $S$, there are at most $2k$ such pair of $x^{(i^*)}$ and $x^{(i^*+1)}$. Each call of \Cref{alg findleft} finds one of such pair. Thus \Cref{alg ksubset} calls \Cref{alg findleft} at most $2k$ times. By \Cref{lm left}, each call of \Cref{alg findleft} will make $O(\log n)$ queries. Thus, the query complexity of \Cref{alg ksubset} is $O(k \log n)$.

Furthermore, we notice that the running time of \Cref{alg findleft} is $O((T+n)\log n)$, since each time we do a binary search, make 2 region queries and remove examples from the candidate set $C$, which takes $O(T+n)$ time. Thus the running time of \Cref{alg ksubset} is $O((T+n)k\log n)$.
\end{proof}

\subsection{Proof of \Cref{th box}}\label{app box}

We present the proof of \Cref{th box}, restated as follows.

\begin{theorem}(Restatement of \Cref{th box})
    Let $\X=\R^d$ be the space of examples and $\H=\{\prod_{i=1}^d [a_i,b_i] \mid a_i,b_i \in [-\infty,\infty]\}$ be the class of axis-parallel boxes in $\R^d$ that labels $\X$. There is a query class $Q$ over $\R^d$ with VC dimension $O(\log d)$ and an efficient algorithm $\A$ such that for every set of $n$ examples $S \subseteq \R^d$, every target hypothesis $h^* \in \H$, and for every labeling domain $S \subseteq L$(possibly unknown to $\A$), $\A$ runs in $O((T+n)d\log n)$ time, makes $O(d\log n)$ queries from $Q$ and labels every example in $S$ correctly, where $T$ is the running time to implement a single region query.
\end{theorem}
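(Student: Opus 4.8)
The plan is to mimic the one-dimensional interval algorithm (\Cref{alg ksubset}) coordinate by coordinate, but to be careful about two things: (i) the target box $h^*=\prod_{i=1}^d[a_i^*,b_i^*]$ has two boundaries in each of the $d$ coordinates, so naively we would need $2d$ rounds of binary search and also a query class that can talk about each of the $2d$ axis-aligned halfspaces; and (ii) the queries are answered on an unknown superset $L\supseteq S$, so we must only ever conclude things that are monotone in $L$. The key structural observation, already sketched in the paper, is that it suffices to learn an \emph{under-approximation} $\hat h=\prod_{i=1}^d[\hat a_i,\hat b_i]\subseteq h^*$ with the property that for every $x\in S$, $x_i>\hat b_i$ (resp. $x_i<\hat a_i$) forces the true label of $x$ to be $-1$; then $\hat h$ labels $S$ perfectly, because any $x$ with $\hat h(x)=-1$ violates some constraint of $\hat h$, hence lies outside $h^*$ in that coordinate, hence is truly negative; and any $x$ with $\hat h(x)=+1$ lies in $h^*$ since $\hat h\subseteq h^*$.

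The main body of the algorithm: for each coordinate $i\in[d]$ and each of the two sides, sort $S$ by the $i$-th coordinate as $x^{(1)}_i\le\cdots\le x^{(n)}_i$ and do a binary search, exactly as in \textsc{FindLeft}, using the halfspace region query $(\{x:x_i\ge c\},-1)$ with $c$ ranging over the observed coordinate values. Monotonicity in $L$ holds because $\{x:x_i\ge c\}\cap L \supseteq \{x:x_i\ge c'\}\cap L$ whenever $c\le c'$, so ``this upper tail is all-negative'' is a downward-closed event in $c$; this is the analogue of the first paragraph in the proof of \Cref{lm left}, and it guarantees that the binary search correctly locates the largest threshold $\hat b_i:=x^{(i^*)}_i$ for which $q(\{x:x_i\ge \hat b_i\},-1)=1$ while $q(\{x:x_i\ge x^{(i^*+1)}_i\},-1)=0$ (or determines the tail is empty/positive, in which case $\hat b_i=+\infty$). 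One subtlety, just as in the halfspace section, is that an intermediate region may have empty intersection with $L$ and return a garbage answer; this is handled by first querying the label of a single $x\in S$, and then replacing each query $(Z,y)$ by $(Z\cup\{x\},y)$, which never worsens the query class and prevents empty queries. Do the symmetric thing with $(\{x:x_i\le c\},-1)$ to get $\hat a_i$. Then output $\hat h=\prod_i[\hat a_i,\hat b_i]$, labeling $x$ by $+1$ iff $x\in\hat h$.

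For the query complexity: each coordinate-and-side costs one binary search of $O(\log n)$ queries, so $O(d\log n)$ queries total and $O((T+n)d\log n)$ time (sorting plus $O(\log n)$ queries plus pointer bookkeeping per coordinate). For correctness: $\hat a_i\le a_i^*$ and $\hat b_i\ge b_i^*$ need not hold literally, but what we really need is the under-approximation property above, and that is exactly what the stopping condition of the binary search certifies — $q(\{x:x_i\ge \hat b_i\},-1)=1$ says every $x\in S$ (indeed every $x\in L$) with $x_i\ge\hat b_i$ is truly negative; combined over the $2d$ searches this gives that every $x\in S$ outside $\hat h$ is truly negative, and $\hat h\subseteq h^*$ because each retained threshold is consistent with some all-negative tail so cannot cut into the positive region of $h^*$ on $S$. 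Finally, the VC dimension bound: each region used is an intersection of at most two parallel axis-aligned halfspaces over the $d$ coordinates together with possibly one extra point; the family of single axis-aligned halfspaces $\{\{x:x_i\ge c\}:i\in[d],c\in\R\}$ has VC dimension $O(\log d)$ (one can shatter at most $\log_2 d$ points, since with $m$ points a single coordinate-threshold test realizes at most $md$ dichotomies, forcing $2^m\le 2md$), and closing under bounded intersections and adding a single point changes this only by a constant factor, giving $\mathrm{VCdim}(Q)=O(\log d)$.

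The main obstacle I anticipate is not any single step but getting the under-approximation argument airtight in the presence of the unknown $L$: one must check that the ``cannot cut into $h^*$'' direction ($\hat h\subseteq h^*$ \emph{restricted to its effect on $S$}) really follows from the all-negative-tail certificates rather than from any literal inequality $\hat b_i\ge b_i^*$, since the latter can fail when no $S$-point separates $b_i^*$ from the next threshold. The clean way to phrase it is: if $x\in S$ has $\hat h(x)=+1$ but $h^*(x)=-1$, then $x$ is cut off by $h^*$ in some coordinate $i$, say $x_i>b_i^*$; but then the half-open tail $\{x':x'_i\ge x_i\}$ contains $x$ which is negative and also must be part of our search's final ``all-negative'' certificate only if $x_i\ge\hat b_i$, contradicting $\hat h(x)=+1$ — so in fact $x_i<\hat b_i\le$ (next threshold), and chasing this shows $x$ would have been detected as negative, contradiction. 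Spelling that out carefully is the crux; everything else is a routine adaptation of \Cref{th k-subset}.
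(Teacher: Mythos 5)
Your overall route is the same as the paper's: coordinate-wise binary search with one-sided axis-aligned tail queries $(\{x : x_i \ge c\},-1)$ over thresholds taken from the observed coordinates, an under-approximating box that labels $S$, and the $O(\log d)$ VC bound for axis-aligned halfspaces; the $O(d\log n)$ query count and running time are right, and your empty-region worry is moot here since every queried tail contains the $S$-point defining its threshold (so the $\cup\{x'\}$ patch is unnecessary, though harmless). One slip first: because the tail $\{x : x_i \ge c\}$ shrinks as $c$ grows, the event ``this tail is all-negative'' is \emph{upward}-closed in $c$, not downward-closed as you wrote, so the correct stopping configuration is the largest candidate $c_0$ with answer $0$ adjacent to the smallest candidate $c_1$ with answer $1$; your stated characterization (``largest threshold for which $q=1$ while $q$ at the next one is $0$'') cannot occur under this monotonicity.

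The genuine gap sits exactly where you flag the crux: the direction ``labeled positive $\Rightarrow$ truly positive.'' Your stated reason --- ``each retained threshold is consistent with some all-negative tail so cannot cut into the positive region of $h^*$ on $S$'' --- is not a valid inference: an answer-$1$ certificate above a threshold says nothing about how that threshold compares to $b_i^*$, and your contrapositive ``chasing'' paragraph never supplies the missing ingredient, ending instead with the claim that such an $x$ ``would have been detected as negative,'' which the $i$-th search need not do. The step you need is the one the paper isolates in \Cref{lm boundary}: an answer of $0$ at the retained threshold $c_0$ means some point of $L$ in the tail $\{x : x_i \ge c_0\}$ is truly \emph{positive}, hence lies in $h^*$, hence $c_0 \le b_i^*$ (and symmetrically for the lower side). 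With that in hand, any $x \in S$ that your box labels positive has, in every coordinate, $x_i \le c_0 \le b_i^*$ (its coordinate is a candidate value strictly below $c_1$), so $x \in h^*$ --- a direct contradiction with the assumed $x_i > b_i^*$. Once you state this positive-witness argument, your proof coincides with the paper's (\Cref{alg find boundary}, \Cref{lm boundary}, and the assembly in \Cref{alg box}); everything else in your write-up, including the VC shattering count, is fine.
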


Similar to what we did in Appendix~\ref{app interval}, we start with \Cref{alg find boundary}, a subroutine we use to approximately learn a boundary of the target hypothesis. The guarantee of \Cref{alg find boundary} is presented in \Cref{lm boundary}.

\begin{algorithm}[ht]
		\caption{\textsc{FindBoundary}$(S,w)$ (Find the boundary point in $S$ along direction $w$)}\label{alg find boundary}
		\begin{algorithmic} 
\State Order $S$ as $x^{(1)},\dots, x^{(m)}$, where $m=\card{S}$, such that $w\cdot x^{(1)} \le \dots \le w\cdot x^{(m)}$.
\If{$q(\{x \mid w \cdot x \ge w \cdot x^{(1)}\},0)=1$}
\State \Return $-\infty$
\EndIf
\State Let $C=\{w \cdot x^{(1)},\dots,w\cdot x^{(m)}\}$ \Comment{Candidates of boundary points}
\While{\card{$C$}>1} \Comment{Find the boundary point via binary search}
\State Let $b$ be the median of $C$. \Comment{If $\card{C}$ is even, select $b$ as the larger one}
\If{$q(\{x \mid w \cdot x \ge b\},-1)=1$} 
\State Remove $b$ and all elements greater than $b$ from $C$
\Else \State Remove all elements less than $b$ from $C$
\EndIf
\EndWhile
\State
\Return the single element in $C$ 
		\end{algorithmic}
	\end{algorithm}

To prove \Cref{th box}, we first prove the following technical lemma.
\begin{lemma}\label{lm boundary}
    Let $S \subseteq \R^d$ be a subset of $n$ examples labeled by an axis-parallel box $h^*=\prod_{i=1}^d [a_i,b_i]$. Let $L \subseteq \R^d$ be any arbitrary labeling domain such that $S \subseteq L$.
    For every $i \in [d]$, $\textsc{FindBoundary}(S,e_i)$ returns $\hat{b}_i\le b_i$ by making $O(\log n)$  queris such that for every $x \in S$ with $x_i>\hat{b}_i$, $x$ is labeled by $-1$. Symmetrically, $\textsc{FindBoundary}(S,-e_i)$ returns $\hat{a}_i\ge a_i$ by making $O(\log n)$ queris such that for every $x \in S$ with $x_i<\hat{a}_i$, $x$ is labeled by $-1$.   
\end{lemma}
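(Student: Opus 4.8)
The plan is to analyze \textsc{FindBoundary}$(S, e_i)$ as a binary search on the sorted values $w \cdot x^{(1)} \le \dots \le w \cdot x^{(m)}$ with $w = e_i$, and to argue two things: (i) the query complexity is $O(\log n)$, which is immediate since each iteration of the \texttt{while} loop makes at most one region query and halves $|C|$, plus the one initial query; and (ii) the returned value $\hat b_i$ satisfies $\hat b_i \le b_i$ and that every $x \in S$ with $x_i > \hat b_i$ has label $-1$. The case of $-e_i$ is completely symmetric (it learns the lower boundary $a_i$), so I would only write the $e_i$ case in detail and remark that the other follows by replacing $x_i \ge c$ with $x_i \le c$ throughout.

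The monotonicity observation is the crux. First I would note that the region queries used are of the form $(\{x : w\cdot x \ge b\}, -1)$, and that for $b \le b'$ we have $\{x : w\cdot x \ge b'\} \cap L \subseteq \{x : w \cdot x \ge b\} \cap L$. Since $h^*$ is an axis-parallel box, $h^*(x) = -1$ whenever $x_i > b_i$; hence if $q(\{x : x_i \ge b\}, -1) = 1$ (all points of $L$ in that halfspace are negative) then $q(\{x: x_i \ge b'\}, -1) = 1$ for all $b' \ge b$ as well — the "all-negative" property is upward closed in the threshold. Dually, if $q(\{x: x_i \ge b\}, -1) = 0$, the same holds for all $b'' \le b$. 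This means the predicate "$q(\{x : x_i \ge b\}, -1) = 1$" is monotone in $b$, so binary search over the candidate thresholds $\{w\cdot x^{(j)}\}$ correctly locates the smallest $x^{(j^*)}$-value $\hat b_i$ among the dataset coordinates for which the answer is $1$ (or the algorithm returns $-\infty$ in the degenerate case where even the smallest threshold already gives answer $1$, handled by the initial \texttt{if}). I would track the loop invariant: $C$ always contains the correct threshold value in the sense that for every value $c$ remaining in $C$ with $c$ at least the true answer the query is $1$, and for every $c$ strictly below it the query is $0$; removing the upper/lower half according to the query on the median preserves this.

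From the returned $\hat b_i = w \cdot x^{(j^*)}$ with $q(\{x : x_i \ge \hat b_i\}, -1) = 1$, I conclude directly: every $x \in S \subseteq L$ with $x_i > \hat b_i$ lies in $\{x: x_i \ge \hat b_i\} \cap L$, hence has label $-1$. It remains to show $\hat b_i \le b_i$. Suppose instead $\hat b_i > b_i$; I need a point of $L$ in $\{x : x_i \ge \hat b_i\}$ with label $+1$ to contradict $q = 1$, but a priori $L$ may contain no positive point there — this is the one genuinely delicate spot. The resolution: if $q(\{x : x_i \ge \hat b_i\}, -1) = 1$ but $\hat b_i > b_i$, then $\{x : x_i \ge \hat b_i\}\cap L \subseteq \{x: x_i > b_i\}\cap L$ consists only of true-negative points, so declaring $\hat b_i$ as the learned boundary is still \emph{safe} — it never mislabels a dataset point; so strictly I do not need $\hat b_i \le b_i$ for correctness of the final box algorithm, only for the cleanliness of the statement. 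To actually get $\hat b_i \le b_i$ one argues that binary search returns the \emph{smallest} dataset coordinate with answer $1$: if $b_i$ is finite and some $x \in S$ has $b_i < x_i$ then... hmm — actually the honest statement is that $\hat b_i$ is the smallest value among $\{x^{(j)}_i\}$ with query answer $1$, and I would just prove "$\le b_i$" under the interpretation that if no such safe threshold is needed the algorithm returns $-\infty$ (treated as $\le b_i$ vacuously), and otherwise minimality of $\hat b_i$ together with the fact that the threshold just above $b_i$ already has answer $1$ forces $\hat b_i \le (\text{smallest } x^{(j)}_i \text{ exceeding } b_i) $, and I would phrase the lemma so that "$\hat b_i \le b_i$" is replaced, if needed, by "$\hat b_i$ is safe", which is what the downstream proof of \Cref{th box} actually uses. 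The main obstacle, then, is precisely this mismatch between "$\hat b_i \le b_i$" as literally stated and what the unknown labeling domain $L$ actually lets us certify; I would handle it by making the loop invariant about the \emph{query answers} rather than about the geometry of $h^*$, and deriving the geometric consequence ($\hat b_i \le b_i$, or its safe surrogate) at the very end.
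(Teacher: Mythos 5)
Your query-count analysis and the monotonicity observation are fine, but you have misread what \Cref{alg find boundary} returns, and the error propagates through the rest of the argument. When the median $b$ receives answer $1$ the algorithm discards $b$ and everything above it, and when it receives answer $0$ it keeps $b$ and discards everything below; combined with the monotonicity you noted, the surviving element is the \emph{largest} dataset coordinate $x^{(j^*)}_i$ at which the query answers $0$, not (as you claim) the smallest one at which it answers $1$. This matters because the answer-$0$ query is exactly what certifies $\hat{b}_i \le b_i$, the step you could not close: since the region $\{x \mid x_i \ge x^{(j^*)}_i\}$ contains $x^{(j^*)} \in S \subseteq L$, the answer is not the arbitrary empty-region answer, so answer $0$ produces a point of $L$ in that region with label $+1$; every positive point lies in the box $\prod_j [a_j,b_j]$ and hence has $i$-th coordinate at most $b_i$, giving $\hat{b}_i = x^{(j^*)}_i \le b_i$. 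The second property then follows from maximality of $j^*$: any $x \in S$ with $x_i > \hat{b}_i$ equals some $x^{(j)}$ with $j > j^*$, the query at threshold $x^{(j)}_i$ answers $1$, and $x$ lies in that region, so it is negative. This is the paper's argument, and it is the piece missing from your write-up.

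Your proposed repair --- weakening ``$\hat{b}_i \le b_i$'' to ``$\hat{b}_i$ is safe'' --- does not work either. For the value you analyze (smallest threshold with answer $1$) the inequality genuinely fails and so does safety: take $d=1$, box $[0,1]$, $S=L=\{0.5,\,5\}$. The answers are $0$ at threshold $0.5$ and $1$ at threshold $5$, so your $\hat{b}_1$ would be $5 > b_1 = 1$, while the learned lower boundary is $\hat{a}_1 = 0.5$; the point $5$ is truly negative but lies in $[\hat{a}_1,\hat{b}_1]$ and would be labeled $+1$ by \Cref{alg box}. (The actual algorithm returns $0.5$ here, consistent with the lemma.) The downstream proof of \Cref{th box} really does use $\hat{a}_i \ge a_i$ and $\hat{b}_i \le b_i$, i.e., $\hat{B} \subseteq B$, to certify the positively labeled points, so the inequality as stated is needed; your loop invariant should be re-anchored to the last answer-$0$ threshold rather than the first answer-$1$ threshold, after which the rest of your outline goes through.
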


\begin{proof}(Proof of \Cref{lm boundary})
    We prove the case for $\textsc{FindBoundary}(S,e_i)$ and the case for $\textsc{FindBoundary}(S,-e_i)$ can be proved symmetrically. 
    We first prove the correctness of the algorithm.
    If \Cref{alg find boundary} terminates in the first round ($q(\{x \mid e_i \cdot x \ge e_i \cdot x^{(1)}\},-1)=1$), then clearly $-\infty=\hat{b}_i \le b_i$. Furthermore, since $S \subseteq \{x \mid e_i \cdot x \ge e_i \cdot x^{(1)}\} \cap L $, we know that every example in $S$ is labeled by $-1$. In this case, the statement of \Cref{lm boundary} is true. In the rest of the proof, we assume \Cref{alg find boundary} does not terminate in the first round.
    We observe that for every $1 \le i<j \le m$, we have $  \{x \mid e_i \cdot x \ge e_i \cdot x^{(j)}\} \subseteq\{x \mid e_i \cdot x \ge e_i \cdot x^{(i)}\}$. This implies that there exists a largest index $j^* \in [m]$ such that $q(\{x \mid e_i \cdot x \ge e_i \cdot x^{(j^*)}\},-1)=0$. In particular, $x^{(j^*)}_i \le b_i$, because otherwise, any example $x \in \R^d$ with $x_i \ge x^{(j^*)}_i$ will be labeled by $-1$, which gives a contradiction to the answer to $q(\{x \mid e_i \cdot x \ge e_i \cdot x^{(j^*)}\},-1)$. So, it is sufficient to show that the output $\hat{b}_i$ of \Cref{alg find boundary} is $x^{(j^*)}_i$.

    Assuming we receive a feedback $q(\{x \mid e_i \cdot x \ge e_i \cdot x^{(k)}\},-1)=1$ for some $x^{(k)} \in S$, then no $x^{(j)}_i$ with $j<k$ is removed from $C$. In particular, no $x^{(j)}_i$ with $q(\{x \mid e_i \cdot x \ge e_i \cdot x^{(j)}\},-1)=0$ is removed from $C$. This implies that the final element remained in $C$ must be some $x^{(j)}_i$ with $q(\{x \mid e_i \cdot x \ge e_i \cdot x^{(j)}\},-1)=0$. On the other hand, suppose we are removing some $x^{(k)}_i$ with $q(\{x \mid e_i \cdot x \ge e_i \cdot x^{(k)}\},-1)=0$. This implies we received a feedback of the form $q(\{x \mid e_i \cdot x \ge e_i \cdot x^{(k')}\},-1)=0$ for some $k'>k$. Thus, any $x^{(j)}_i$ with $j<k$ is either removed together with $x^{(k)}_i$ or has already been removed from $C$. This implies that the single element remaining in $C$ is $x^{(j^*)}_i$, which is the output.

    Finally, it remains to show the query complexity of \Cref{alg find boundary} is $O(\log n)$. Since $b$ is selected as as the median of $C$, after every query, we remove half elements from $C$. So after making at most $O(\log n)$ queries, there is a single element remained in $C$ and is output by \Cref{alg find boundary}.
\end{proof}

Given \Cref{alg find boundary} and \Cref{lm boundary}, we are ready to present the learning algorithm, \Cref{alg box} and the proof \Cref{th box}.

\begin{algorithm}[ht]
		\caption{\textsc{LabelBox}$(S,\H)$ (Label $S$ with halfspace query  given hypothesis class $\H$ )}\label{alg box}
		\begin{algorithmic} 
\For{$i \in [d]$}
\State $x^i_r \gets \textsc{FindBoundary}(S,e_i)$ 
\State $x^i_l \gets \textsc{FindBoundary}(S,-e_i)$
\EndFor
\State Label all examples in $S \cap \prod_{i=1}^d[x^i_l,x^i_r]$ to be 1 and the others to be $-1$.
		\end{algorithmic}
	\end{algorithm}

\begin{proof}(Proof of \Cref{th box})
We first prove the correctness of \Cref{alg box}.
   Let $B=\prod_{i=1}^d [a_i,b_i]$ be the target box that labels $S$. By the first part of \Cref{lm boundary}, we know that the estimator $\hat{B}=\prod_{i=1}^d[x^i_l,x^i_r]$ of \Cref{alg box} is a subset of $B$. Thus, every negative example in $S$ is also labeled negative by \Cref{alg box}. Furthermore, by the second part of \Cref{lm boundary}, we know that any example $x \in S \setminus \hat{B}$ has a true label $-1$. Thus, no positive example in $S$ is labeled incorrectly.
   
Next, we upper bound the query complexity of \Cref{alg box}. By \Cref{lm boundary}, we know that every time we call \Cref{alg find boundary}, we make $O(\log n)$ queries. So the query complexity of \Cref{alg box} is $O(d\log n)$. Furthermore, since every time we call \Cref{alg find boundary} in \Cref{alg box}, we just do a binary search. The running time of \Cref{alg box} is $O((T+n)d\log n)$.

Finally, we show the query class $Q$ has a small VC dimension. Notice that each query in $Q$ corresponds to some linear classifier $\{x \mid w \cdot x \ge b\}$, where $w$ is parallel to some $e_i$ for $i \in [d]$. According to \cite{gey2018vapnik}, we know that $Q$ has VC dimension $O(\log d)$.
\end{proof}

\subsection{Learning Arbitrary High-Dimensional Halfspaces}\label{app halfspace}

In this section, we move to our main algorithmic result for learning halfspace. Since in this work, we want to label an arbitrary dataset $S \subseteq \R^d$, we can without loss of generality to assume that the target halfspace is homogeneous $w^*$.

\subsubsection{Efficient Halfspace Learning with Arbitrarily Complicated Query Family}\label{app self}
As we discussed in \Cref{sec halfspace}, we will first give an efficient halfspace learning algorithm using an arbitrarily complicated query class using the connection between active learning with region queries and self-directed learning.
To start with, we remind the readers of the model of self-directed learning.



\begin{definition}[Self-Directed Learning\cite{goldman1994power}]
    \label{def:self-directed-learning}
Let $\X$ be a space of examples and let $\H$ be a class of hypothesis over $\X$.
Let $h^* \in \H$ be an unknown target hypothesis let $S =
\{x^{(1)},\ldots, x^{(n)}\} \subseteq \X$ be a subset of $n \in \mathbb N$ examples.
The learner has access to the full set of (unlabeled) points $\X$. 
\\
Until the labels of all examples of $S$ have been predicted: 
\vspace{-0.3em}
\begin{itemize} 
\itemsep0em
    \item The learner $\A$ picks a point $x \in S $ and makes a prediction $\hat{y} \in \{0,1\}$ about its label.
    \item The true label $h^*(x)$ of $x$ is revealed and the learner makes a mistake if $\hat{y} \neq h^*(x)$
\end{itemize}
\vspace{-0.3em}
The mistake bound $M(\A,S,h^*)$ is the total number of mistakes that $\A$ makes during the learning process.
\end{definition}

\begin{theorem} \label{th online active}
    Let $\X$ be a space of example and $\H$ be a class of hypotheses over $\X$. Let $S \subseteq X$ be a subset of $n$ examples and let $h^* \in \H$ be the target hypothesis. Let $Q=\{(T,z) \mid z \in \{\pm 1\},T\subseteq 2^{\X}\}$ over $\X$ be the query class that contains any subset of $\X$.
    If there is a self-directed learner $\A$ with mistake bound $M=M(\A,S,h^*)$ that labels $S$, then there is an active learner $\A'$ that makes $O(M\log n)$  queries from $Q$ and labels $S$ correctly in time $M(nT_\A+T_Q\log n)$, where $T_A$ is the running time of $\A$ to predict a single example and $T_Q$ is the running time to implement a single query. 
\end{theorem}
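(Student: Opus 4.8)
The plan is to simulate the self-directed learner $\A$ using region queries, where each ``prediction step'' of $\A$ is realized by a small batch of region queries. Recall that in the self-directed model, at each step $\A$ selects a point $x\in S$ and predicts a label $\hat y\in\{\pm1\}$; the number of steps is exactly $|S'|$ where $S'$ is the set of currently-unlabeled points, and a mistake is charged whenever $\hat y\neq h^*(x)$. The key observation is that in our region-query model we never learn the label of a \emph{single} chosen point directly, but we can test ``does every unlabeled point currently predicted $+1$ actually have label $+1$?'' with one region query, and symmetrically for $-1$. So at each phase, I would let $\A$ run internally on all remaining unlabeled points, recording its predicted label for each; then group the remaining points $R$ into $R^+$ (predicted $+1$) and $R^-$ (predicted $-1$), and issue the two queries $(R^+,+1)$ and $(R^-,-1)$. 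Since $R\subseteq S\subseteq L$ and (by discarding already-resolved points) $R^\pm\cap L\neq\emptyset$ whenever nonempty, the labeler's answers are meaningful.

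The main case split is on the answers. If \emph{both} queries return $1$, then every point in $R^+$ has label $+1$ and every point in $R^-$ has label $-1$: $\A$ was correct on all of $R$, so we label all of $R$ accordingly and terminate. If at least one query, say $(R^+,+1)$, returns $0$, then there exists some $x\in R^+\cap L$ with $h^*(x)=-1$ --- but such $x$ need not lie in $S$. To locate a witness \emph{inside $S$}, I would restrict attention to $R^+\cap S$: order these points as $x^{(1)},\dots,x^{(m)}$ and binary-search for the smallest prefix $\{x^{(1)},\dots,x^{(j)}\}$ on which the query $(\{x^{(1)},\dots,x^{(j)}\},+1)$ returns $0$. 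Monotonicity of the answer in $j$ (adding points can only turn a $1$ into a $0$) guarantees the search is valid, and $x^{(j)}$ is then a point of $S$ that $\A$ predicted $+1$ but whose true label is $-1$ --- we can even query its label directly via $(\{x^{(j)}\},+1)$ to confirm. This is a genuine mistake of $\A$; we feed $h^*(x^{(j)})=-1$ back to $\A$ as the revealed label for $x^{(j)}$, mark $x^{(j)}$ as resolved, and continue. The binary search costs $O(\log n)$ queries; handling both $R^+$ and $R^-$ is a constant factor.

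For the accounting: every phase that does not terminate the algorithm produces one point of $S$ on which $\A$ made a mistake and which is then permanently resolved, so the number of such phases is at most $M = M(\A,S,h^*)$. Each phase runs $\A$ to (re)compute predictions on the remaining points --- cost $O(nT_\A)$ --- plus $O(\log n)$ region queries at cost $T_Q$ each, giving $O(M(nT_\A + T_Q\log n))$ time and $O(M\log n)$ queries overall, as claimed; at termination every point of $S$ has been labeled and, by the argument above, labeled correctly. The subtlety I expect to be the crux --- and the only place the unknown labeling domain $L$ bites --- is the gap between ``some witness exists in $L$'' and ``we need a witness in $S$ to make progress with $\A$'': the prefix binary search over $R^\pm\cap S$ together with the monotonicity of query answers is exactly what bridges it, and one must check carefully that a $0$ answer on a subset of $S$ really does certify an $S$-internal mistake (it does, since on a set $T\subseteq S\subseteq L$ we have $T\cap L=T$, so the labeler's answer reflects the true labels of $T$ itself).
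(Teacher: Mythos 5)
Your overall architecture is the same as the paper's: simulate the self-directed learner, submit the two queries on the predicted-positive and predicted-negative groups, binary search for a mistake when an answer is $0$, reveal its true label to $\A$, and charge each non-terminating phase against the mistake bound $M$. The observation that for $T\subseteq S\subseteq L$ one has $T\cap L=T$, and the monotonicity that makes the prefix binary search valid, are both fine. However, there is a genuine gap in the accounting step, and it is exactly the detail the paper's proof is careful about. The mistake bound $M=M(\A,S,h^*)$ refers to the single canonical run of $\A$ on $S$ in which every prediction is immediately followed by the \emph{true} label. In your simulation, unresolved points effectively receive $\A$'s own prediction as feedback, so the simulated run agrees with the canonical run only up to the \emph{first} prediction error in the order in which $\A$ made its predictions during this execution. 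A disagreement located after that point was produced by $\A$ conditioned on incorrect feedback, need not be a mistake of the canonical run, and therefore cannot be charged against $M$. Your binary search ranges over prefixes of $R^+\cap S$ in an unspecified ordering (and handles $R^+$ and $R^-$ separately), so the witness $x^{(j)}$ it returns need not be the first mistake in $\A$'s prediction order -- the first mistake may lie in the other group, or earlier in the prediction sequence within the same group. Consequently the claim ``every non-terminating phase produces a mistake of $\A$, hence at most $M$ phases'' is unjustified; an adaptive $\A$ could force up to $\Omega(n)$ phases, each resolving one point, and the $O(M\log n)$ query bound is not established.

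The paper closes this gap by (i) rerunning $\A$ from scratch over \emph{all} of $S$ in every round, feeding the true label for every already-resolved point and the predicted label otherwise, and (ii) binary searching for the smallest index $j$, \emph{in the prediction order of that execution}, with $\hat y^{(j)}\neq y^{(j)}$. Then the execution prefix up to $j$ coincides with the canonical true-feedback run, so the discovered mistake is a genuine mistake of that run and the rounds are bounded by $M$; moreover all points preceding $x^{(j)}$ in the execution get their true labels, which is what lets the next round extend the same canonical run. A related loose end in your write-up is the state management of $\A$: you run it only on the currently unlabeled set $R$ and then ``continue'' it after revealing one label, but $M(\A,S,h^*)$ is a guarantee for runs on $S$, and after a phase $\A$ has already committed to predictions on all of $R$ under partly wrong implicit feedback, so ``continuing'' is not well defined. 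Replaying $\A$ on $S$ from the beginning each round, as in the paper, resolves both issues simultaneously. The fix is to order the binary search by $\A$'s prediction sequence (merging the positive and negative groups) and to replay rather than continue; with those changes your argument matches the paper's proof.
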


\begin{proof}(Proof of \Cref{th online active})
    We construct $\A'$ as follows. In a single round, if there is still an example $x \in S$, for which we don't know the true label, we run the self-directed learner $\A$ over $S$ from the beginning to predict every example in $S$. If $\A$ makes a prediction for some $x$, whose label is already known, we provide the true label for $\A$ as feedback, otherwise, we provide the prediction of $\A$ as feedback assuming the prediction of $\A$ is correct. Denote by $\{(x,\Tilde{y}(x))\}_{x \in S}$ the feedback that $\A$ receives in this execution. Denote by $S^+:=\{x \in S \mid \Tilde{y}(x)=1\}$ and $S^-:=\{x \in S \mid \Tilde{y}(x)=-1\}$. We make two queries $(S^+,1)$ and $(S^-,-1)$. If $q(S^+,1)=q(S^-,-1)=1$, we label every $x \in S$ according to $\Tilde{y}(x)$. Otherwise, we do a binary search over $S^+$ and $S^-$ to find the first example $x'$ where $\A$ makes a mistake in this execution. Then we know the true label of every example up to $x'$, and we enter the next round. Clearly, when $\A'$ terminates, we label every example in $S$ correctly.

    Next, we upper bound the query complexity of $\A'$.
    We notice that in every round of execution of $\A'$, before the self-directed learner $\A$ predicts some example $x$ whose label we don't know and $\A$ actually makes a mistake at $x$, the performance of $\A$ in this setting is the same as the performance of $\A$ who receives the true feedback. When $\A$ actually makes a mistake at $x$, we use $O(\log n)$ region queries to do a binary search and find the first misclassified examples $x$ whose label we don't actually know. This implies in the next round $\A$ will be fed with the true feedback at example $x$ and $\A$ will keep performing well until it makes the next mistake at some example we don't know the true label. Since the mistake bound of $\A$ is $M$, we know that $\A'$ will have at most $M$ rounds and thus the total query complexity is $O(M\log n)$.

    Finally, we analyze the running time of $\A'$. As we analyzed in the last paragraph, we know $\A'$ in total have at most $M$ rounds, in each round, we make $n$ predictions and make $O(\log n)$ queries. Thus, the running time of $\A'$ is $O(M(nT_\A+T_Q\log n))$.
\end{proof}

Given \Cref{th online active}, to prove \Cref{th SVM}, it is sufficient to design an efficient self-directed halfspace learning algorithm that makes $O(d\log n)$ mistakes for every $S$ any every target halfspace $w^*$. Such an algorithm is easy to design using the idea from \cite{haussler1994predicting}.

\begin{theorem}\label{th random}
    Let $\X=\R^d$ be the space of the examples and let $\H = \{w \mid w \in S^{d-1}\}$ be the class of homogeneous halfspaces in $\R^d$ that labels $\X$. There is an efficient self-directed learning algorithm $\A$ such that for every subset $S\subseteq \X$ of $n$ examples and for every target hypothesis $w^* \in \H$, $\A$ predicts each example in time $\poly(B)$, where $B$ is the bit complexity of $S$ and makes $O(d\log n)$ mistakes in expectation
\end{theorem}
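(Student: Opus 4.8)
The plan is to realize the self-directed learner through the \emph{one-inclusion graph} prediction strategy of Haussler, Littlestone, and Warmuth~\cite{haussler1994predicting} run on a uniformly random presentation order. Recall the two facts we will use about a concept class of VC dimension $d$: for every finite point set $A$ the one-inclusion graph of the class restricted to $A$ (vertices = realizable labelings of $A$, edges = pairs of labelings differing on a single point) has density at most $d$, and consequently it admits an orientation in which every vertex has out-degree at most $d$. The learner draws a uniformly random permutation $\pi$ of $S$ and processes the points in the order $x^{(\pi(1))},\dots,x^{(\pi(n))}$, keeping the accumulated true labels. At step $t$, writing $A_t=\{x^{(\pi(1))},\dots,x^{(\pi(t))}\}$, it tests whether both labels of $x^{(\pi(t))}$ extend the observed labeling of $A_t\setminus\{x^{(\pi(t))}\}$ to a realizable labeling of $A_t$: if exactly one does it predicts that label, and if both do it is sitting on an edge of the one-inclusion graph of the class restricted to $A_t$ and predicts the head of that edge under a fixed out-degree-$\le d$ orientation.

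For the mistake bound, fix the (adversarial) target $w^*$ and condition on the random set $A_t$; then the restricted labeling $v^\ast=w^\ast|_{A_t}$ is a fixed vertex of the one-inclusion graph, and by symmetry of the random permutation the point $x^{(\pi(t))}$ is uniform over the $t$ elements of $A_t$ and is independent of the remaining structure the predictor sees. A mistake at step $t$ happens only when the observed labels do not force the label of $x^{(\pi(t))}$ and the edge joining the two realizable completions is oriented \emph{away} from $v^\ast$ --- that is, only when $x^{(\pi(t))}$ is one of the points of $A_t$ in which $v^\ast$ has an out-edge. There are at most $d$ such points, so $\Pr[\text{mistake at step }t \mid A_t]\le d/t$, and hence $\E[\text{number of mistakes}]\le d\sum_{t=1}^n 1/t=O(d\log n)$, as claimed. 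Note that a naive substitute --- predicting a uniformly random label whenever the version space is unforced --- does \emph{not} work, because a vertex of the one-inclusion graph can have degree $\Theta(t)$, so this would incur $\Theta(\sum_t \deg/t)$ mistakes; it is precisely the out-degree bound that localizes the mistakes.

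For efficiency, observe that for homogeneous halfspaces the set of consistent hypotheses after $t-1$ revealed labels is an intersection of at most $n$ central halfspaces in $\R^d$, so "is the label of $x^{(\pi(t))}$ forced?" and "what are the one or two realizable completions?" are linear-feasibility questions whose input has bit complexity $\poly(B)$ and are decided in $\poly(B)$ time; thus each prediction costs $\poly(B)$ once the relevant edge has been oriented. The one genuinely nontrivial ingredient, and the step I expect to be the main obstacle, is to make the out-degree-$\le d$ orientation \emph{explicitly and locally computable}: the Hall/flow argument that proves its existence runs on an exponentially large graph, so one must exploit the geometry of halfspaces. Here the one-inclusion graph of halfspaces on $A_t$ is (a subgraph of) the cell-adjacency graph of the central hyperplane arrangement $\{w:\langle w,x^{(\pi(i))}\rangle=0\}_{i\le t}$, and I would orient each facet-edge by a fixed generic linear objective (equivalently, a lexicographic simplex-pivot rule), then verify by a direct counting argument that every cell has at most $d$ such out-edges; only the single edge relevant at step $t$ ever needs to be resolved, via one more linear program. (Alternatively one invokes the remark in~\cite{haussler1994predicting} that the one-inclusion predictor is polynomial-time for classes defined by linear inequalities.) Everything else --- the random permutation, the LP-based consistency checks, and the harmonic-sum estimate --- is routine, and once this lemma is in place \Cref{th random} follows and, fed into \Cref{th online active}, yields \Cref{th SVM}.
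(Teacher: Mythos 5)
Your mistake-bound analysis is sound and is in the same spirit as the paper's: both arguments present the points in a uniformly random order and use a leave-one-out/exchangeability argument to bound the per-step mistake probability by $d/t$, summing the harmonic series to get $O(d\log n)$. The difference is in the choice of predictor, and that is where your proposal has a genuine gap. \Cref{th random} demands that each prediction be computed in $\poly(B)$ time, and your predictor is the one-inclusion strategy of \cite{haussler1994predicting}, which needs an out-degree-$\le d$ orientation of the one-inclusion graph on $A_t$ that is \emph{locally} computable. You flag this yourself as the main obstacle, and the fix you sketch does not close it: it is not specified how a ``generic linear objective'' orients a facet-edge between two adjacent cells of the central arrangement, and the claim that every cell then has at most $d$ out-edges is asserted without proof (a cell of the arrangement of $t$ central hyperplanes in $\R^d$ can have $\Theta(t)$ facets, so the out-degree bound is exactly the nontrivial content and cannot be waved through by genericity). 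The existence proof via the density/Hall argument lives on an exponentially large graph, so without a concrete, provably correct local orientation rule for halfspaces the efficiency clause of the theorem is not established.

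The paper avoids this issue entirely by taking the predictor to be the hard-margin SVM on the labeled prefix: predict $\hat y^{(t)}=\sgn(w^{(t-1)}\cdot x^{(t)})$ where $w^{(t-1)}$ solves the SVM on the first $t-1$ (correctly) labeled points. The leave-one-out argument is then immediate: the SVM solution on $A_t$ is determined by at most $d$ (essential) support vectors, and if the randomly chosen last point $x^{(t)}$ is not one of them, removing it does not change the solution, so $w^{(t-1)}=w^{(t)}$ classifies $x^{(t)}$ correctly; hence a mistake forces $x^{(t)}$ to be a support vector, an event of probability at most $d/t$ under the random order. This gives the same $O(d\log n)$ bound while making efficiency trivial --- each prediction is one convex program of bit complexity $\poly(B)$ --- so no orientation of the one-inclusion graph ever needs to be constructed. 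If you want to keep your route, you would need to actually prove a polynomial-time out-degree-$\le d$ orientation for the halfspace one-inclusion graph (e.g.\ via known constructions for maximum classes), which is substantially more machinery than the SVM argument; otherwise I would replace your predictor with the SVM rule and keep your (correct) symmetrization analysis, after which feeding the result into \Cref{th online active} to obtain \Cref{th SVM} goes through as you describe.
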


\begin{proof}(Proof of \Cref{th random})
    We first describe the self-directed learning algorithm. The algorithm randomly order $S$ and obtain a sequence of example $x^{(1)},\dots, x^{(n)}$. To predict the label of example $x^{(i+1)}$, it computes $w^{(i)}$, a solution of the support vector machine (SVM) of $(x^{(1)},y^{(1)}), \dots, (x^{(i)},y^{(i)})$ and predicts $\hat{y}^{(i+1)} = \sgn(w^{(i)} \cdot x^{(i+1)})$. 

    Now denote by $w^{(i+1)}$ the solution of SVM of $(x^{(1)},y^{(1)}), \dots, (x^{(i+1)},y^{(i+1)})$. Notice that $w^{(i+1)}$ is uniquely determined by the $d$ support vectors. Since we make a random permutation of $S$, the probability that $x^{(i+1)}$ is one of the support vector is at most $d/(i+1)$, which implies that with probability at most $d/(i+1)$, $w^{(i)} \neq w^{(i+1)}$. Thus, the probability that we make a mistake at $x^{(i+1)}$ is at most $d/{(i+1)}$. Denote by $M$, the total number of mistakes made by $\A$. We have 
    \begin{align*}
        \E M = \sum_{i=1}^n \E \mathbbm 1 (x_i \text{ is misclassified}) = \sum_{i=1}^n \Pr\mathbbm 1 (x_i \text{ is misclassified}) \le \sum_{i=1}^n \frac{d}{i} \le O(d\log n).
    \end{align*}
This shows in expectation the mistake bound of $\A$ is $O(d\log n)$.    
Furthermore, every time $\A$ makes a prediction, it solves a convex program based on $S$, and thus the running time is $\poly(B)$.
\end{proof}

With \Cref{th online active} and \Cref{th random}, we give the following active learning algorithm and the proof of \Cref{th SVM}.

\begin{algorithm}
		\caption{\textsc{RandomizedSVM}$(S)$ (Label $S$ with arbitrary region query )}\label{alg SVM}
		\begin{algorithmic} 
\State Randomly order dataset $S$ and obtain sequence of examples $x^{(1)},\dots,x^{(n)}$
\State $i^* \gets 0$
\While{$i^*<n$}
\State Let $\hat{w}$ be a solution of the SVM over labeled data $(x^{(1)},y^{(1)}),\dots (x^{(i^*)},y^{(i^*)})$
\For{$i \in [n]$}
\State $\hat{y}^{(i)} \gets \sgn(\hat{w} \cdot x^{(i)})$
\If{$i>i^*$}

\State Update $\hat{w}$ to be a solution of the SVM over labeled data $(x^{(1)},\hat{y}^{(1)}),\dots (x^{(i)},\hat{y}^{(i)})$
\EndIf
\EndFor
\State Let $S^+:\{x^{(i)} \mid \hat{y}^{(i)} =1 \}$ and $S^-:\{x^{(i)} \mid \hat{y}^{(i)} =-1\}$
\State Make queries $(S^+,1)$ and $(S^-,-1)$.
\If{$q(S^+,1)=q(S^-,-1)=1$}
\State Label every $x^{(i)}$ by $\hat{y}^{(i)}$ and return
\Else
\State Binary search over $S^+$ and $S^-$ to find the smallest $j$ such that $\hat{y}^{(j)} \neq y^{(j)}$ via region queries.
\EndIf
\State $i^* \gets j$, $\hat{y}^{(i^*)} \gets 1-\hat{y}^{(i^*)}$ 
\State Label every $x^{(i)}$ by $\hat{y}^{(i)}$
\EndWhile
\State Label all examples in $S \cap \prod_{i=1}^d[x^i_l,x^i_r]$ to be 1 and the others to be $-1$.
		\end{algorithmic}
	\end{algorithm}

 \begin{theorem}(Restatement of \Cref{th SVM})
Let $\X = \R^d$ be the space of examples and $\H = \{w \mid w \in S^{d-1}\}$ be the class of homogeneous halfspaces in $\R^d$ that labels $\X$. Let $Q=\{(T,z) \mid z \in \{\pm 1\},T\subseteq 2^{\R^d}\}$ over $\R^d$ be the query class that contains any subset of $\R^d$. There is an efficient algorithm $\A$ such that for every set of $n$ examples $S$, labeled by any $w^* \in \H$ and for every labeling domain $S \subseteq L$ (possibly unknown to $\A$), $\A$ runs in $O((T+B)d\log^2 n)$ time, makes $O(d\log^2 n)$ queries from $Q$ in expectation and labels every example in $S$ correctly, where $T$ is the running time to implement a single query and $B$ is the bit complexity of $S$.
\end{theorem}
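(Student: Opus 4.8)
The plan is to obtain \Cref{th SVM} by composing the two ingredients already established in this section: the generic reduction from self-directed learning to active learning with arbitrary region queries (\Cref{th online active}), and the efficient self-directed halfspace learner with small expected mistake bound (\Cref{th random}). Concretely, \Cref{alg SVM} runs the randomized-SVM self-directed learner of \Cref{th random} inside the simulator of \Cref{th online active}: in each round it replays that learner from scratch on the current partially-known labeling of $S$, collects its predictions into a positive set $S^+$ and a negative set $S^-$, issues the two region queries $(S^+,1)$ and $(S^-,-1)$, and, if one of them fails, binary-searches inside the failing set to locate the first index at which the self-directed learner erred, thereby certifying at least one more true label before moving to the next round.

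For the query count, \Cref{th random} provides, for every $S$ and every target $w^* \in \H$, an efficient self-directed learner with expected mistake bound $\E[M] = O(d\log n)$. Feeding this into \Cref{th online active} yields an active learner that makes $O(M\log n)$ region queries from $Q$ and labels $S$ correctly; by linearity of expectation over the (at most $M$ in expectation) rounds, each costing $O(\log n)$ queries, the expected query complexity is $O(d\log^2 n)$. Correctness is deterministic: every label the simulator accepts is certified by a successful region query, and the binary-search steps only ever reveal genuine labels; the expectation is taken solely over the internal random permutation used by \Cref{th random}. We implicitly assume $w^* \cdot x \neq 0$ for $x \in S$ so that labels are well defined; boundary ties can be broken arbitrarily.

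The feature that makes this work for an \emph{unknown} superset $L \supseteq S$ is that every query the simulator issues has the form $(T,z)$ with $T \subseteq S$ --- the sets $S^+$, $S^-$, and all prefixes used during binary search are subsets of the pool $S$. Since $S \subseteq L$, we have $T \cap L = T$, so the labeler's answer $q(T,z) = \1[\forall x \in T,\ h^*(x) = z]$ does not depend on $L$ at all; hence the run of \Cref{alg SVM}, together with its correctness and query count, is identical for every valid labeling domain. This is exactly why the arbitrary-subset query class is harmless here: the queries are confined to the pool, so no information is ever demanded about $L \setminus S$.

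For the running time, each of the (at most $M$ in expectation) rounds performs $n$ predictions of the self-directed learner --- each prediction solves an SVM, i.e.\ a linear program over a subset of $S$, in time $\poly(B)$ where $B$ is the bit complexity of $S$ --- plus $O(\log n)$ region queries, each implementable in time $T$. This is precisely the bound $O(M(nT_\A + T_Q\log n))$ of \Cref{th online active}, which with $M = O(d\log n)$, $T_\A = \poly(B)$, and $T_Q = T$ gives the stated time. I do not expect a genuine obstacle: both building blocks are in hand, so the only points requiring care are (i) verifying that every query produced by the simulator is a subset of the pool $S$, which is what decouples the analysis from $L$, and (ii) the leave-one-out stability estimate behind \Cref{th random} bounding the probability that $x^{(i+1)}$ is an SVM support vector by $d/(i+1)$ --- the single quantitative step, already carried out in that proof.
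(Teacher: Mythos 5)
Your proposal is correct and follows essentially the same route as the paper: it proves \Cref{th SVM} by composing the self-directed-to-active reduction of \Cref{th online active} with the randomized-SVM self-directed learner of \Cref{th random}, via \Cref{alg SVM}, exactly as the paper does. Your additional observations (that all queries are subsets of the pool $S$, hence independent of the unknown domain $L$, and that the expected query bound follows by linearity of expectation over the mistake bound) are correct elaborations of details the paper leaves implicit.
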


\begin{proof}(Proof of \Cref{th SVM})
The proof of \Cref{th SVM} follows directly by \Cref{th online active} and \Cref{th random}. The algorithm we use is \Cref{alg SVM}, which converts the self-directed learning algorithm used in the proof of \Cref{th random} to an active learner using the proof of \Cref{th online active}.
\end{proof}

\subsubsection{Efficient Halfspace Learning with Simple Query Family}\label{app main}
In this section, we design an efficient halfspace learning algorithm with low query complexity using a query class with $\poly(d)$-VC dimension and prove \Cref{th halfspace}.

\begin{theorem}(Restatement of \Cref{th halfspace})
    Let $\X=\R^d$ be a space of examples and let $\H = \{w \mid w \in S^{d-1}\}$ be the class of homogeneous halfspaces in $\R^d$ that labels $\X$.
    There is an algorithm and a query class $Q$ with VC dimension $\Tilde{O}(d^3)$ such that for every subset of $n$ examples $S \subseteq \X$, every labeling domain $L$ with $S \subseteq L$ and every target hypothesis $w^* \in \H$ and every $\alpha \in (0,1)$, it in expectation makes $O(d^3\log^2d \log (1/\alpha))$  queries from $Q$, runs in $\poly(d,n,T)$ time and labels $(1-\alpha)$ fraction of examples in $S$ correctly, where $T$ is the running time of implement a single region query from $Q$. 
    In particular, the algorithm makes $O(d^3\log^2d\log n)$ queries from $Q$ and labels every example in $S$ correctly in time $\poly(d,n,T)$.
\end{theorem}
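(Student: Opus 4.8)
The plan is to prove the theorem in two nested layers. The \emph{outer} layer repeatedly peels off a $1/\poly(d)$ fraction of the still-unlabelled points by invoking Forster's transform, and the \emph{inner} layer is a ``query-simulated'' version of the modified perceptron algorithm of \cite{blum1998polynomial} that, on a dataset satisfying a uniform margin condition, correctly labels that fraction while using only region queries whose regions, pulled back to $\R^d$, are intersections of $O(d)$ degree-$2$ polynomial inequalities.

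\emph{Outer layer: reduction to the margin case.} Given the current set $S'$ of unlabelled points, compute a Forster transform of $S'$; by \cite{diakonikolas2021forster,diakonikolas2023strongly} this takes $\poly(d,n)$ time and returns a subspace $V$ of dimension $k\le d$ with $|S'\cap V|\ge k|S'|/d$, together with a matrix $A$ invertible on $V$ such that the normalized images $f_A(x)=Ax/\norm{Ax}$ of $S'\cap V$ satisfy: for every unit vector $w$ in $V$, at least a $\beta=1/(4k)$ fraction of them have $|w\cdot f_A(x)|\ge\gamma=1/(2\sqrt k)$. Because the target is a homogeneous halfspace $w^*$ and $A$ is invertible on $V$, $\sgn(w^*\cdot x)=\sgn(\tilde w^*\cdot f_A(x))$ on $V$ for the pulled-back halfspace $\tilde w^*$; hence labelling the transformed points is equivalent to labelling the originals, and, crucially for the VC bound, a region that is a halfspace in the $f_A$-coordinates is, back in $\R^d$, the solution set of a single degree-$2$ polynomial inequality. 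The inner layer will correctly label all transformed points of margin $\ge\gamma$ with respect to $w^*$, i.e.\ at least $\beta|S'\cap V|\ge |S'|/(4d)$ points. Removing these and repeating, after $P=O(d\log(1/\alpha))$ phases at most an $\alpha$ fraction of $S$ remains; taking $\alpha=1/n$ yields the perfect-labelling statement.

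\emph{Inner layer: query-simulated modified perceptron.} Work with $U:=f_A(S'\cap V)$, which has margin $(\gamma,\beta)$ with respect to $w^*$. Fix an anchor $a\in U$ whose label we learn with one ordinary query, and maintain a unit vector $w_t$ (initialised arbitrarily). At step $t$, let $T_t=\{x:w_t\cdot x\ge\gamma\}$; by the margin condition $\Omega(\beta)|U|$ of the points lie in $T_t\cup(-T_t)$, so if the two queries over the sign classes of $w_t$ inside $T_t\cup(-T_t)$ both return $1$ we have certified and labelled that fraction and are done. Otherwise $T_t$ (or $-T_t$) contains a point misclassified by $w_t$; by a cascade of binary searches, first along $w_t$ to localise a strip of width $1/\poly(d)$, then along each of $d-1$ orthonormal directions spanning the orthogonal complement of $w_t$, we isolate, using $O(d\log d)$ region queries, a box $B_t$ of diameter $1/\poly(d)$ that still contains a point misclassified by $w_t$. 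Any $x_t\in B_t$ then satisfies $(x_t\cdot w_t)(x_t\cdot w^*)\le 1/\poly(d)$, which is precisely the hypothesis under which the modified perceptron update $w_{t+1}=w_t-x_t(x_t\cdot w_t)$ (renormalised) decreases $\norm{w_t-w^*}$; after $O(d\log d)$ such updates every margin-$\gamma$ point of $U$ is classified correctly by $w_t$. To handle the unknown $L\supseteq S$, replace every query region $Z$ in these binary searches by $Z\cup\{a\}$ where $a$ is the anchor (if $w_t$ ever misclassifies $a$ we simply update on $a$ and continue); this keeps $Z\cap L\neq\emptyset$, does not change any answer, and only unions a single point into each region.

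\emph{Accounting and VC dimension.} Each inner step uses $O(d\log d)$ queries, there are $O(d\log d)$ steps per phase and $P=O(d\log(1/\alpha))$ phases, giving $O(d^3\log^2 d\,\log(1/\alpha))$ queries in expectation (and $O(d^3\log^2 d\,\log n)$ for perfect labelling), with all binary searches, the SVM/Forster computations, and the bookkeeping running in $\poly(d,n,T)$ time. Every query region is, in the original coordinates, the intersection of $O(d)$ sets each cut out by one degree-$2$ polynomial inequality, possibly unioned with a single point; since degree-$2$ polynomial threshold functions in $\R^d$ have VC dimension $O(d^2)$, property (ii) on $k$-fold intersections with $k=O(d)$ gives VC dimension $O(d^2\cdot d\log d)=\tilde O(d^3)$, and unioning a singleton changes this only by $O(1)$ via properties (i) and (ii). The main obstacle I expect is the inner layer: pushing the modified-perceptron analysis through with only the weak guarantee ``$B_t$ contains \emph{some} misclassified point'' rather than holding a concrete misclassified example, while simultaneously keeping all $\Theta(d)$ nested binary searches well-defined under an unknown labelling domain --- this is exactly where the anchor trick and the invariant ``$w_t$ classifies the anchor correctly'' must be maintained with care.
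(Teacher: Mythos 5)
Your proposal follows essentially the same route as the paper: the outer Forster-transform loop with $\gamma=1/(2\sqrt{k})$, $\beta=1/(4k)$, an inner modified perceptron simulated by nested binary searches that isolate a box of diameter $1/\poly(d)$ still containing a point misclassified by $w_t$, the anchor trick $Z\cup\{a\}$ to cope with the unknown labeling domain $L$, and the same VC accounting via intersections of $O(d)$ degree-$2$ polynomial inequalities. There is, however, one genuine gap: you initialise $w_t$ \emph{arbitrarily}, and with that the key claim ``after $O(d\log d)$ such updates every margin-$\gamma$ point of $U$ is classified correctly by $w_t$'' does not follow. The convergence argument for the modified perceptron under the relaxed update condition $(x_t\cdot w_t)(x_t\cdot w^*)\le 1/\poly(d)$ is a two-potential argument: $w_{t+1}\cdot w^*\ge w_t\cdot w^*-1/\poly(d)$ while $\norm{w_{t+1}}^2\le(1-\Omega(1/d))\norm{w_t}^2$, and the $O(d\log d)$ bound on the number of updates comes from the contradiction $w_t\cdot w^*/\norm{w_t}>1$, which only materialises if the \emph{initial} correlation satisfies $w_0\cdot w^*\ge\Omega(1/\sqrt{d})$. (Your gloss that each update ``decreases $\norm{w_t-w^*}$'' is not the right invariant, and masks exactly this dependence.) If $w_0$ is orthogonal or anti-correlated to $w^*$, the correlation can stay non-positive under the relaxed updates, no contradiction ever arises, and the inner layer may spend an unbounded number of queries without ever certifying its $\beta$-fraction.

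The paper closes this by drawing $w_0$ uniformly at random from the unit sphere of $A(V)$: with constant probability the correlation with the pulled-back target $v^*$ is at least $1/(2\sqrt{k})$, and when the draw is bad the inner routine simply fails to certify $|S'\cap V|/(4k)$ points and is restarted, so only a constant expected number of restarts is needed per phase. This random restart is precisely where the ``in expectation'' in the theorem statement comes from; your accounting invokes ``in expectation'' but your algorithm as written is deterministic, so there is nothing to take an expectation over. With this one fix --- random initialisation inside $A(V)$ plus restart until at least a $1/(4k)$ fraction of $S'\cap V$ is certified --- your argument coincides with the paper's proof.
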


As mentioned in \Cref{sec halfspace}, we will make use of Forster's transform to make our dataset well-behaved. So, we will start with some background on Forster's transform before diving into the proof. We first introduce the notion of Approximate Radially Isotropic Position.



\begin{definition}[Approximate Radially Isotropic Position]
    Let $S$ be a multiset of non-zero points in $\R^d$, we say $S$ is in $\epsilon$-approximate radially isotropic position, if for every $x \in S$, $\norm{x}=1$ and for every $u \in S^{d-1}$, $\sum_{x \in S}(u\cdot x)^2/\card{S} \ge 1/d-\epsilon$.
\end{definition}

A simple calculation gives the following useful result, which has appeared in \cite{diakonikolas2023strongly,diakonikolas2023self}, for a dataset that is in an approximate radially isotropic position.
\begin{lemma}\label{lm isotropic}
    Let $S$ be a multiset of non-zero points in $\R^d$ that is in $1/2d$-approximate radially isotropic position. Then for every $u \in S^{d-1}$, we have $\Pr_{x \sim S} \left(\abs{u\cdot x} \ge 1/2\sqrt{d}\right) \ge 1/4d$.
\end{lemma}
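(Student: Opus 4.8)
The plan is a one-line first-moment (Paley--Zygmund style) truncation argument. Fix $u \in S^{d-1}$ and consider the nonnegative random variable $Z := (u\cdot x)^2$ where $x$ is drawn uniformly from the multiset $S$. First I would record two facts about $Z$. Since every $x \in S$ satisfies $\norm{x}=1$ and $\norm{u}=1$, Cauchy--Schwarz gives $\abs{u\cdot x}\le 1$, so $Z$ takes values in $[0,1]$. And by the definition of $1/(2d)$-approximate radially isotropic position,
\begin{align*}
\E_{x\sim S}[Z] \;=\; \frac{1}{\card{S}}\sum_{x\in S}(u\cdot x)^2 \;\ge\; \frac{1}{d}-\frac{1}{2d} \;=\; \frac{1}{2d}.
\end{align*}

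Next I would split the expectation according to whether $Z<1/(4d)$ or $Z\ge 1/(4d)$ and use $Z\le 1$ on the second event:
\begin{align*}
\frac{1}{2d}\;\le\;\E[Z]\;=\;\E\big[Z\,\mathbf 1_{Z<1/(4d)}\big]+\E\big[Z\,\mathbf 1_{Z\ge 1/(4d)}\big]\;\le\;\frac{1}{4d}+\Pr\!\Big(Z\ge \tfrac{1}{4d}\Big).
\end{align*}
Rearranging yields $\Pr(Z\ge 1/(4d))\ge 1/(2d)-1/(4d)=1/(4d)$. Finally, observe that $Z\ge 1/(4d)$ is exactly the event $\abs{u\cdot x}\ge 1/(2\sqrt d)$, so $\Pr_{x\sim S}(\abs{u\cdot x}\ge 1/(2\sqrt d))\ge 1/(4d)$, which is the claim.

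There is no real obstacle here: the only point requiring care is the uniform bound $Z\le 1$, which is what makes the truncation tight, and this relies on using $\norm{u}=1$ together with $\norm{x}=1$ for all $x\in S$ (the latter being part of the approximate radially isotropic position hypothesis). Everything else is the standard reverse-Markov inequality, so I would present it essentially verbatim as above.
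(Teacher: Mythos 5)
Your proof is correct: the truncation (reverse-Markov) argument, using $\abs{u\cdot x}\le 1$ from $\norm{u}=\norm{x}=1$ together with $\E_{x\sim S}[(u\cdot x)^2]\ge 1/d-1/(2d)=1/(2d)$, is exactly the ``simple calculation'' the paper alludes to; the paper itself gives no proof, deferring to the cited prior works, and your argument is the standard one intended there.
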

In particular, several works have been done to show an approximate Forster's transform can be computed efficiently.

\begin{theorem}[Approximate Forster's Transform \cite{diakonikolas2023strongly}]\label{th forster}
    There is an algorithm such that given any set of $n$ points $S \subseteq \R^d \setminus \{0\}$ and $\epsilon>0$, it runs in time $\poly(d,n, \log 1/\epsilon)$ and returns a subspace $V$ of $\R^d$ containing at least $\dim(V)/d$ fraction of points in $S$ and an invertible matrix $A \in \R^{d \times d}$ such that $f_A(S \cap V)$ is in $\epsilon$-approximate radially isotropic position up to isomorphic to $\R^{\dim(V)}$, where $f_A(S \cap V) = \{f_A(x):=Ax/\norm{Ax} \mid x \in S \cap V\}$.
\end{theorem}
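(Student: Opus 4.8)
The plan is to obtain the transform as an (approximate) minimizer of a geodesically convex potential over $SL_d(\R)$, and to recurse on the dimension whenever that minimizer fails to exist because the points are too degenerate — this is the approach of \cite{forster2002linear} and, in its strongly polynomial algorithmic form, of \cite{diakonikolas2023strongly}. For invertible $A\in\R^{d\times d}$ set
\[
  \Phi(A)\;=\;\frac{1}{\card{S}}\sum_{x\in S}\log\norm{Ax}^2\;-\;\frac{1}{d}\log\det(AA^\top).
\]
This is invariant under $A\mapsto cA$, and in the variable $M=A^\top A\succ 0$ it is geodesically convex in the affine-invariant metric (each $\log(x^\top M x)$ is geodesically convex and $\log\det M$ is geodesically affine), so we may restrict to $\det M=1$. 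A short computation of $\nabla_A\Phi=0$ gives the stationarity equation $\frac{1}{\card S}\sum_{x}\frac{(Ax)(Ax)^\top}{\norm{Ax}^2}=\frac1d I$, i.e.\ a critical point of $\Phi$ is exactly a transform putting $f_A(S)$ in radially isotropic position; and an $\epsilon$-approximate minimizer yields the spectral bound $\bigl\|\frac{1}{\card S}\sum_x f_A(x)f_A(x)^\top-\frac1d I\bigr\|_{\mathrm{op}}\le\epsilon$, which implies the one-sided lower bound in the definition of $\epsilon$-approximate radially isotropic position.

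First I would establish the dichotomy. Either $\inf_A\Phi$ is attained — then we are done with $V=\R^d$ — or every minimizing sequence, normalized to $\det=1$, escapes to infinity, and the directions along which it blows up (resp.\ contracts) isolate a proper subspace $V\subsetneq\R^d$ for which a weight-counting / convexity argument forces $\card{S\cap V}>\frac{\dim V}{d}\card S$. In the latter case recurse on $S\cap V$ regarded inside $V\cong\R^{\dim V}$: within $V$ we either succeed, outputting $(V,A)$ with $A$ acting on $V$, or find $V'\subsetneq V$ with $\card{S\cap V'}>\frac{\dim V'}{\dim V}\card{S\cap V}$, and multiplying the fractions telescopes to $\card{S\cap V'}>\frac{\dim V'}{d}\card S$, so the eventual output always meets the claimed fraction bound. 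The ambient dimension strictly decreases at each level and a $1$-dimensional point set is trivially radially isotropic, so the recursion has depth at most $d$ and terminates.

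For the algorithm I would run a first-order method on $\Phi$ — e.g.\ geodesic gradient descent, equivalently the alternating rescaling iteration $A\mapsto\bigl(\frac{1}{\card S}\sum_x\frac{(Ax)(Ax)^\top}{\norm{Ax}^2}\bigr)^{-1/2}A$ followed by renormalizing $\det$ — for $\poly(d,\log(1/\epsilon))$ steps, rounding the iterate to $\poly(d,\log(1/\epsilon))$ bits each step. Two things must be certified: (i) away from degeneracy the iteration reduces the potential gap geometrically, so $\poly(d,\log(1/\epsilon))$ steps suffice to reach isotropy error $\le\epsilon$; and (ii) if the error is still $>\epsilon$ after that many steps, then the iterate is ill-conditioned in a structured way and its small singular directions span a subspace $V$ that can be certified to satisfy $\card{S\cap V}>\frac{\dim V}{d}\card S$ — the effective, algorithmic version of the dichotomy above. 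The strongly polynomial bound (no dependence on the bit complexity of $S$) follows by showing the singular values of the running transform stay within a range depending only on $d,n$ and the current target accuracy, so fixed polynomial precision per step is enough.

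The hard part will be exactly (ii) together with the bit-complexity control: turning the qualitative statement ``the infimum of $\Phi$ is not attained'' into ``after a polynomial number of steps we can read off an explicit blocking subspace $V$ with $\card{S\cap V}>\frac{\dim V}{d}\card S$,'' while keeping every number polynomially bounded. Deriving the clean critical-point equation, the geodesic convexity, and the recursion bookkeeping for the fraction bound is comparatively routine; making the whole procedure run in time independent of the input's bit complexity is where the real effort goes, and is the technical core of \cite{diakonikolas2023strongly}.
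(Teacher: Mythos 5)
The paper does not prove this statement at all: it is imported verbatim as a known result from \cite{diakonikolas2023strongly} (building on \cite{forster2002linear}), and is used as a black box inside the halfspace-learning algorithm. So there is no ``paper proof'' to compare against; your proposal has to stand on its own as a proof of the cited theorem.

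As a reconstruction of the known approach it is on the right track: the potential $\Phi(A)=\frac{1}{\card S}\sum_{x}\log\norm{Ax}^2-\frac1d\log\det(AA^\top)$, its geodesic convexity over positive definite $M=A^\top A$, the stationarity condition $\frac{1}{\card S}\sum_x f_A(x)f_A(x)^\top=\frac1d I$, the Forster dichotomy (minimum attained versus a subspace $V$ with $\card{S\cap V}>\frac{\dim V}{d}\card S$), and the telescoping of the fraction bound through the recursion are all correct and are exactly the skeleton of the Forster/Diakonikolas et al.\ argument. But as a proof it has genuine gaps, and you name them yourself: step (ii) — extracting, after polynomially many iterations, a certified blocking subspace from an ill-conditioned iterate — and the precision control that makes the running time $\poly(d,n,\log 1/\epsilon)$ independent of the bit complexity of $S$ are asserted, not established, and they are precisely the content of the cited theorem rather than routine bookkeeping. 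Two smaller points also need repair: (a) an $\epsilon$-approximate \emph{minimizer in function value} does not by itself give the spectral bound $\lVert\frac{1}{\card S}\sum_x f_A(x)f_A(x)^\top-\frac1d I\rVert_{\mathrm{op}}\le\epsilon$; you need a point with small (Riemannian) gradient, which requires either a smoothness argument along the iterates or a relation between the potential gap and the gradient norm, and the relevant smoothness/strong-convexity parameters degenerate exactly as the instance approaches the subspace-heavy case — this is why the naive ``geometric decrease of the gap'' claim in (i) cannot be taken for granted; and (b) the theorem asks for an invertible $A\in\R^{d\times d}$ with isotropy of $f_A(S\cap V)$ ``up to isomorphism with $\R^{\dim V}$,'' so in the recursive case you should say how the transform found inside $V$ is extended to a full-rank matrix on $\R^d$. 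In short: correct blueprint, but the algorithmic core you defer to \cite{diakonikolas2023strongly} is the theorem, so the proposal is a plan rather than a proof.
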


Combine \Cref{th forster} and \Cref{lm isotropic}, we know that given any set of $n$ examples $S \subseteq \R^d$, we can find a subset of at least $kn/d$ examples $S_V:=S \cap V \subseteq S$ that lies in some $k$-dimensional subspace $V$ and some invertible matrix $A$ such that $f_A(S_V)$ is in $1/2k$-approximate radially isotropic position (up to isomorphic to $\R^{k}$). Now, for convenience, we assume our transformed data $f_A(S_V)$ is exactly our original dataset and we focus on the transformed data. Notice that for each $x \in S_V$, we have 
\begin{align*}
    \sgn(w^* \cdot x) =  \sgn(A^{-T}w^*  \cdot A x) = \sgn(A^{-T}w^*  \cdot f_A(x)) = \sgn( \proj_{A(V)}(A^{-T}w^*)  \cdot f_A(x)),  
\end{align*}
which implies that each transformed example $f_A(x)$ is labeled by halfspace $w^*_V$ and has the same label as $x$,  where $v^*$ is the unit vector parallel to $\proj_{A(V)}(A^{-T}w^*)$. (We can without loss of generality assume that $\proj_{A(V)}(A^{-T}w^*) \neq 0$, otherwise we only need to use a single query to check if examples in $V$ are all labeled positive.) Given the above discussion, we design \Cref{alg perceptron}, a learning algorithm that correctly labels a large fraction of the dataset $S$, if $S$ is in approximate radially isotropic position. Formally, we prove \Cref{th perceptron}.

\begin{theorem}\label{th perceptron}
Let $\X=\R^d$ be a space of examples and let $\H = \{w \mid w \in S^{d-1}\}$ be the class of homogeneous halfspaces in $\R^d$ that labels $\X$. 
    Let $S \subseteq \R^d$ be a set of $n$ examples that are classified by some unknown halfspace $w^* \in S^{d-1}$. Let $w \in S^{d-1}$ be a unit vector such that $w\cdot w^* \ge \Omega(1/\sqrt{d})$. Let $L$ be any labeling domain such that $S \subseteq L$.
    Denote by $L$ the output of \Cref{alg perceptron} with input $(w,S)$.
    If $S \subseteq S^{d-1}$ is in $1/2d$-approximate radially isotropic position, then \Cref{alg perceptron} makes $O(d^2\log^2 d)$ queries from a query family $Q$ with VC dimension $O(d^2)$, runs in $\poly(d,n,T)$ time and returns $L$ such that each $(x,y) \in L$, $y=w^*(x)$ and $\card{L} \ge n/4d$. Here, $T$ is the running time to implement a single query from $Q$.
\end{theorem}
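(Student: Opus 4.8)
The plan is to realize, using region queries, the modified perceptron algorithm of \cite{blum1998polynomial}, and to use the isotropic‑position margin bound \Cref{lm isotropic} to make each query informative. \Cref{alg perceptron} maintains a vector $w_t$ (initialized to $w_0=w$) with unit direction $v_t=w_t/\norm{w_t}$, together with the invariants $\norm{w_t}\le 1$ and $w_t\cdot w^*\ge c/\sqrt d$ for a suitable constant $c$; the hypothesis $w\cdot w^*=\Omega(1/\sqrt d)$ is exactly what is needed to start this off. In a round we consider the two caps $T^{+}=\{x:v_t\cdot x\ge 1/(2\sqrt d)\}$ and $T^{-}=\{x:v_t\cdot x\le -1/(2\sqrt d)\}$, which by \Cref{lm isotropic} together contain at least $n/(4d)$ points of $S$ (and these caps are disjoint). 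For a cap that meets $S$ we pick a concrete point $x^\star\in S$ in it and query its label (one query); if $x^\star$ is misclassified by $v_t$ we immediately perform a perceptron update on it. Otherwise we query the cap itself: if every relevant cap answers $q(T^{\pm},\pm 1)=1$ we label the $\ge n/(4d)$ points of $S$ in those caps accordingly and halt; if some cap answers $0$, the labeler has certified a point $x'$ in that cap (inside $L$, and on $S^{d-1}$ once we restrict queried regions to $S^{d-1}$) that is misclassified by $v_t$, and it remains to localize $x'$.

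Localization is binary search producing a tiny box. We first binary‑search a grid of $\poly(d)$ thresholds along $v_t$ to shrink the cap to a slab of width $1/\poly(d)$ that still answers $0$ — valid because shrinking the cap only removes points, so the answer is monotone — costing $O(\log d)$ queries; we then repeat inside the current region along each of $d-1$ orthonormal directions spanning $w_t^{\perp}$, $O(\log d)$ queries each, ending with a box $B$ of diameter $1/\poly(d)$ whose intersection with $S^{d-1}$ still answers $0$, so $B$ contains a misclassified unit point $x'$. The danger that some queried region misses $L$ and returns garbage is handled as in \Cref{sec halfspace}: $x^\star$ is known to be correctly classified, so every search query $(Z,z)$ is replaced by $(Z\cup\{x^\star\},z)$, which is nonempty and has the same answer whenever $Z$ meets $L$. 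We then set $x_t$ to be the center of $B$ renormalized onto $S^{d-1}$; crucially $x_t$ need not lie in $S$ or $L$ and need not itself be misclassified, but since $B$ is tiny and contains the unit point $x'$ it inherits $v_t\cdot x_t=\Omega(1/\sqrt d)$ and $w^*\cdot x_t\le 1/\poly(d)$, and we update $w_{t+1}=w_t-(w_t\cdot x_t)\,x_t$.

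The heart of the argument — and the step I expect to be the main obstacle — is bounding the number of updates despite using these merely ``approximately misclassified'' points. Each update gives $\norm{w_{t+1}}^2=\norm{w_t}^2\bigl(1-(v_t\cdot x_t)^2\bigr)\le\norm{w_t}^2(1-\Omega(1/d))$, while $w_{t+1}\cdot w^*=w_t\cdot w^*-(w_t\cdot x_t)(x_t\cdot w^*)$ drops by at most $O(1/\poly(d))$, since $w_t\cdot x_t>0$ and $x_t\cdot w^*\le 1/\poly(d)$ (and if $x_t\cdot w^*<0$ the update only helps). Choosing the box width $1/\poly(d)$ small compared with the target update count, a first‑failure argument closes the loop: if more than $Cd\log d$ updates ever occurred, then right after $Cd\log d+1$ of them we would have $w_t\cdot w^*\ge c/(2\sqrt d)$, hence $\norm{w_t}^2\ge(w_t\cdot w^*)^2=\Omega(1/d)$ by Cauchy--Schwarz, yet also $\norm{w_t}^2\le(1-\Omega(1/d))^{Cd\log d}=d^{-\Omega(C)}$, a contradiction for $C$ a large enough constant. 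So there are $O(d\log d)$ updates and hence $O(d\log d)$ rounds (the algorithm cannot update forever, so it reaches the halting branch); each round spends $O(d\log d)$ queries ($O(1)$ for the point and caps, $O(d)$ binary searches of $O(\log d)$), giving $O(d^2\log^2 d)$ queries, and $\poly(d,n,T)$ time since each round does $O(n)$ arithmetic plus $O(d\log d)$ region constructions and queries. At the halting branch, $\ge n/(4d)$ points of $S$ are labeled, all correctly, by the certified cap answers. Finally, every queried region is a singleton, a halfspace intersected with $S^{d-1}$, or an intersection of at most $d+1$ halfspaces intersected with $S^{d-1}$ and possibly adjoined a singleton, so by VC facts (i) and (ii) (the $k$‑fold intersection bound with $k=O(d)$ over halfspaces) the query family has VC dimension $O(d^2)$ up to logarithmic factors, as claimed.
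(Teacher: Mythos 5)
Your proposal is correct and takes essentially the same route as the paper's proof: region queries implementing the modified perceptron, the $1/(2\sqrt{d})$-margin caps whose $S$-mass is at least $n/(4d)$ by \Cref{lm isotropic}, binary-search localization to a $1/\poly(d)$-diameter box with a known point of $S$ adjoined to guard against empty regions, the norm-versus-correlation potential argument giving $O(d\log d)$ updates and hence $O(d^2\log^2 d)$ queries, and a query class built from $O(d)$ linear constraints. The only deviations are cosmetic: you renormalize the box point onto the sphere and keep $\norm{w_t}\le 1$ (slightly cleaner than the paper's crude $\norm{w_t}\le t+1$ bound), and you state the VC bound as $O(d^2)$ up to logarithmic factors, which matches the paper's intended (and in fact more carefully stated) bound.
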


\begin{proof}(Proof of \Cref{th perceptron})
    We first show the correctness of the algorithm. i.e. Each element in the labeled set $L$ has the correct label. No matter what the labeling domain is if some query $q(Z,y)=1$, then every example $x \in Z \cap S$ must has a label $y$. Thus, each labeled example in the output is correctly labeled.

    In the second step, we show that when $w$ and $w^*$ have a good correlation, \Cref{alg perceptron} will terminate after $\Omega(d\log d)$ iterations. In particular, we show the following robust proposition of the modified perceptron algorithm. We have the following claim.

\begin{claim}\label{cl perceptron}
    Let $w^*,w_0 \in \R^d$ be two unit vectors such that $w^*\cdot w_0 \ge \Omega(1/\sqrt{d})$. Assume the following update, $w_{t+1} = w_t-x_t(x_t\cdot w_t)$ and for some $t_0 \in Z$, such that for every $t \le t_0$, $\abs{x_t\cdot w_t} \ge \norm{w_t}/2\sqrt{d}$ and $(x_t\cdot w_t)(x_t\cdot w^*) \le 1/\poly(d)$. Then we have $t_0 \le O(d\log d)$.
\end{claim}

We prove the claim here. By the update, we have 
\begin{align*}
    w_{t+1}\cdot w^* = (w_t-x_t(x_t\cdot w_t)) \cdot w^* = w_t\cdot w^* - (x_t\cdot w_t)(x_t\cdot w^*) \ge w_t \cdot w^* - \frac{1}{d^2} \ge w_0 \cdot w^* - \frac{t_0}{d^2}.
\end{align*}
On the other hand, we have 
\begin{align*}
    \norm{w_{t+1}}^2 \le \norm{w_t}^2 - (w_t \cdot x_t)^2 \le (1-1/2d) \norm{w_t}^2. 
\end{align*}
If $t_0 \ge \Omega(d\log d)$, then before reach $t_0$, at some point we will have $\frac{w_{t} \cdot w^*}{\norm{w_t}}>1,$ which gives a contradiction.

Now, we will show \Cref{alg perceptron} terminates before $t \ge \Omega(d\log d)$, by showing that in each iteration, the example $x_t$ we use to update $w_t$ satisfies the condition in the statement of Claim~\ref{cl perceptron}. There are two cases to consider. 

In the first case, we update $w_t$ via some $x_t=x' \in S \cap \{x \in B \mid yv_0 \cdot x \ge \frac{1}{2\sqrt{d}} \}$ because $(w_t \cdot x_t)(w^* \cdot x) \le 0$. Clearly, $x_t$ is an example that satisfies the update requirement. 

In the second case, we know that the example $x'$ is correctly labeled by our current hypothesis $w_t$. In this case, according to \Cref{alg perceptron}, we partition the region $\{x \in B \mid yv_0 \cdot x \ge \frac{1}{2\sqrt{d}} \}$ into boxes with diameter $1/\poly(d)$.
Since $q(\{x \in B \mid yv_0 \cdot x \ge \frac{1}{2\sqrt{d}} \},y)=0$, we know that there must be a ``point'' $x'' \in \{x \in B \mid yv_0 \cdot x \ge \frac{1}{2\sqrt{d}} \}$ such that $x''$ is labeled incorrectly by $w_t$.
Although such a point $x''$ may or may not be in our dataset $S$, it must be in one of these small boxes. Thus, \Cref{alg perceptron} will finally find such a small box such that
\begin{align*}
q(\{x \in B \mid yv_k \cdot x\in [a^{(k)},b^{(k)}], 0\le k \le d-1 \} \cup \{x'\},y)=0.    
\end {align*}
Since $x'$ has a label $y$, we know that no matter what the labeling domain is, there must be some $x''$ labeled incorrectly by $w_t$ that is in this small box. Let $\Tilde{x}$ be any point in the box. If $\Tilde{x}$ is actually labeled incorrectly by $w_t$, then we make a good enough update because $(\Tilde{x} \cdot w_t)(\Tilde{x} \cdot w^*) \le 0$. Otherwise, we show such an update is not that bad.
Since $\abs{a^{(k)}-b^{(k)}} \le 1/\poly(d)$, we know that $\norm{\Tilde{x}-x''} \le d/\poly(d) = 1/\poly(d)$. On the other hand, each update will increase the length of $w_t$ by at most $1$, which implies that $\norm{w_t} \le t+1 \le O(d\log d)$.
This implies that 
\begin{align*}
    (\Tilde{x}\cdot w_t)(\Tilde{x}\cdot w^*)= (\Tilde{x}\cdot w_t)(x''\cdot w^*+ (\Tilde{x}-x'') \cdot w^*) \le (\Tilde{x}\cdot w_t)((\Tilde{x}-x'') \cdot w^*) \le \frac{O(d\log d)}{\poly(d)} = \frac{1}{\poly(d)}.
\end{align*}
So in general, we have $(\Tilde{x}\cdot w_t)(\Tilde{x}\cdot w^*) \le 1/\poly(d)$. In the meantime, since 
\begin{align*}
    \Tilde{x} \in \{x \in B \mid yv_k \cdot x\in [a^{(k)},b^{(k)}], 0\le k \le d-1 \} \cup \{x'\} \subseteq \{x \in B \mid yv_0 \cdot x \ge \frac{1}{2\sqrt{d}} \},
\end{align*}
we have $\abs{\Tilde{x}\cdot w_t} \ge \norm{w_t}/2\sqrt{d}$. So in each round the example $x_t$($x'$ or $\Tilde{x}$) we use to update $w_t$ always satisfies the update requirement thus after at most $t=O(d\log d)$ update, $w_t$ correctly label every example $x$ in the unit ball $B$ with $\abs{\Tilde{x}\cdot w_t} \ge \norm{w_t}/2\sqrt{d}$. When we reach this stage, no matter what the labeling domain is, as long as some example $x \in \{x \in B \mid yv_0 \cdot x \ge \frac{1}{2\sqrt{d}} \} \cap S$, we always have 
\begin{align*}
    q(\{x \in B \mid yv_0 \cdot x \ge \frac{1}{2\sqrt{d}} \},y)=1,
\end{align*}
which will make \Cref{alg perceptron} terminate. Furthermore, if $S$ is $1/2d$-approximate radially isotropic position, by \Cref{lm isotropic}, we know that 
\begin{align*}
    \card{\{x \in B \mid \abs{v_0 \cdot x} \ge \frac{1}{2\sqrt{d}} \} \cap S} \ge n/4d,
\end{align*}
which implies that \Cref{alg perceptron} correctly label $1/4d$-fraction of the examples. 

In the third step, we upper bound the query complexity and the running time of \Cref{alg perceptron}. In each perceptron update for $w_t$, we make $d$ binary searches over $\poly(d)$ cells to find an example to update $w_t$. Each binary search makes $O(\log d)$  queries and in total we make $O(d\log d)$ queries to make one update. Since we make at most $O(d\log d)$ updates, we know the query complexity of \Cref{alg perceptron} is $O(d^2\log^2 d)$. 

Finally, we upper bound the VC dimension of the query family $Q$. Since each query \Cref{alg perceptron} is a set of $O(d)$ $d$-dimensional linear inequalities, we know that the VC dimension of $Q$ is $O(d^2)$.
\end{proof}

\begin{algorithm}[ht]
		\caption{\textsc{ActivePerceptron}$(w,S)$ (Label a large fraction of example in $S$)}\label{alg perceptron}
		\begin{algorithmic} 
\State $d \gets \dim(S)$ $t \gets 0$  $w_t \gets w$  
\While{$t \le O(d\log d)$}
\State Let $v_0=w_t/\norm{w_t}$ and $B$ be the unit sphere
\State Make query $q(\{x \in B \mid yv_0 \cdot x \ge \frac{1}{2\sqrt{d}} \},y)$, if $\{x \in B \mid yv_0 \cdot x \ge \frac{1}{2\sqrt{d}} \} \cap S \neq 0$.
\If{Every query made above returns $1$}
\State \Return $\{(x,y) \mid x \in S, y\in\{-1,1\}, yv_0 \cdot x \ge \frac{1}{2\sqrt{d}} \}$
\Else
\State Let $y \in \{-1,1\}$ such that $q(\{x \in B \mid yv_0 \cdot x \ge \frac{1}{2\sqrt{d}} \},y)=0$
\State Query $q(\{x'\},y)$, where $x' \in S \cap \{x \in B \mid yv_0 \cdot x \ge \frac{1}{2\sqrt{d}} \}$
\If{$q(\{x'\},y)=0$} 
\State $x_t=x'$, $w_t \gets w_t-(w_t \cdot x_t) x_t$, $t \gets t+1$
\Else \State Let $1/2\sqrt{d}=\theta_0 \le \theta_1 \le \dots \le \theta_\ell=1$ such that $\theta_{i}-\theta_{i-1}=1/\poly(d)$.
\State Find some $[a^{(0)},b^{(0)}]:=[\theta_{i-1},\theta_{i}]$ for some $i \in [\ell]$ such that $q(\{x \in B \mid yv_0 \cdot x\in [a^{(0)},b^{(0)}] \} \cup \{x'\},y)=0$. 

\Comment{ This can be done with $O(\log d)$ queries via binary search by making query of the form $q(\{x \in B \mid yv_0 \cdot x \ge  \theta_j \} \cup \{x'\},y)$ }.
\State Let $v_1,\dots,v_{d-1}$ be a standard basis of the subspace orthogonal to $w_t$.
\State Let $-1=\theta_0 \le \theta_1 \le \dots \le \theta_\ell=1$ such that $\theta_{i}-\theta_{i-1}=1/\poly(d)$.
\For{$j \in [d-1]$}
\State Find some $[a^{(j)},b^{(j)}]:=[\theta_{i-1},\theta_{i}]$ for some $i \in [\ell]$ such that $q(\{x \in B \mid yv_k \cdot x\in [a^{(k)},b^{(k)}], 0\le k \le j \} \cup \{x'\},y)=0$ via binary search.
\EndFor
\State Let $\Tilde{x}$ be any point in $\{x \in B \mid yv_k \cdot x\in [a^{(k)},b^{(k)}], 0\le k \le d-1\}$
\State $x_t=\Tilde{x}$, $w_t \gets w_t-(w_t \cdot x_t) x_t$, $t \gets t+1$
\EndIf
\EndIf
\EndWhile
\Return $\emptyset$ \Comment{If $w$ is not a good initialization, no example will be labeled.}

		\end{algorithmic}
	\end{algorithm}

Finally, we present \Cref{alg halfspace}, the halfspace learning algorithm and the proof of \Cref{th halfspace}.

\begin{algorithm}[ht]
		\caption{\textsc{LearningLTF}$(S,\alpha)$ (Label $1-\alpha$ fraction of $S$ with simple query )}\label{alg halfspace}
		\begin{algorithmic} 
\State $L \gets \emptyset$, $n\gets \card{S}$
\While{$\card{L}<(1-\alpha)n$}
\State Apply \Cref{th forster} to $S$ with $\epsilon=1/2d$ to obtain a matrix $A$ and a $k$-dimensional subspace $V$
\If{$q(V,1)=1$}
\State $L_V \gets \{(f_A(x),1) \mid x \in S \cap V\}$
\Else \State $L_V \gets \emptyset$
\EndIf
\While{$\card{L_V} < \card{S \cap V}/4k$}
\State Draw $w_0$ uniformly from the unit sphere in $A(V)$
\State  $L_V \gets \textsc{ActivePerceptron}(w_0,f_A(S \cap V))$

\Comment{If $w_0$ is not a good initialization, $L_V=\emptyset$.}

\Comment{To run $\textsc{ActivePerceptron}(w_0,f_A(S \cap V))$, we implement each query $(Z,y)$ by query $(\{x \in V \mid f_A(x) \in Z\},y)$.}
\EndWhile
\State Label every $x \in S \cap V$ by $y$ if $(f_A(x),y) \in L_V$
\State $L \gets L \cup \{x \in S \cap V \mid f_A(x) \in L_V\}$, $S \gets S\setminus L$
\EndWhile

		\end{algorithmic}
	\end{algorithm}

\begin{proof}(Proof of \Cref{th halfspace})
    In the first step, we show the correctness of \Cref{alg halfspace}. In each round of \Cref{alg halfspace}, we find a subspace $V$ that contains $k/d$ fraction of the unlabeled data in $S$ and a matrix $A$ that can make $f_A(S \cap V)$ in approximate radially isotropic position. Denote by $B$ the unit sphere in $A(V)$, we notice that for every $x \in V$, we have 
\begin{align*}
    \sgn(w^* \cdot x) =  \sgn(A^{-T}w^*  \cdot A x) = \sgn(A^{-T}w^*  \cdot f_A(x)) = \sgn( \proj_{A(V)}(A^{-T}w^*)  \cdot f_A(x)),  
\end{align*}
which implies that we can view $f_A(V)$ to be labeled by a halfspace $v^*=\proj_{A(V)}(A^{-T}w^*)$ furthermore, $x$ and $f_A(x)$ have the same label. According to \Cref{th perceptron}, we know that each labeled example in the output of \Cref{alg perceptron} has the correct label with respect to $v^*$ and thus the corresponding original examples in $S \cap V$ are also labeled correctly. 

However, up to now, we have not shown the correctness of the algorithm. This is because when we call \Cref{alg perceptron} as a subroutine in \Cref{alg halfspace}, we are not able to make queries in the transformed subspace $A(V)$, since it could be the case that $A(V) \cap Y = \emptyset$. Instead, we have to simulate a query $q(Z,y)$ used by \Cref{alg perceptron} with a query $q(\{x \in V \mid f_A(x) \in Z\},y)$ in the original space.

To show such a simulation is successful, it suffices to show the simulation has the following two properties. First, every subset $Z \subseteq B$ contains some transformed example $f_A(x) \in f_A(S \cap V)$ if and only if $\{x \in V \mid f_A(x) \in Z\}$ contains some example $x \in S$. This property ensures that the transformed labeling domain $f_A(Y \cap V)$ also contains the transformed dataset $f_A(S \cap V)$.
Second, for every $f_A(x) \in Z \subseteq B$ if it is labeled $y$ by $v^*$ then $x$ is labeled $y$ by $w^*$. This implies that the $q(Z,v^*,y)=q(\{x \in V \mid f_A(x) \in Z\},w^*,y)$ is always true. Thus, we can safely run \Cref{alg perceptron} with the simulated query. Since \Cref{alg halfspace} terminates when $(1-\alpha)$ fraction of the examples have been labeled and every labeled example has the correct label, we finish showing the correctness of the algorithm.

In the second step, we bound the query complexity and the running time of the algorithm. We first upper bound the number of calls for \Cref{alg perceptron}. Since the transformed subspace $A(V)$ has dimension $k$, by \cite{Ver18}, we know that with probability at least some constant $c$, a random selected $w_0$ satisfies $\abs{w_0 \cdot v^*} \ge 1/2\sqrt{k}$. When this happens, according to \Cref{th perceptron}, we know that \Cref{alg perceptron} will correctly label $1/4k$ fraction of the transformed dataset $f_A(S \cap V)$ and \Cref{alg halfspace} will enter next round. Thus, in expectation \Cref{alg perceptron} will be called constant times and each call will make $O(k^2\log^2 k) \le O(d^2\log^2 d)$ queries. According to \Cref{th forster}, we know that in each round $\card{S \cap V}/\card{S} \ge k/d$ and $1/4k$ fraction of $S \cap V$ is labeled correctly. This implies that after $O(d\log (1/\alpha))$ rounds only $\alpha n$ examples are not labeled and \Cref{alg halfspace} will terminate. This implies the total query complexity is $O(d^3\log^2d \log (1/\alpha))$. In particular, by setting $\alpha=o(1/n)$, we know that by making $O(d^3\log^2d \log n)$ queries, \Cref{alg halfspace} perfectly label $S$. To upper bound the running time of the algorithm, we notice that in each round, we run \Cref{alg perceptron} and compute an approximate Forser's transform, each of time can be done in polynomial time. Since the total number of rounds is at most $O(d\log n)$, we know \Cref{alg halfspace} is also a polynomial time algorithm.

Finally, we upper bound the VC dimension of the query family $Q$. Notice that each query we make can be summarized as follows $\{x \in V \mid f_A(x) \in Z\}$, where $Z=\{x'\} \cup \{x\in A(V) \mid Cx \le d\}$, where $C$ has at most $O(d)$ constraints. Thus, each query is the interaction of $O(d)$ degree two polynomial inequalities and a subspace (unions with a single point), which has a VC dimension of $\Tilde{O}(d^3)$.
\end{proof}

We notice that when we run \Cref{alg perceptron}, the region $T$ in a query $(T,z)$ is a set of $O(d)$ linear inequalities. Since in our learning model, each query is binary, we have to use such region queries to do a binary search in order to find some point $x$ such that $(x\cdot w_t)(x\cdot w^*) \le 1/\poly(d)$. Thus, when we run \Cref{alg halfspace}, each query uses a region that is the interaction of $O(d)$ degree two polynomial inequalities, and a subspace. This is why the VC dimension of $Q$ in \Cref{th halfspace} is $\Tilde{O}(d^3)$. If we are in a stronger learning model, where a counter-example $x \in T \cap L$ with label $-z$ is also returned when $q(T,z)=0$, then the binary search approach in \Cref{alg perceptron} is not necessary. In this setting, in \Cref{alg perceptron}, each region is defined by a single halfspace, and thus the VC dimension of the query class $Q$ we use for \Cref{alg halfspace} will be improved to $\Tilde{O}(d^2)$.

\section{Learning A Specific Hypothesis Class via A Specific Query Class}\label{sec dimension}
Although \Cref{th general up} shows that given a hypothesis class $\H$ with VC dimension $d$, we can construct a query class $Q$ with VC dimension $O(d)$ so that using $Q$, we can design a learning algorithm with query complexity, we have also seen from \Cref{sec efficient} that if a hypothesis class has a good structure, a query class with VC dimension $O(\log d)$ or even constant is sufficient to achieve a query complexity of $O(\log n)$. So an interesting question is given a hypothesis class $\H$ and a query class $Q$, what is the query complexity of learning $\H$ with $Q$?
Such a question has been extensively studied in many works in the literature of exact learning such as \cite{angluin1988queries,balcazar2001general,balcazar2002new,chase2020bounds}. Many different combinatorial characterizations have been developed. As a by-product of \Cref{th general up}, we can also define a new combinatorial dimension to characterize the query complexity of using a specific query class $Q$ to learn $\H$.

\begin{definition}[Partial Labeling and Extension]
Let $\X$ be a space of examples and $\H$ be a hypothesis class over $\X$. 
Let $S \subseteq \X$ be a set of $n$ examples. A partial labeling $f$ over $S$ is a labeling function $f: S' \subseteq S \to \{-1,1\}$, where $S' \subseteq S$. We say hypothesis $h \in \H$ is an extension of $f$ if for every $x \in S'$, $f(x)=h(x)$. In particular, we denote by $H_f=\{h \in \H \mid h(x)=f(x), \forall x \in S'\}$ the set of extensions of $f$ in $\H$. 
\end{definition}

\begin{definition}[Generalized Teaching Tree]
    Let $\X$ be a space of examples, $\H$ be a hypothesis class over $\X$ and $Q$ be a query family. Let $S\subseteq \X$ be a set of $n$ examples. A generalized teaching tree $T_f$ for $f$ is a binary tree that satisfies the following properties.
    \begin{itemize}
        \item Each node $v$ of $T_f$ is associated with a subset $H_v$ of hypothesis in $\H$. The root of $T_f$ is associated with $\H$.
        \item Each internal node $v$ of $T_f$ is also associated with a query $q_v \in Q$.
        \item Denote by $v_l$ and $v_r$ the left child of an internal node $v$. $H_{v_l}:=\{h \in H_v \mid q_v(h)=0\}$, $H_r:=\{h \in H_v \mid q_v(h)=1\}$.
        \item The subset of hypothesis $H_v$ associated with a leaf $v$ is either a subset of $H_f$ or a subset of $\H \setminus H_f$.
    \end{itemize}
    
\end{definition}

\begin{definition}[Query Dimension]
    Let $\X$ be a space of examples. Let $\H$ be a class of hypotheses over $\X$ and $Q$ be a family of queries. For any $n \in N^+$, we define $s(n)$ to be the query dimension of $(\H,Q)$ as follows. 
    \begin{align*}
        s(n)= \max_{S\subseteq \X, \card{S}=n} \max_{f:\text{partial labeling over }S} \min \{\text{depth}(T_f) \mid T_f: \text{a generalized teaching tree for }f\}.
    \end{align*}
\end{definition}

\begin{theorem}\label{th dimension}
    Let $\X$ be a space of examples. Let $\H$ be a class of hypotheses with VC dimension $d$ and $Q$ be a family of queries. Let $s(n)$ be the query dimension of $(\H,Q)$. 
    \begin{itemize}
        \item For any deterministic active learner $\A$, there is a subset $S \subseteq \X$ of $n$ such that if $\A$ makes less than $s(n)$ queries then there is some $h^*$ and some $x\in S$ such that $\A$ labels $x$ incorrectly.
        \item There is an active learner $\A$ such that, for every subset of $n$ example and every $h^* \in \H$, $\A$ makes $O(s(n)d\log n)$ queries from $Q$ and labels every example in $S$ correctly.
    \end{itemize}
\end{theorem}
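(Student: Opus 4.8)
The plan is to prove the two items separately: the lower bound by a relabeling argument on the learner's query-decision tree, and the upper bound by running a generalized teaching tree inside a version-space halving loop in the spirit of \Cref{th general up}.

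For the lower bound, I would fix a deterministic learner $\A$ and let $S^\star$ (with $|S^\star|=n$) together with a partial labeling $f^\star$ over $S^\star$ be a pair attaining the outer and inner maxima in the definition of $s(n)$, so every generalized teaching tree for $f^\star$ has depth at least $s(n)$. Running $\A$ on $S^\star$ induces a binary decision tree: internal nodes carry the queries $q_v\in Q$ that $\A$ issues (these depend only on the answers seen so far), the two children correspond to the two possible answers, a leaf $v$ carries $\A$'s output labeling $\ell_v$ of $S^\star$, and to each $v$ I attach $H_v:=\{h\in\H: h \text{ is consistent with every answer on the root-to-}v\text{ path}\}$, which obeys exactly the recursion $H_{v_l}=\{h\in H_v: q_v(h)=0\}$, $H_{v_r}=\{h\in H_v: q_v(h)=1\}$ with root $\H$ (after first discarding queries $(T,z)$ with $T\cap S^\star=\emptyset$, which carry no information). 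I then relabel each leaf $v$ as an $H_{f^\star}$-leaf if $\ell_v$ agrees with $f^\star$ on $\dom(f^\star)$ and as a complement-leaf otherwise. If some leaf $v$ has two hypotheses $h,h'\in H_v$ with $h|_{S^\star}\neq h'|_{S^\star}$, then $\ell_v$ disagrees with one of them somewhere on $S^\star$, so $\A$ mislabels an example for that target and we are done; otherwise each $H_v$ is monochromatic on $S^\star$, hence contained in $H_{f^\star}$ or in $\H\setminus H_{f^\star}$ according to its relabeling, so the relabeled tree is a bona fide generalized teaching tree for $f^\star$ and therefore has depth at least $s(n)$, i.e. $\A$ must make at least $s(n)$ queries on $S^\star$ for some $h^\star$.

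For the upper bound, I would have the learner maintain a version space $\H'\subseteq\H$ (initially $\H$, always containing $h^\star$). In each round, as long as $|\H'_S|>1$, apply \Cref{lm set construct} to the restriction $\H'_S$ to get a prefix interval $[a,b]$ under the fixed strict total order on $\X$ and a hypothesis $h\in\H'$ such that, writing $f$ for the partial labeling $x\mapsto h(x)$ on $[a,b]\cap S$, we have $|(\H'\cap H_f)_S|\in[\,|\H'_S|/3,\ 2|\H'_S|/3\,]$; here the restrictions to $S$ of hypotheses in $H_f$ are automatically disjoint from those in $\H\setminus H_f$, since membership in $H_f$ depends only on $h|_{\dom(f)}\subseteq h|_S$, so $(\H'\setminus H_f)_S$ also has size at most $2|\H'_S|/3$. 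Since $f$ is an admissible partial labeling of an $n$-point set, the definition of $s(n)$ supplies a generalized teaching tree $T_f$ for $f$ (relative to $\H$) of depth at most $s(n)$; running $T_f$ against the labeler costs at most $s(n)$ queries from $Q$ and, since $h^\star$ lands in some leaf, reveals whether $h^\star\in H_f$ or $h^\star\in\H\setminus H_f$. Update $\H'$ to $\H'\cap H_f$ or to $\H'\setminus H_f$ accordingly; either way $|\H'_S|$ shrinks by a factor at least $2/3$ while $h^\star$ is retained. By Sauer's lemma $|\H_S|\le O(n^d)$ at the start, so after $O(d\log n)$ rounds $|\H'_S|=1$ and the learner outputs the unique surviving restriction, which is $h^\star|_S$; the total query count is $O(s(n)\,d\log n)$.

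I expect the main obstacle to be a bookkeeping subtlety in the upper bound rather than anything deep: the teaching-tree guarantee is stated for arbitrary partial labelings of $n$-point sets, whereas the balanced $f$ we extract changes each round and is built from the current version space $\H'$, not from $\H$. One has to verify (i) that $f$ is still an admissible partial labeling so that $s(n)$ bounds the depth of some $T_f$, and (ii) that running $T_f$ — which is defined with respect to the full class $\H$ — cleanly bisects the subclass $\H'$, which it does because $H_f$ and $\H\setminus H_f$ partition all of $\H\supseteq\H'$ and this partition restricted to $S$ is again a partition. The lower-bound direction is comparatively routine once the learner's transcript is recast as a decision tree.
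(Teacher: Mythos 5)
Your proposal is correct and follows essentially the same route as the paper's proof: the lower bound treats the deterministic learner's transcript on the extremal set $S^\star$ as a decision tree and argues it must be a generalized teaching tree for $f^\star$ (you merely make explicit the dichotomy that a non-monochromatic leaf already forces a labeling error), and the upper bound combines \Cref{lm set construct} to bisect the version space with a depth-at-most-$s(n)$ teaching tree per round and Sauer's lemma for the $O(d\log n)$ round count. The bookkeeping points you flag (admissibility of the round-$t$ partial labeling and that the $H_f$ versus $\H\setminus H_f$ split cleanly partitions $\H'$ and its restriction to $S$) are handled exactly as you suggest, so there is no gap.
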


\begin{proof}
    Clearly, given any set $S$ of $n$ examples, every active learning algorithm $\A$ constructs a  generalized teaching tree for every partial labeling function $f$, because each leaf of the tree corresponds to the hypothesis in $H$ that labels $S$ in the same way. In particular, let $S^*$ be the set of $n$ examples that achieves the maximum in the definition of $s(n)$, then the number of queries needed for $\A$ to label $S$ is at least the depth of the teaching tree it constructs which is larger than the number of queries to teach any partial labeling $f$.

    On the other hand, by \Cref{lm set construct}, we know that for every hypothesis class $H'$ and every set of $n$ examples $S$, there is some partial labeling $f$ such that $\card{(H'_f)_S} / \card{H'_S} \in [1/3,2/3]$. ($H'_S$ is $H'$ restricted at $S$ and $(H'_f)_S$ is $H'_f$ restricted at $S$.) Thus, with at most $s(n)$ queries, we are able to check if the target hypothesis is in $H'_f$ or not and shrink the hypothesis class by a factor of constant. Thus after making $O(s(n)d \log n)$ queries, we label $S$ correctly.
\end{proof}

\end{document}